\newcounter{assumption}\renewcommand{\theassumption}{\arabic{assumption}}
\newenvironment{assumption}[1][]{\begin{trivlist}\item[] \refstepcounter{assumption} {\bf{Assumption\ \theassumption\ }}{(#1)}.  }{ \ifvmode\smallskip\fi\end{trivlist}}
\def\vx{{\bf x}}
\def\vz{{\bf z}}
\def\vv{{\bf v}}
\newcommand*\E[1]{\mathbb{E}\left[#1\right]}
\newcommand*\lrp[1]{\left(#1\right)}
\newcommand*\lrn[1]{\left\|#1\right\|}
\def\vx{{\bf x}}
\def\vz{{\bf z}}
\def\vv{{\bf v}}
\newcommand{\real}{\ensuremath{\mathbb{R}}}
\newcommand{\z}{z}
\newtheorem{lemma}{Lemma}
\newtheorem{theorem}{Theorem}
\newcommand{\ve}[1]{\mathbf{#1}}
\algnewcommand\algorithmicinput{\textbf{Input:}}
\algnewcommand\INPUT{\item[\algorithmicinput]}
\algnewcommand\algorithmicoutput{\textbf{Output:}}
\algnewcommand\OUTPUT{\item[\algorithmicoutput]}
\algnewcommand\algorithmicoptional{\textbf{Optional:}}
\algnewcommand\OPTIONAL{\item[\algorithmicoptional]}
\def\eqref#1{equation~\ref{#1}}
\def\1{\bm{1}}
\def\ve{{\bm{e}}}
\def\vp{{\bm{p}}}
\def\vr{{\bm{r}}}
\def\vu{{\bm{u}}}
\def\vv{{\bm{v}}}
\def\vx{{\bm{x}}}
\def\vz{{\bm{z}}}
\def\mK{{\bm{K}}}
\def\mW{{\bm{W}}}
\def\mX{{\bm{X}}}
\def\mZ{{\bm{Z}}}
\DeclareMathAlphabet{\mathsfit}{\encodingdefault}{\sfdefault}{m}{sl}
\SetMathAlphabet{\mathsfit}{bold}{\encodingdefault}{\sfdefault}{bx}{n}
\DeclareMathOperator*{\argmin}{arg\,min}
\newtheorem*{theorem*}{Theorem}
\title{Sparse identification of nonlinear dynamics and Koopman operators with Shallow Recurrent Decoder Networks}
\author{Mars Liyao Gao$^*$, Jan P. Williams$^\dag$ and J. Nathan Kutz$^{\ddag,**}$\\[.1in]
{
$^*$ Computer Science \& Engineering, University of Washington, Seattle, WA} \\
{ $^\dag$ Mechanical Engineering, University of Washington, Seattle, WA }\\
{ $^\ddag$ Applied Mathematics, University of Washington, Seattle, WA  }\\
{$^{**}$ Electrical and Computer Engineering, University of Washington, Seattle, WA }\\[.2in]
\texttt{\{marsgao,jmpw1,kutz\}@uw.edu} \\
}
\date{\today}
\begin{document}

\maketitle

\begin{abstract}
Modeling real-world spatio-temporal data is exceptionally difficult due to inherent high dimensionality, measurement noise, partial observations, and often expensive data collection procedures. 
In this paper, we present \textbf{S}parse \textbf{I}dentification of \textbf{N}onlinear \textbf{Dy}namics with \textbf{SH}allow \textbf{RE}current \textbf{D}ecoder networks (SINDy-SHRED), a method to jointly solve the sensing and model identification problems with simple implementation, efficient computation, and robust performance.
SINDy-SHRED uses Gated Recurrent Units to model the temporal sequence of sparse sensor measurements along with a shallow decoder network to reconstruct the full spatio-temporal field from the latent state space.
Our algorithm introduces a SINDy-based regularization for which the latent space progressively converges to a SINDy-class functional, provided the projection remains within the set.
In restricting SINDy to a linear model,  a Koopman-SHRED model  is generated.
SINDy-SHRED (i) learns a symbolic and interpretable generative model of a parsimonious and low-dimensional latent space for the complex spatio-temporal dynamics, (ii) discovers new physics models even for well-known physical systems, (iii) achieves provably robust convergence with an observed globally convex loss landscape, and (iv)  achieves superior accuracy, data efficiency, and training time, all with fewer model parameters.
We conduct systematic experimental studies on PDE data such as turbulent flows, real-world sensor measurements for sea surface temperature, and direct video data. 
The interpretable SINDy and Koopman models of latent state dynamics enable stable and accurate long-term video predictions, outperforming all current baseline deep learning models in accuracy, training time, and data requirements, including Convolutional LSTM, PredRNN, ResNet, and SimVP.

\end{abstract}

\section{Introduction}

Partial differential equations (PDEs) derived from first principles or qualitative behavior remain the most ubiquitous class of models used to describe physical, spatio-temporal phenomena. However, for complex dynamics it is often the case that the simplifying assumptions necessary to construct a PDE model can render it ineffectual for real data where the physics is high-dimensional, multi-scale in nature, only partially known, or where first principles models currently do not exist. In such cases, machine learning (ML) methods offer an attractive alternative for learning both the physics and coordinates (fundamental variables) necessary to quantify the observed spatiotemporal phenomena.  Many recent efforts utilizing ML techniques seek to relax the computational burden for PDE simulation by learning surrogate models to forward-simulate or predict spatiotemporal systems.  However, this new machine learning paradigm frequently exhibits instabilities during the training process, unstable roll-outs when modeling future state predictions, and often yields minimal computational speedups~\citep{mcgreivy2024weak}.

The {\em Shallow REcurrent Decoder} (SHRED) network~\citep{williams2023sensing} is a recently introduced architecture that utilizes data from sparse sensors to reconstruct and predict the entire spatiotemporal domain. Similar to Takens' embedding theorem, SHRED models trade spatial information at a single time point for a trajectory of sensor measurements across time. Previous work has shown SHRED can achieve excellent performance in sensing and reduced order modeling examples ranging from weather and atmospheric forecasting~\citep{williams2023sensing}, plasma physics~\cite{kutz2024shallow}, nuclear reactors~\cite{riva2024robust}, and turbulent flow reconstructions~\cite{tomasetto2025reduced}. 
Theoretically rooted in the classic PDE method of separation of variables \citep{williams2023sensing, tomasetto2025reduced}, the decoding-only strategy of SHRED circumvents the computation of inverse pairs, i.e. an encoder and the corresponding decoder.  It has been well-known for decades that the computation of the inverse of a matrix is highly unstable and ill-posed~\cite{forsythe1977computer,higham2002accuracy,bao2020regularized}. By decoding only, SHRED avoids this problem and learns a single embedding without the corresponding inversion.
\begin{figure}[t]
    \centering
    \includegraphics[width=\textwidth]{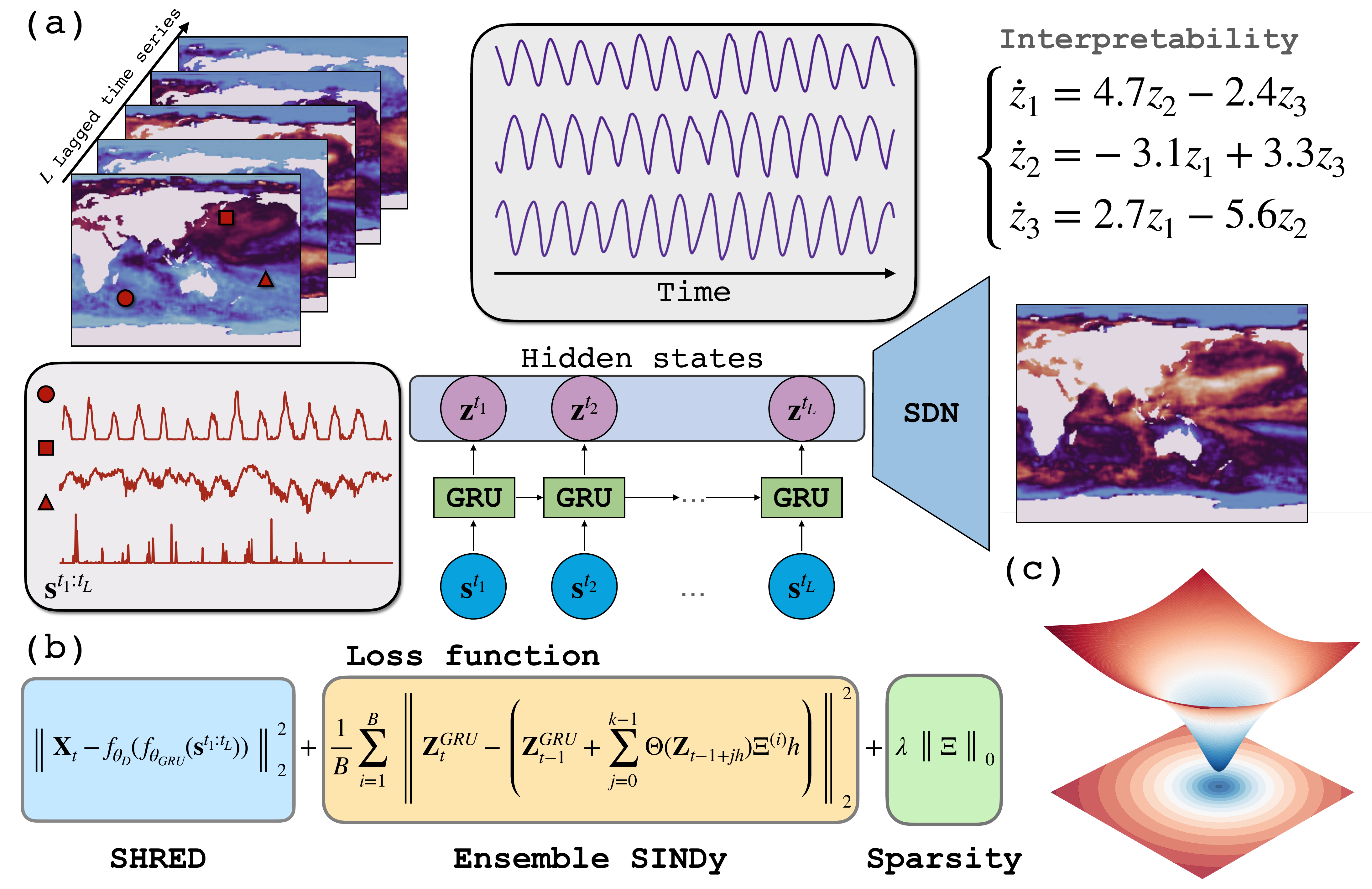}
    \caption{(a) Illustration of the SINDy-SHRED and Koopman-SHRED architecture. SINDy-SHRED transfers the original sparse sensor signal (red) to an interpretable latent representation (purple) that falls into the SINDy-class functional. This framework can be adapted into Koopman-SHRED by restricting the library $\Theta(\cdot)$ to be linear. The shallow decoder performs a reconstruction in the state space. We obtain an interpretable linear model for the sea-surface temperature data considered (details in Sec.~\ref{sec:sst_expr}). (b) The loss function consists of three parts: (i) the SHRED loss controls the reconstruction accuracy of the state space, (ii) the Ensemble SINDy loss helps to model the parsimonious dynamics of the latent space, and (iii) the sparsity constraint identifies the governing equation within this optimization framework. (c) We visualize the globally convex loss landscape of SINDy-SHRED as in~\citep{li2018visualizing}. }
    \label{fig:figure1_pre}
\end{figure}

In this paper, we introduce \textbf{S}parse \textbf{I}dentification of \textbf{N}onlinear \textbf{Dy}namics with \textbf{SH}allow \textbf{RE}current \textbf{D}ecoder networks (SINDy-SHRED).
SINDy-SHRED exploits the latent space of recurrent neural networks for sparse sensor modeling, and enforces interpretability via a SINDy-based functional class. 
{Moreover, our theoretical analysis (see Thm.~\ref{thm:xi_error} and Thm.~\ref{thm:nn_error}) rigorously demonstrates key advantages of latent space modeling with SINDy over conventional neural networks, i.e. under mild conditions it is guaranteed to converge with a bounded error.}
In this way, SINDy-SHRED enables a robust and sample-efficient joint discovery of governing equation and coordinate system. 
With the correct governing equation, SINDy-SHRED can perform an accurate long-term prediction in a learned, low-dimensional latent space, and in turn allows for long-term forecasting in the original spatiotemporal (pixel) space. 
In further restricting SINDy to a linear model, a Koopman approximation~\cite{brunton2021modern} can be constructed to produce a Koopman-SHRED architecture.

SINDy-SHRED and Koopman-SHRED are lightweight models which can perform low-rank recovery with only a few active sensors for the spatio-temporal fields, which is critical for large-scale scientific data modeling and real-time control. 
Specifically, for $D$-dimensional fields, $D+1$ sensors are required for disambiguation of the spatio-temporal field much like localization for cellular networks~\cite{zekavat2019handbook}. 
Moreover, the proposed architecture does not require large amounts of data during training, thereby avoiding a common pitfall in existing ML techniques for accelerating physics simulations and enabling rapid training on a single laptop.
SINDy-SHRED/Koopman-SHRED are also highly reproducible with minimal effort in hyperparameter tuning due to the  globally  convex  optimization landscape, as illustrated in the lower right panel of Fig.~\ref{fig:figure1_pre}. 
The recommended network structure, hyperparameters, and training setting can generalize to many different datasets~\footnote{Code implementation: \url{https://github.com/gaoliyao/sindy-shred}.}\footnote{Run in Google Colab: \url{https://colab.research.google.com/drive/1Xxw3P_x9a8iKZ6RPe2ZfTb8rJoWtPwTK}.}. In short, we demonstrate SINDy-SHRED/Koopman-SHRED to be robust and highly applicable in many modern scientific modeling problems. In what follows, we will refer generally to the proposed architecture as SINDy-SHRED, with Koopman-SHRED as a special case.

We perform a wide range of studies to demonstrate the effectiveness of SINDy-SHRED.
We apply the model on the sea surface temperature data, which is a complex real-world problem.
We also consider data from complex simulations of atmospheric chemistry, 2D Kolmogorov flow, and isotropic turbulence, video data of flow over a cylinder, and video data of a pendulum. The ability of SINDy-SHRED to perform well on video data is an important result for so-called ``GoPro physics,'' whereby physics is learned directly from video recordings. 
With extremely small sample size and noisy environments, SINDy-SHRED achieves governing equation identification with stable long-term predictions, thus overcoming the critical limitations for long-term forecasts of spatio-temporal phenomena that include instabilities and massive computational requirements.
To summarize, the contributions of our paper are the following: 
\begin{itemize}
    \item We propose SINDy-SHRED to learn a symbolic and interpretable generative model of a parsimonious and low-dimensional latent space (fundamental variables) of recurrent neural networks for complex spatio-temporal dynamics.
    \item Even for the well-known physical systems considered, we discover in each case a new and hitherto unknown physics model characterizing the spatio-temporal dynamics directly from measurement or video streams. These benchmark systems include turbulent flows, videos of physical systems, and PDE simulations.
    \item Under weak assumptions, we establish rigorous error bounds on the robust convergence of the SINDy-SHRED algorithm which allows us to guarantee model performance, a rarity in deep learning.  Moreover, we systematically observe that the algorithm has a global convex loss landscape, preventing the need for any hyper-parameter tuning (except for the SINDy parameters of dimension and sparsity).
    \item We compare SINDy-SHRED to state-of-the-art deep learning algorithms in spatio-temporal prediction, showing that the architecture achieves superior accuracy, data efficiency, training time, and long term prediction, all with fewer model parameters.
\end{itemize}
\vspace{-1mm}
\section{Related works}
\vspace{-1mm}

Traditionally, spatio-temporal physical phenomena are modeled by Partial Differential Equations (PDEs). 
To accelerate PDE simulations, recent efforts have leveraged neural networks to model physics. 
By explicitly assuming knowledge of the underlying PDE, physics-informed neural networks~\citep{raissi2019physics} utilize the PDE structure as a constraint for small sample learning. 
However, assuming the exact form of governing PDE for real data can be a strong limitation. 
There have been many recent works on learning and predicting PDEs directly using neural networks~\citep{khoo2021solving,li2020neural,holl2020learning,lu2021learning,lin2021accelerated,long2018pde}. 
Alternatively, variants of SINDy~\citep{rudy2017data,messenger2021weak,fasel2022ensemble} offer a data-driven approach to identify PDEs from the spatial-temporal domain. 
High-dimensionality and data requirements  can be prohibitive for many practical applications. 

In parallel, efforts have been directed toward the discovery of physical laws through dimensionality reduction techniques~\citep{champion2019data,lusch2018deep,mars2024bayesian}, providing yet another perspective on the modeling of scientific data. 
The discovery of physics from a learned latent spaces has previously been explored by~\citep{fukami2021sparse,cheng2024latent,farenga2024latent,conti2023reduced,wu2022learning,li2020visual,ozalp2024stability,otto2019linearly,monsel2024deep,chen2022automated}, yet none of these methods consider a regularization on the latent space with no explicit encoder.
Yu et al. proposed the idea of physics-guided learning~\citep{yu2024learning} which combines physics simulations and neural network approximations. 
Directly modeling physics from video is also the subject of much research in the field of robotics~\citep{finn2016unsupervised,todorov2012mujoco,sanchez2018graph}, computer vision~\citep{xie2024physgaussian,wu2017learning,assran2023self,bardes2024revisiting} and computer graphics~\citep{kandukuri2020learning,liuphysgen,wu2015galileo,mrowca2018flexible}, as these fields also require better physics models for simulation and control. 
From the deep learning side, combining the structure of differential equations into neural networks~\citep{he2016deep,chen2018neural} has been remarkably successful in a wide range of tasks. 
As an example, when spatial-temporal modeling is framed as a video prediction problem, He et al. found~\citep{he2022masked} that random masking can be an efficient spatio-temporal learner, and deep neural networks can provide very good predictions for the next 10 to 20 frames~\citep{shi2015convolutional,wang2017predrnn,gao2022simvp,guen2020disentangling}. 
Generative models have also been found to be useful for scientific data modeling~\citep{mirza2014conditional,song2021maximum,cachay2024probabilistic,solera2024beta,eiximeno2025pylom}.

\vspace{-1mm}
\section{Methods}
\vspace{-1mm}

The shallow recurrent decoder network (\textbf{SHRED}) is a computational technique that utilizes recurrent neural networks to estimate high-dimensional states from limited measurements~\citep{williams2023sensing}. 
The method functions by trading high-fidelity spatial information for trajectories of sparse sensor measurements at given spatial locations. Mathematically, consider a high-dimensional data series $\{\mX_i\}_{i=1}^T\in\real^{(W\times H)\otimes T}$ that represents the evolution of a spatio-temporal dynamical system, where $W$, $H$, and $T$ denote the width, height, and total time steps of the system, respectively.
In SHRED, each sensor collects data from a fixed spatial position in a discretized time domain. Denoting the subset of sensors as $\mathcal{S}$, the input data of SHRED is $\{\mX^{\mathcal{S}}\}_{i=1}^T\in\real^{\text{card}(S)\otimes T}$.
Provided the underlying PDE allows spatial information to propagate, these spatial effects will appear in the time history of the sensor measurements, enabling the sensing of the entire field using only a few sensors.
In vanilla SHRED, a Long Short-Term Memory (LSTM) module is used to map the sparse sensor trajectory data into a latent space, followed by a shallow decoder to reconstruct the entire spatio-temporal domain at the current time step. 

SHRED enables efficient sparse sensing that is widely applicable to many scientific problems~\citep{ebers2024leveraging,kutz2024shallow,riva2024robust,tomasetto2025reduced}.
The advantage of SHRED comes from three aspects. First, SHRED only requires minimal sensor measurements. 
Under practical constraints, collecting full-state measurements for data prediction and control can be prohibitively expensive. 
Second, SHRED does not require grid-like data collection, which allows for generalization to more complex data structures. 
For example, it is easy to apply SHRED to graph data with an unknown underlying structure, such as human motion data on joints, robotic sensor data, and financial market data.
Furthermore, SHRED is grounded in the classical technique of separation of variables~\citep{tomasetto2025reduced}.

\subsection{Empowering SHRED with representation learning and physics discovery}

To achieve a parsimonious representation of physics, it is important to find a representation that effectively captures the underlying dynamics and structure of the system.
In SINDy-SHRED (shown in Fig.~\ref{fig:figure1_pre}), we extend the advantages of SHRED, and perform a joint discovery of coordinate systems and governing equations. 
This is accomplished by enforcing that the latent state of the recurrent network follows an ODE in the SINDy class of functions.

\paragraph{Finding better representations}  SHRED has a natural advantage in modeling latent governing physics due to its small model size. 
SHRED is based on a shallow decoder with a relatively small recurrent network structure. The relative simplicity of the model allows the latent representation to maintain many advantageous properties such as smoothness and Lipschitzness. 
Experimentally, we observe that the hidden state space of a SHRED model is generally very smooth. 
Second, SHRED does not have an explicit encoder, which avoids the potential problem of spectral bias~\citep{rahaman2019spectral}. 
Many reduced-order modeling methods that rely on an encoder architecture struggle to learn physics and instead focus only on modeling the low-frequency information (background)~\citep{refinetti2022dynamics,champion2019data,mars2024bayesian}.
Building upon SHRED, we further incorporate SINDy to regularize the learned recurrence with a well-characterized and simple form of governing equation. 
This approach is inspired by the principle in physics that, under an ideal coordinate system, physical phenomena can be described by a parsimonious dynamics model~\citep{champion2019data,mars2024bayesian}. 
When the latent representation and the governing law are well-aligned, this configuration is likely to capture the true underlying physics. 
This joint discovery results in a latent space that is both interpretable and physically meaningful, enabling robust and stable future prediction based on the learned dynamics.

\subsection{Latent space regularization via SINDy and Koopman}

As a compressive sensing procedure, there exist infinitely many equally valid solutions for the latent representation. 
Therefore, it is not necessary for the latent representation induced by SHRED to follow a well-structured differential equation.
For instance, even if the exhibited dynamics are fundamentally linear, the latent representation may exhibit completely unexplainable dynamics, making the model challenging to interpret and extrapolate.
Therefore, in SINDy-SHRED, our goal is to further constrain the latent representations to lie within the SINDy-class functional.
This regularization promotes models that are fundamentally explainable by a SINDy-based ODE, allowing us to identify a parsimonious governing equation. 
The SINDy class of functions typically consists of a library of commonly used functions, which includes polynomials and Fourier series. 
Although they may seem simple, these functions possess surprisingly strong expressive power, enabling the model to capture very complex dynamical systems.

\subsubsection{SINDy as a Recurrent Neural network}
We first reformulate SINDy using a neural network form, simplifying its incorporation into a SHRED model. 
ResNet~\citep{he2016deep} and Neural ODE~\citep{chen2018neural} utilize skip connections to model residual and temporal derivatives. 
Similarly, this could also be done via a Recurrent Neural Network (RNN) which has a general form of 
\begin{align}
    z_{t+1}=z_{t} + f(x_t),
\end{align}
where $f(\cdot)$ is some function of the input. From the Euler method, the ODE forward simulation via SINDy effectively falls into the category of Recurrent Neural Networks (RNNs) which has the form
\begin{align}
    z_{t+1}=z_{t} + f_{\Theta}(x_t, \Xi, \Delta t),
\end{align}
where $f_\Theta(x_t, \Xi, \Delta t)=\Theta(x_t)\Xi\Delta t$ is a nonlinear function.  Notice that this $f_\Theta(\cdot)$ has exactly the same formulation as in SINDy~\citep{brunton2016discovering}. 
We demonstrate a visual form of this design in the Appendix. 
The application of function libraries with sparsity constraints is a manner of automatic neural architecture search (NAS)~\citep{zoph2016neural}. 
Compared to all prior works~\citep{champion2019data,fukami2021sparse,conti2023reduced}, this implementation of the SINDy unit fits better in the framework of neural network training and gradient descent. 
We utilize trajectory data $\{\vz_i\}_{i=1}^T$ and forward simulate the SINDy-based ODE using a trainable parameter $\Xi$. To achieve better stability and accuracy for forward integration, we use Euler integration with $k$ mini-steps (with time step $\frac{\Delta t}{k}$) to obtain $\vz_{t+1}$. 
In summary, defining $h = \frac{\Delta t}{k}$, we optimize $\Xi$ with the following: 
\begin{align}
\Xi = \arg\min \left\| \mathbf{z}_{t+1} - \left( \mathbf{z}_{t} + \sum_{i=0}^{k-1} \Theta(\mathbf{z}_{t + i h}) \Xi h \right) \right\|_2^2, \quad \mathbf{z}_{t + i h} = \mathbf{z}_{t} + \Theta(\mathbf{z}_{t + (i-1) h}) \Xi h, \quad\min \lrn{\Xi}_0.
\end{align}
To achieve $\ell_0$ optimization, we perform pruning with $\ell_2$ regularization which is known to approximate $\ell_0$ regularization under regularity conditions~\citep{zheng2014high,gao2023convergence,blalock2020state}. Applying SINDy unit has the following benefits:
(a) The SINDy-function library contains frequently used functions in physics modeling (e.g. polynomials and Fourier series).  
(b) With sparse system identification, the neural network is more likely to identify governing physics, which is fundamentally important for extrapolation and long-term stability. 

\subsubsection{Latent space regularization via ensemble SINDy}

We first note that we deviate from the original SHRED architecture by using a GRU as opposed to an LSTM. This choice was made because we generally found that GRU provides a smoother latent space and propagates only a single hidden state.
Now, recall that our goal is to find a SHRED model with a latent state that is within the SINDy-class functional.  However, the initial latent representation from SHRED does not follow the SINDy-based ODE structure at all. 
On the one hand, if we naively apply SINDy to the initial latent representation, the discovery is unlikely to fit the latent representation trajectory.
On the other hand, if we directly replace the GRU unit to SINDy and force the latent space to follow the discovered SINDy model, it might lose information that is important to reconstruction the entire spatial domain. 
Therefore, it is important to let the two latent spaces align progressively. 

In Algorithm~\ref{alg:sindy_regularization}, we describe our training procedure that allows the two trajectories to progressively align with each other. To further ensure a gradual adaptation and avoid over-regularization, we introduce ensemble SINDy units with varying levels of sparsity constraints, which range in effect from promoting a full model (all terms in the library are active) to a null model (where no dynamics are represented). 
From the initial latent representation $\vz^{\text{iter 0}}_{1:t}$ from SHRED, the SINDy model first provides an initial estimate of ensemble SINDy coefficients $\{\hat{\Xi}_0^i\}_{i=b}^B$. 
Then, the parameters of SHRED will be updated towards the dynamics simulated by $\{\hat{\Xi}_0\}_{i=b}^B$, which generates a new latent representation trajectory $\vz^{\text{iter 1}}_{1:t}$. 
We iterate this procedure and jointly optimize the following loss function to let the SHRED latent representation trajectory approximate the SINDy generated trajectory: 
\begin{align}
\mathcal{L}= \left\| \mX_{t} - f_{\theta_{D}}(f_{\theta_{\text{GRU}}}(\mX_{t-L:t}^{\mathcal{S}})) \right\|_2^2 +  \sum_{i=1}^{B} \left\| \mZ^{\text{GRU}}_{t} - \left( \mZ_{t-1}^{\text{GRU}} + \sum_{j=0}^{k-1} \Theta(\mZ_{t-1+jh}) \Xi^{(i)} h \right) \right\|_2^2 + \lambda \left\| \Xi \right\|_0,
\end{align}
where $\mZ_{t -1+ i h} = \mZ_{t} + \Theta(\mZ_{t -1+ (i-1) h}) \Xi h$, $\mZ_{t -1}=\mZ_{t-1}^{\text{GRU}}$, and $h=\frac{\Delta t}{k}$.

\begin{algorithm*}[t]
    \caption{{Latent state space regularization via SINDy}}
    \label{alg:sindy_regularization}
    \begin{algorithmic}[1]
    \INPUT{input $\mX_{t-L:t+1}^\mathcal{S}$, $\mX_{t}$, SINDy library $\Theta(\cdot)$, timestep $\Delta t$.}
    \Function{LatentSpaceSINDy}{$\mX_{t-L:t+1}^\mathcal{S}, \mX_{t+1}, \Delta t$}
        \For{i in $0, 1, \cdots, n-1$:}
            \State $\mZ_t,\;\mZ_{t+1} = f_{\theta_{\text{GRU}}}(\mX_{t-L:t}^\mathcal{S}),\;f_{\theta_{\text{GRU}}}(\mX_{t-L+1:t+1}^\mathcal{S})$;
                                    \For{j in $(0, 1, \frac{\Delta t}{k})$:} \Comment{SINDy forward simulation}
                \State $\mZ^{\text{SINDy}}_{t+\frac{j+1}{k}\Delta t}=\mZ^{\text{SINDy}}_{t+\frac{j}{k}\Delta t}+\Theta(\mZ^{\text{SINDy}}_{t+\frac{j}{k}\Delta t})\Xi \Delta t$
            \EndFor
            \State $\hat{\mX}_{t+1}=f_{\theta_D}(\mZ_{t+1})$\Comment{SHRED reconstruction}
            \State $\theta_{\text{GRU}}, \Xi, \theta_D = \arg\min_{\theta_{\text{GRU}}, \Xi, \theta_D} \left\| \mathbf{X}_{t+1} - \hat\mX_{t+1}\right\|_2^2 + \left\| \mathbf{Z}^{\text{GRU}}_{t+1} - \mZ^{\text{SINDy}}_{t+1} \right\|_2^2 + \lambda \left\| \Xi \right\|_0$
            \If{$i\mod 100 = 0$}
            \State $\Xi[|\Xi|<\text{threshold}]=0$
            \EndIf
        \EndFor \Comment{Train until converges}
    \EndFunction
    \end{algorithmic}
\end{algorithm*}

\subsubsection{Latent space linearization via the Koopman operator} Koopman operator theory~\citep{koopman1931hamiltonian,koopman1932dynamical} provides an alternative approach to solving these problems by linearizing the underlying dynamics. The linearized embedding is theoretically grounded to be able to represent nonlinear dynamics in a linear framework, which is desirable for many applications in science and engineering, including control, robotics, weather modeling, and so on. 
From the transformed measurements $\vz=g(\vx)$ from the true system $\vx$, the Koopman operator $\mathcal{K}$ is an infinite-dimensional linear operator given by
\begin{align}
    \mathcal{K}g:=g\circ \mathbf{F},
\end{align}
where $\mathbf{F}(\cdot)$ describes the law of the dynamical system in its original space that $\vx_{t+1}=\mathbf{F}(\vx_t)$. 
The Koopman operator enables a coordinate transformation from $\vx$ to $\vz$ that linearizes the dynamics
\begin{align}
    \mathcal{K}g(\vx_{t+1})=g(\vx_{t}).
\end{align}

However, it is generally impossible to obtain the exact form of this infinite-dimensional operator, so a typical strategy is to find a finite-dimensional approximation of the Koopman operator by means of data-driven approaches~\citep{brunton2021modern}. 
In Koopman-SHRED, we utilize the GRU unit to approximate the eigenfunctions. In the latent space, we enforce and learn the linear dynamics, represented by a matrix 
$\mK$, which corresponds to the latent space evolution derived from the eigenfunctions.
We keep the interpretability of the model via a parsimonious latent space. 
In implementation, a simple strategy is to adapt the SINDy-unit with only the linear terms. 
This models a continuous version of Koopman generator that $\frac{d}{dt}\z(\vx(t))=\mathcal{G}_t z(\vx(t))$ where the corresponding Koopman operator $\mathcal{K}_t=e^{t\mathcal{G}}$. 
Then, we follow a similar practice in SINDy-SHRED, updating the Koopman-regularized loss function: 
\begin{align}
\mathcal{L}= \left\| \mX_{t} - f_{\theta_{D}}(f_{\theta_{\text{GRU}}}(\mX_{t-L:t}^{\mathcal{S}})) \right\|_2^2 +  \left\| \mZ^{\text{GRU}}_{t+m} - \mathbf{K}^m\mZ^{\text{GRU}}_{t-1}\right\|_2^2.
\end{align}

To enable continuous spectrum for better approximation for the Koopman operator, one could adapt an additional neural network to learn the eigenvalues $\lambda_i$'s and form the linear dynamics $K$ from the learned eigenvalues~\citep{lusch2018deep}. 
We also provide detailed implementation and experimental results in the Appendix. 
This flexibility allows the system to learn dynamical systems in a more general setting, accommodating various initial conditions and experimental setups. 
In our experimental study, we observe that this strategy may perform comparably to learning a fixed $\mK$ for linear dynamics under fixed experimental environments.

\subsection{Theoretical analysis on dynamical system learning}

Suppose that the dynamical system has the form $\dot{\vx}=f(\vx)$, and we have measurements of 
\begin{align}
\vx=\{\vx_1, \vx_2, ...,\vx_t, \vx_{t+1},..., \vx_T\},
\end{align}
with time gap $\Delta t$. 
The empirical loss function $L_n(\cdot)$ of a function $f$ given the empirical distribution $P_n$ which consists of data samples $\mathcal{D}=\{(\vx_i, \dot{\vx_i})\}_{i=1}^n$ is 
\begin{align}
    L_n\lrp{f} = \frac{1}{n}\sum_{i=1}^n \ell\lrp{f(\vx_i), \vx_i},
\end{align}
while the true loss is $L\lrp{f} = \E{\ell\lrp{f(x), y}}$.

From empirical process theory, there is a generalization gap between $L_n(f)$ and $L(f)$, which is correlated with the expressive power of the functional class~\citep{wainwright2019high,van1996weak}. When the expressive power of the estimator functional class is small, the generalization error is expected to be mild, resulting from a smooth adaptation from local perturbations.
On the other hand, when the expressive power of the estimator is excessive, we expect that local perturbations can cause significant shifts in performance, leading to a larger generalization error.

Therefore, even though neural networks are universal approximators~\citep{hornik1989multilayer}, they can perform poorly in extrapolation. 
SINDy effectively avoids this issue because the functional class is moderately sized, and the continuous formulation allows SINDy to reach a much lower error. 
We present the following theorems to compare error bounds for dynamical system learning between the SINDy-class and  the neural network class. First, for a SINDy-class, we have the following theorem. 

\begin{theorem}
    \label{thm:xi_error}
    Suppose we have a SINDy-class functional with a library of functions $\Theta(x)$, and we estimate the coefficient vector $\xi$ by $\hat{\xi}$ using least squares. Let $\xi^*$ be the true coefficient vector. 
    Under regularity conditions in the Appendix, we have the expected error in predicting the dynamical system after time $T$ is bounded by
    \begin{align}
    E||\hat{x}(T) - x(T)|| \leq  \mathcal{O}\left(e^{LT} s \sqrt{\frac{p}{n}}\right)
    \end{align}
    where $L$ is the Lipschitz constant of the system, $s$ is the level of noise, $p$ is the number of functions in the library, and $n$ is the number of samples.

    Furthermore, with probability $1-\delta$, the error in predicting the dynamical system after time $T$ is bounded by
    \begin{align}
    ||\hat{x}(T) - x(T)|| \leq  \mathcal{O}\left(e^{LT} s \sqrt{\frac{p}{n} \log\left(\frac{1}{\delta}\right)}\right).
    \end{align}
\end{theorem}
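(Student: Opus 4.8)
The plan is to split the argument into two essentially independent pieces and then compose them: (i) a finite-sample bound on the parameter error $\lrn{\hat\xi-\xi^*}$ produced by ordinary least squares, and (ii) a deterministic stability estimate that converts a perturbation of the SINDy coefficients into a bound on the forward-integrated trajectory at time $T$. For (i), write the regression model as $\dot x_i = \Theta(x_i)\xi^* + \epsilon_i$, where the innovations $\epsilon_i$ have scale $s$ (the ``level of noise'' of the statement, which in practice subsumes both measurement noise and finite-difference error in $\dot x_i$). Then $\hat\xi - \xi^* = (\Theta^\top\Theta)^{-1}\Theta^\top\epsilon$, where here $\Theta$ denotes the $n\times p$ design matrix whose $i$-th row is $\Theta(x_i)$. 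The regularity conditions deferred to the Appendix should ensure that the empirical Gram matrix $\tfrac1n\Theta^\top\Theta$ has smallest eigenvalue bounded below by a positive constant, so that $\E\lrn{\hat\xi-\xi^*}^2 = s^2\,\tr\!\lrp{(\Theta^\top\Theta)^{-1}} \lesssim s^2 p/n$ and, by Jensen, $\E\lrn{\hat\xi-\xi^*}\lesssim s\sqrt{p/n}$. For the high-probability version I would instead control the quadratic form $\lrn{(\Theta^\top\Theta)^{-1}\Theta^\top\epsilon}^2$ with a sub-Gaussian concentration inequality (Hanson--Wright, or a $\chi^2$-type tail when $\epsilon$ is Gaussian), which inflates the bound by the usual $\log(1/\delta)$ factor.

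For (ii), let $x(t)$ solve $\dot x = \Theta(x)\xi^*$ and $\hat x(t)$ solve $\dot{\hat x} = \Theta(\hat x)\hat\xi$ from the same initial condition, and put $e(t) = \hat x(t) - x(t)$. Adding and subtracting $\Theta(\hat x(t))\xi^*$ gives
\[
\dot e(t) = \bigl(\Theta(\hat x(t)) - \Theta(x(t))\bigr)\xi^* + \Theta(\hat x(t))\,(\hat\xi - \xi^*),
\]
whose first term is bounded in norm by $L\lrn{e(t)}$ ($L$ being the Lipschitz constant of the true vector field $x\mapsto\Theta(x)\xi^*$) and whose second term is bounded by $C\lrn{\hat\xi-\xi^*}$ with $C$ an upper bound for $\lrn{\Theta(\cdot)}$ along the trajectory. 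Grönwall's inequality then yields $\lrn{e(T)}\le \tfrac{C}{L}(e^{LT}-1)\lrn{\hat\xi-\xi^*} = \mathcal O\!\lrp{e^{LT}}\lrn{\hat\xi-\xi^*}$. Composing with (i) gives $\E\lrn{\hat x(T)-x(T)}\le\mathcal O\!\lrp{e^{LT} s\sqrt{p/n}}$ in expectation, and $\lrn{\hat x(T)-x(T)}\le\mathcal O\!\lrp{e^{LT} s\sqrt{(p/n)\log(1/\delta)}}$ on the corresponding high-probability event.

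The main obstacle is making the stability step genuinely rigorous rather than circular: the constant $C$ bounding $\lrn{\Theta(\hat x(t))}$ depends on the region the \emph{approximate} trajectory visits, which is exactly the object being controlled, and polynomial or Fourier libraries are only locally Lipschitz. I would close this either by assuming, as part of the regularity conditions, that $\Theta$ is bounded and globally Lipschitz on a fixed compact forward-invariant set containing both trajectories, or by a bootstrap/continuity argument: on the maximal subinterval of $[0,T]$ along which $\hat x$ stays within a fixed tube around $x$ the Grönwall estimate is valid, and once $\lrn{\hat\xi-\xi^*}$ is small enough --- which is guaranteed on the high-probability event of (i) for $n$ large --- that subinterval is all of $[0,T]$. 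I would state the required compactness/invariance hypothesis explicitly among the assumptions rather than hide it inside the $\mathcal O(\cdot)$, and note that the same scheme with $\Theta$ restricted to linear terms recovers the Koopman-SHRED case with $L$ replaced by a spectral bound on the learned generator.
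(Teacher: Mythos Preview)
Your proposal is correct and follows essentially the same two-step route as the paper: an OLS variance calculation $\E\lrn{\hat\xi-\xi^*}^2 = s^2\tr\bigl((\Theta^\top\Theta)^{-1}\bigr)\lesssim s^2 p/n$ under a minimum-eigenvalue assumption on the Gram matrix, followed by a Gr\"onwall argument to propagate the coefficient error to the trajectory, with a Gaussian tail bound for the high-probability version. If anything your stability step is more careful than the paper's --- you correctly add and subtract $\Theta(\hat x)\xi^*$ to separate a Lipschitz term $L\lrn{e(t)}$ from a forcing term $C\lrn{\hat\xi-\xi^*}$, and you flag the need for a boundedness/invariance hypothesis on $\Theta$ along $\hat x$, whereas the paper evaluates $\Theta$ at $x$ in both dynamics and passes to the $e^{LT}$ factor somewhat informally.
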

Similarly, for the neural network class of functions, we have: 
\begin{theorem}
    \label{thm:nn_error}
    Suppose we have a neural network functional with $k$ layers of ReLU activation functions and parameters $\theta=\lrp{\mW_1, \dots, \mW_k}$, which computes functions
    \begin{align}
        f(\vx;\theta)=\sigma_k\lrp{\mW_k\sigma_{k-1}\lrp{\mW_{k-1}\cdots\sigma_1\lrp{\mW_1 \vx}}}.
    \end{align}
    Assume that the weights are bounded such that $\lrn{\mW_i}_F \leq B$ for all $i$.
        Then, with probability at least $1-\delta$, the error in predicting the dynamical system after $H = T/\Delta t$ steps is bounded by
    \begin{align}
        ||\hat{x}(T) - x(T)|| \leq  \mathcal{O}\left((\log n)^4 B^{k(H+1)} \sqrt{\frac{k}{n}} + \frac{\log(1/\delta)}{n}\right),
    \end{align}
    where $n$ is the number of samples.
\end{theorem}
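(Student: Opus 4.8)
The plan is to follow the template of Theorem~\ref{thm:xi_error}: bound the \emph{one-step} estimation error of the learned map by a uniform-convergence (Rademacher) bound for the depth-$k$ ReLU class, and then propagate that error through the $H = T/\Delta t$ compositions of the learned dynamics by a discrete Gronwall argument whose multiplicative factor is the Lipschitz constant of the network.

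First I would set up the one-step decomposition. Write $F$ for the true one-step flow map ($x_{t+1} = F(x_t)$) and $\hat F(\cdot) = f(\cdot;\hat\theta)$ for the empirical-risk minimizer over $\mathcal F = \{\,f(\cdot;\theta): \|W_i\|_F \le B\ \forall i\,\}$. A telescoping identity gives $\hat x(T) - x(T) = \hat F^{\circ H}(x_0) - F^{\circ H}(x_0)$, hence $\|\hat x(T) - x(T)\| \le \sum_{j=0}^{H-1} \mathrm{Lip}(\hat F)^{\,j}\,\|(\hat F - F)(y_j)\|$ with $y_j = F^{\circ(H-1-j)}(x_0)$. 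Since each ReLU layer is $1$-Lipschitz and $\|W_i\|_{\mathrm{op}} \le \|W_i\|_F \le B$, we have $\mathrm{Lip}(\hat F) \le B^{k}$, so the prefactor sums to at most $\sum_{j=0}^{H-1} B^{kj} \le H\,B^{k(H-1)}$, which is $\mathcal O(B^{kH})$ when $B^k \ge 1$. The problem thus reduces to controlling the per-step error $\max_j \|(\hat F - F)(y_j)\|$.

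Second, the statistical step. Because $\hat F$ minimizes the empirical squared loss and $F$ is feasible (realizable / low-noise regime), $L_n(\hat F) \le L_n(F)$, so the excess risk is controlled by $\sup_{f\in\mathcal F}(L(f) - L_n(f))$. Symmetrization together with a vector-contraction inequality reduces this to the Rademacher complexity of $\mathcal F$ on the bounded region where the trajectory lives, and I would invoke the standard covering-number / Dudley-integral bound for depth-$k$ ReLU networks with Frobenius-bounded weights (of Bartlett--Foster--Telgarsky / Golowich--Rakhlin--Shamir type), which gives $\mathfrak R_n(\mathcal F) \le \mathcal O\!\big((\log n)^4 B^{k}\sqrt{k/n}\big)$; Talagrand/Bousquet concentration (Bernstein in the realizable case, where the variance proxy is itself $O(\mathfrak R_n)$ and is absorbed) contributes the additive $\log(1/\delta)/n$. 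Taking square roots yields $\|(\hat F - F)(\cdot)\|_{L^2(P)} \le \mathcal O\!\big((\log n)^4 B^{k}\sqrt{k/n} + \log(1/\delta)/n\big)$, and an assumption that the visited states $y_j$ have density bounded with respect to the sampling law $P$ (or a union bound over the finite trajectory) upgrades this to a pointwise bound along the trajectory.

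Combining the two estimates gives $\|\hat x(T) - x(T)\| \le \mathcal O(B^{kH})\cdot\mathcal O\!\big((\log n)^4 B^{k}\sqrt{k/n} + \log(1/\delta)/n\big) = \mathcal O\!\big((\log n)^4 B^{k(H+1)}\sqrt{k/n} + \log(1/\delta)/n\big)$, which is the claimed bound. The main obstacle I anticipate is the statistical step carried out with the right constants: obtaining precisely the $(\log n)^4$, $B^{k}$, and $\sqrt{k}$ dependence requires the sharp generalization bound for deep ReLU nets rather than a naive Lipschitz/covering estimate, one must check that the loss-composed class is still amenable to the vector-contraction principle, and the realizable fast-rate argument must be handled carefully in the presence of measurement noise of size $s$ — in which case the bound picks up an extra irreducible $\mathcal O(s)$-type term that the statement suppresses.
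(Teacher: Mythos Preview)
Your decomposition is genuinely different from the paper's. You bound the \emph{one-step} statistical error and then propagate it through $H$ compositions by a discrete Gronwall/Lipschitz argument, picking up the $B^{kH}$ factor deterministically. The paper reverses the order: it first bounds the Rademacher complexity of the $H$-fold \emph{composed} class $\mathcal{F}_{\text{ReLU}}\circ\cdots\circ\mathcal{F}_{\text{ReLU}}$ via the contraction principle (each outer composition contributes a factor $B^k$ from the network's Lipschitz constant, giving $\mathcal{R}_n \lesssim B^{kH}\cdot B^k\sqrt{k/n}$), and only then applies the generalization inequality once to this composed class. Note also that the $(\log n)^4$ and the additive $\log(1/\delta)/n$ come from that generalization inequality (Theorem~2.2 of Bartlett et al.), not from the Rademacher bound itself; the latter is the Golowich--Rakhlin--Shamir estimate $\mathcal{R}_n(\mathcal{F}_{\text{ReLU}})\lesssim B^k\sqrt{k/n}$, so your placement of the $(\log n)^4$ inside the complexity bound is slightly off. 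Two consequences of the difference in route: (i) the paper never needs your $L^2$-to-pointwise upgrade along the trajectory, because the loss it controls \emph{is} the $H$-step prediction error; (ii) in the paper's route the tail term $\log(1/\delta)/n$ appears exactly once and is not multiplied by $B^{kH}$, whereas your final line silently drops that multiplier --- your telescoping argument actually produces $\mathcal{O}\big(B^{kH}\log(1/\delta)/n\big)$, so matching the additive $\log(1/\delta)/n$ as stated really requires the paper's ``propagate inside the complexity, then apply concentration'' ordering rather than Gronwall-after-statistics.
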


From the error bounds above, we notice that compared to traditional learning problems (e.g. classification, regression), dynamical system learning is more challenging as the error grows exponentially with increasing time horizon.
Short-term local error accumulates as the differential equation steps forward, leading to a large error for longer-time extrapolation. 
This exponential error behavior exists even for linear problems.
From the error bound, it becomes evident why neural networks may encounter significant challenges in learning such problems. 
The challenge is mainly due to the presence of the term $B^{kH}$.
The bound on weights $B$ is expected to increase significantly for complex dynamical systems that need to model large local temporal derivatives. 
And, the number of layers $k$ is expected to increase for more complex and chaotic dynamics. 
To generate high-quality video sequences, finer temporal steps are required for a higher frame-per-second rate, which further exacerbates error growth. 
Therefore, the $B^{kH}$ term could be extremely large in practice.
 
To control these errors contributed by $B^{kH}$, one needs to collect exponentially more samples to ensure the generalization error is small. 
This explains the observation that many video generation methods relying purely on neural networks struggle to follow physical laws accurately, as the required sample size is prohibitively large.
Transformer networks tend to memorize example trajectories in their key-value pairs, but its mechanism is not inherently designed to capture the exact physical law, which can often produce poor extrapolations from unseen initial conditions~\citep{zeng2023transformers}.

\vspace{-1mm}
\section{Computational Experiments}
\vspace{-1mm}
In the following, we perform case studies across a range of scientific and engineering problems.  We begin with building models directly from video data, which includes flow around a cylinder and a pendulum.  We then consider real-world sea-surface temperature data and conclude with turbulent flow examples.

\subsection{GoPro physics video data: flow over a cylinder}

In this subsection, we demonstrate the performance of SINDy-SHRED on an example of so-called ``GoPro physics modeling.''
The considered data is collected from a dyed water channel to visualize a flow over a cylinder~\citep{albright2023flow}. The Reynolds number is 171 in the experiment. 
The dataset contains 11 seconds of video taken at 30 frames per second (FPS). 
We transfer the original RGB channel to gray scale and remove the background by subtracting the mean of all frames. 
After the prior processing step, the video data has only one channel (gray) within the range $(0, 1)$ with a height of 400 pixels and a width of 1,000 pixels. 
We randomly select and fix 200 pixels as sensor measurements from the entire $400,000$ space, which is equivalent to only $0.05 \%$ of the data. We set the lag parameter to 60 frames. 
We include the details of the experimental settings of SINDy-SHRED in the Appendix~\ref{app:pen_expr_detail}.

\noindent \textbf{SINDy-SHRED discovery} We define the representation of the hidden latent state space as $(z_1, z_2, z_3, z_4)$. 
We discover the following dynamical system: 
\begin{align}
\label{eqn:flow_equation}
\begin{cases}
    \dot{z}_1 & = -0.69 z_2 + 0.98 z_3 -0.40 z_4, \\
\dot{z}_2 & = 1.00 z_1 -0.78 z_3 1 -0.31 z_2 z_3^2, \\
\dot{z}_3 & = -1.029 z_1 + 0.59 z_2 + 0.41 z_4. \\
\dot{z}_4 & = -0.26 z_1^2 -0.29 z_2^2 z_3 -0.39 z_3^3.
\end{cases}
\end{align}
The identified nolinear system has two fixed points, $\mathbf{z}=0$ and $\mathbf{z} = \begin{pmatrix}
    -0.11 & -0.23 & -0.14 & 0.05
\end{pmatrix}^T$, both of which are unstable.

\begin{figure}[t]
    \centering
    \begin{overpic}[width=\textwidth]{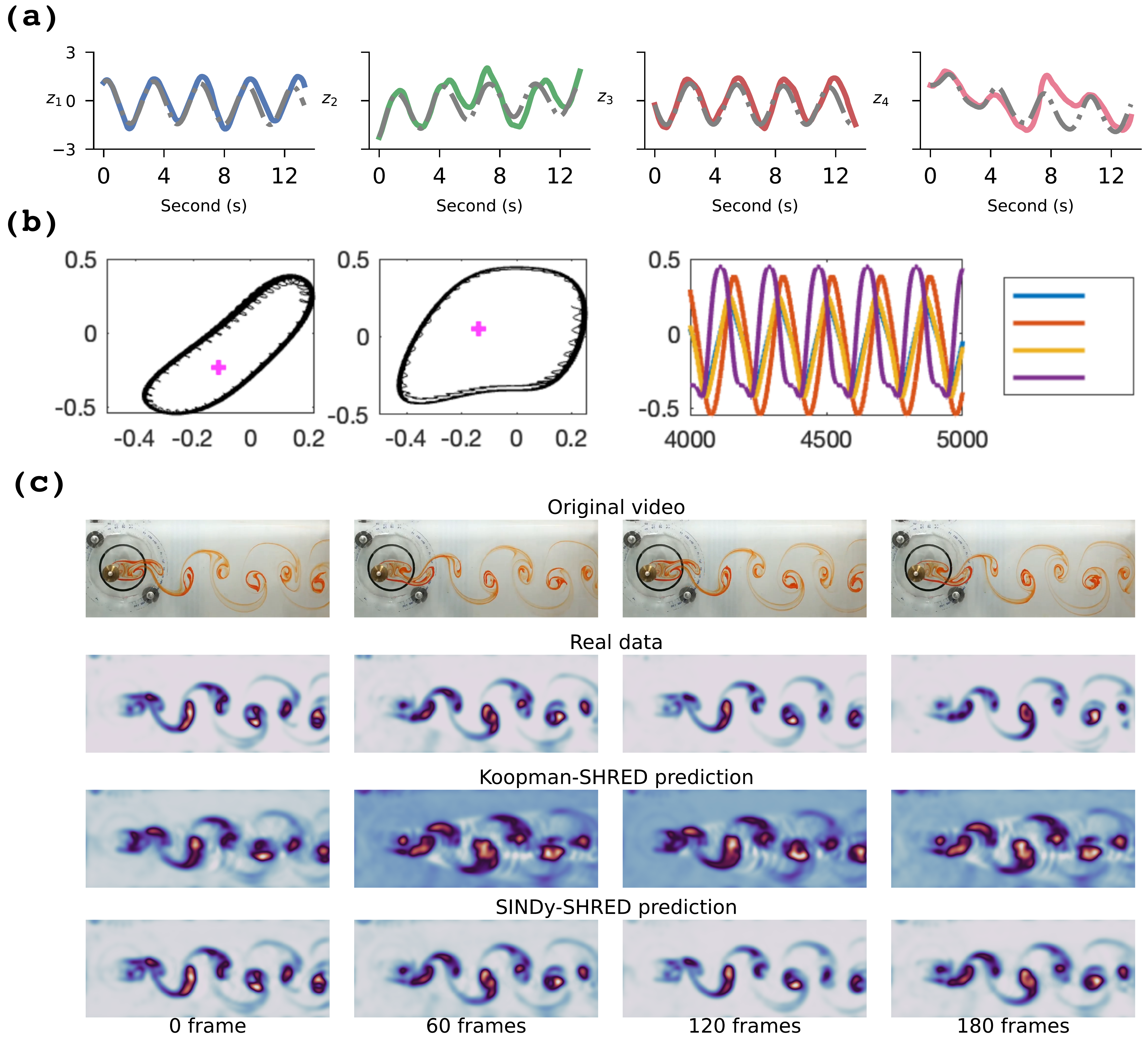}
    \put(17,50.5){$z_1$}
    \put(5,66){$z_2$}
    \put(42,50.5){$z_3$}
    \put(29,66){$z_4$}
    \put(54,66){$z_j(t)$}
     \put(70,50){time}
     \put(95,65.3){$z_1$}
     \put(95,62.9){$z_2$}
    \put(95,60.5){$z_3$}
    \put(95,58.1){$z_4$}
    \end{overpic}
    \caption{(a) Extrapolation of latent representation in SINDy-SHRED from the discovered dynamical system for flow over a cylinder data. Colored: true latent representation. Grey: SINDy extrapolation.  (b) Evolution of dynamical system (\ref{eqn:flow_equation}).  The left two panels are the phase plane $z_1$ vs $z_2$ and $z_3$ vs $z_4$ respectively.  The magenta plus symbols are the fixed points.  The nonlinear limit cycle behavior is shown in the right panel for $z_j(t)$. (c) Long-term pixel space video prediction via SINDy-SHRED. We demonstrate the forward prediction outcome up to 180 frames. }
    \label{fig:flow_latent_space}
\end{figure}

\noindent \textbf{Koopman-SHRED discovery} We define the hidden latent states as $(z_1, z_2, z_3, z_4, z_5, z_6)$. We discover the following dynamical system: 
\begin{align}
\label{eqn:flow_equation_koopman}
\begin{cases}
    \dot{z}_1 & = -0.920 z_4 + 0.620 z_5, \\
    \dot{z}_2 & = -0.163 z_1 + 0.817 z_4 + 0.778 z_6, \\
    \dot{z}_3 & = -0.462 z_4 - 1.026 z_6, \\
    \dot{z}_4 & = 1.791 z_1 + 0.307 z_6, \\
    \dot{z}_5 & = -0.969 z_1 + 0.548 z_6, \\
    \dot{z}_6 & = -0.800 z_2 - 0.915 z_5.
\end{cases}
\end{align}
The analytic solution will have the form
\begin{equation}
    \mathbf z(t) = c_1 \mathbf v_1 \cos (\omega_1 t) e^{-\lambda _1 t} + c_2 \mathbf v_2 \sin (\omega_1 t) e^{-\lambda_1 t} + c_3 \mathbf v_3 \cos (\omega_3 t) e^{\lambda _3 t} + c_4 \mathbf v_4 \sin (\omega_3 t) e^{\lambda_3 t} + c_5 \mathbf v_5 e^{-\lambda_5 t} 
\end{equation}
where $\omega_1 = 1.52$, $\omega_3 = 1.05$, $\lambda_1 = 0.01$, $\lambda_3 = 0.11$, and $\lambda _5 = -0.20$. The complete explicit solution is given in \ref{sec:analytic_flow}.

Compared to the systems discovered in all previous examples, the flow over a cylinder model is much more complex with significant nonlinear interactions. 
In Eqn.~\ref{eqn:flow_equation}, we find that $z_1$ and $z_3$ behave like a governing mode of the turbulence swing; $z_2$ and $z_4$ further depict more detailed nonlinear effects. 
We further present a linear model derived from Koopman-SHRED in Eqn.~\ref{eqn:flow_equation_koopman} with its latent space evolution demonstrated in Fig.~\ref{fig:flow_latent_space_koopman}. 
We show the result of extrapolating this learned representation.
We generate the trajectory from the initial condition at time point 0 and perform forward integration for extrapolation.  
As shown in Fig.~\ref{fig:flow_latent_space} (a), the learned ODE closely follows the dynamics of $z_1$ and $z_3$ up to 7 seconds (210 timesteps); $z_2$ and $z_4$ also have close extrapolation up to 4 seconds. 
The Koopman-SHRED model closely follows the trend (in Fig.~\ref{fig:flow_latent_space_koopman}) but deviates more significantly from the governing dynamics.

This learned representation nicely predicts the future frames in pixel space. In SINDy-SHRED, the shallow decoder prediction has an averaged MSE error of $0.030$ (equivalently $3\%$) over the entire available trajectory. 
Koopman-SHRED has relatively worse prediction error with an averaged MSE error of $0.053$. 
In Fig.~\ref{fig:flow_latent_space} (c), we observe that the autoregressively generated prediction frames closely follow the true data, and further in Fig.~\ref{fig:flow_prediction_long}, we find that the predictions are still stable after 1,000 frames, which is out of the size of the original dataset. 
The sensor-level prediction in Fig.~\ref{fig:flow_sensor_predictions} further demonstrates the accuracy of reconstruction in detail.

\subsection{Prediction and baseline study of single shot pendulum video}

In this subsection, we compare the performance of SINDy-SHRED to existing state-of-the-art learning algorithms. 
In the following, we demonstrate the result of video prediction on the pendulum data using ResNet~\citep{he2016deep}, convolutional LSTM (convLSTM)~\citep{shi2015convolutional}, and PredRNN~\citep{wang2017predrnn}, and SimVP~\citep{gao2022simvp}. The pendulum in our experiment is not ideal and includes complex damping effects. 
We use a nail on the wall and place the rod (with a hole) on the nail. 
This creates complex friction, which slows the rod more when passing the lowest point due to the increased pressure caused by gravity. 
The full model we discovered from the video (as shown in Fig.~\ref{fig:pendulum_generation} (a)) includes four terms: 
\begin{align}
    \ddot{z}=0.17\dot{z}^2-0.06\dot{z}^3-10.87\sin(z)+0.48\sin(\dot{z}),
\end{align}
which includes complex damping effects via $\sin(\dot{z}), \dot{z}^2, \dot{z}^3$, in contrast to the commonly assumed linear damping.

As shown in Table~\ref{tab:pendulum_baseline}, SINDy-SHRED outperforms all baseline methods for total error and long-term predictions. 
Generally, all baseline deep learning methods perform well for short-term forecasting, but the error quickly accumulates for longer-term predictions. 
This is also observable from the prediction in the pixel space as shown in Fig.~\ref{fig:pendulum_generation} (b). SINDy-SHRED is the only method that does not produce collapsed longer-term predictions. 
Interestingly, we observe that incorporating nonlinear terms in the SINDy library is crucial in this case, as Koopman-SHRED struggles to accurately reconstruct the finer details.
We attribute this to the fact that strong linear regularization may over-regularize and oversimplify the complex dynamics, leading to a reduction in predictive accuracy. 
Similar behaviors for Koopman-SHRED are observed when encoding the continuous spectrum using an additional shallow network.
In Fig.~\ref{fig:pen_sensor_predictions}, the sensor level prediction also demonstrates the robustness of the SINDy-SHRED prediction. PredRNN is the second best method as measured by the total error. 
However, PredRNN is expensive in computation which includes a complex forward pass with an increased number of parameters. 
It is also notable that the prediction of PredRNN collapses after 120 frames, after which only an averaged frame over the entire trajectory is predicted. 
ConvLSTM has a relatively better result in terms of generation, but the long-term prediction is still inferior compared to SINDy-SHRED. 
Additionally, is should be noted that 2D convolution is much more computationally expensive. 
For larger spatiotemporal domains (e.g. the SST example and 3D ozone data), the computational complexity of convolution will scale up very quickly, which makes the algorithm nearly impossible to deploy in practice. 
Similar computational issues will occur for diffusion models and generative models, which is likely to be impractical to compute, and unstable for longer-term predictions. 
In summary, we observe that SINDy-SHRED is not only a more accurate long-term model, but is also faster to execute and smaller in size. 

\begin{figure}[t]
    \centering
    \includegraphics[width=0.9\textwidth]{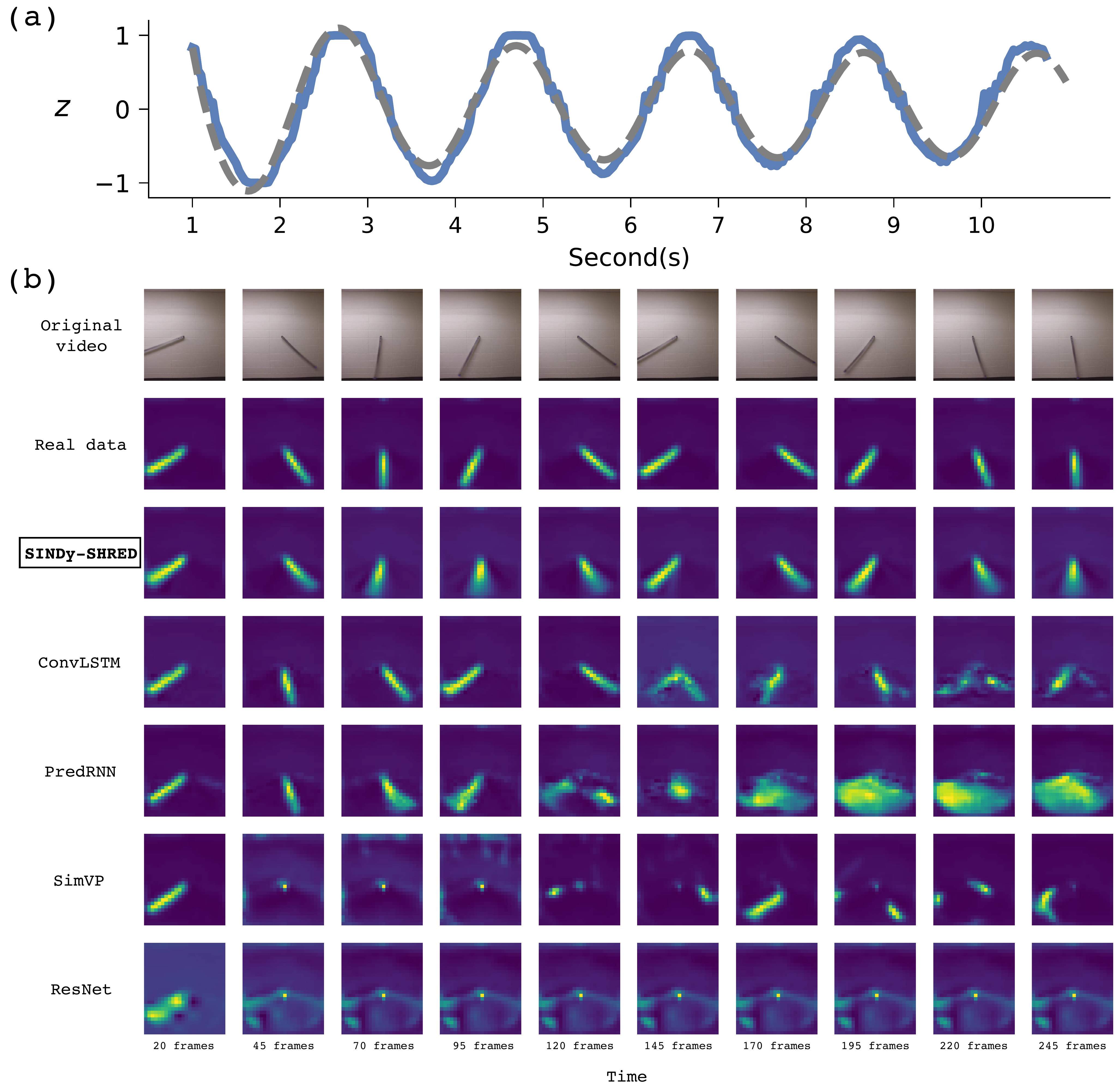}
    \caption{(a) Extrapolation of latent representation in SINDy-SHRED from the discovered dynamical system for the pendulum moving data. Blue: true latent representation. Grey: SINDy extrapolation. (b) The pendulum video generation outcome from ResNet, SimVP, ConvLSTM, PredRNN, and SINDy-SHRED from frame 20 to frame 245. }
    \label{fig:pendulum_generation}
\end{figure}

\begin{table}[t]
\centering
\scriptsize
\resizebox{\columnwidth}{!}{\begin{tabular}{lccccccc}
\toprule
Models & Params & Training time & $T=[0,100]$ &$T=[100, 200]$ & $T=[200,275]$ & Total \\
\midrule
ResNet~\citep{he2016deep} & 2.7M & 24 mins & $2.08\times 10^{-2}$ & $1.88\times 10^{-2}$ & $2.05\times 10^{-2}$ & $2.00\times 10^{-2}$  \\
SimVP~\citep{gao2022simvp}  & 460K & 30 mins &  $2.29\times 10^{-2}$ & $2.47\times 10^{-2}$ & $2.83\times 10^{-2}$ & $2.53\times 10^{-2}$  \\
PredRNN~\citep{wang2017predrnn}  & 444K & 178 mins & $1.02\times 10^{-2}$ & $1.79 \times 10^{-2}$ & $1.69 \times 10^{-2}$ & $1.48\times 10^{-2}$ \\
ConvLSTM~\citep{shi2015convolutional}  & 260K & 100 mins & $\mathbf{9.24}\times \mathbf{10^{-3}}$ & $1.86\times 10^{-2}$ & $1.99\times 10^{-2}$ & $1.55\times 10^{-2}$ \\
\midrule
\textbf{SINDy-SHRED}$^*$ & \textbf{44K} & \textbf{17 mins} & $1.70\times 10^{-2}$ & $\mathbf{9.36}\times \mathbf{10^{-3}}$ & $\mathbf{5.31}\times \mathbf{10^{-3}}$ & $\mathbf{1.05}\times \mathbf{10^{-2}}$\\
\bottomrule
\end{tabular}
}
\vspace{-.5em}
\caption{Comparison table of SINDy-SHRED to baseline methods for parameter size, training time, and mean-squared error over different prediction horizons.}  
\label{tab:pendulum_baseline}
\end{table}

\subsection{NOAA sea-surface temperature data}

\label{sec:sst_expr}
The third example we consider is that of global sea-surface temperature. 
The SST data contains 1,400 weekly snapshots of the weekly mean sea surface temperature from 1992 to 2019 reported by NOAA~\citep{reynolds2002improved}. 
The data is represented by a 180 $\times$ 360 grid, of which 44,219 grid points correspond to sea-surface locations. 
We standardize the data with its own min and max, which transforms the sensor measurements to within the numerical range of $(0, 1)$. 
We randomly select 250 sensors from the possible 44,219 locations and set the lag parameter to 52 weeks.  
Thus, for each input-output pair, the input consists of the 52-week trajectories of the selected sensors, while the output is a single temperature field across all 44,219 spatial locations. 
We include the details of the experimental settings of SINDy-SHRED in the Appendix~\ref{app:sst_expr_detail}. 
From the discovered coordinate system, we define the representation of physics - the latent hidden state space - to be $(z_1, z_2, z_3)$. The dynamics progresses forward via the following set of equations: 
\begin{align}
\label{eqn:sst_equation}
\begin{cases}
    \dot{z}_1 & = 4.68 z_2 -2.37 z_3, \\
\dot{z}_2 & = -3.10 z_1 + 3.25 z_3, \\
\dot{z}_3 & = 2.72 z_1 -5.55 z_2, 
\end{cases}
\end{align}
where the analytic solution to this system of ODEs will have the form 
\begin{equation}
    \mathbf z(t) = c_1 \mathbf v_1 \cos (\omega_1 t) e^{-\lambda _1 t} + c_2 \mathbf v_2 \sin (\omega_1 t) e^{-\lambda_1 t} + c_3 \mathbf v_3 e^{\lambda_3 t}.
\end{equation}
The value of $\omega_1 = 1.99\pi$, approximately corresponding to the expected period of 1 year. $\lambda_1 = 0.00763$ indicating a slow decay of the oscillatory mode with half-life $90.84$ years.  $\lambda_3 = 0.01527$ indicating global increases in temperature with doubling time $45.39$ years. The explicit solution is given in Appendix \ref{sec:analytic_sst}.

\begin{figure}[t]
    \centering
    \includegraphics[width=\textwidth]{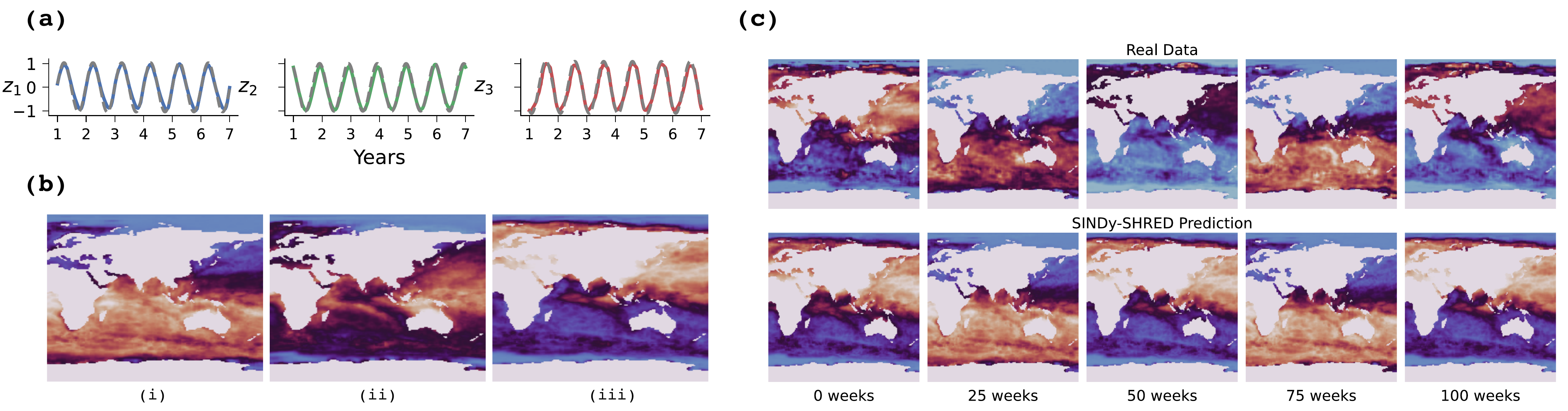}
    \caption{(a) Extrapolation of latent representation in SINDy-SHRED from the discovered dynamical system for SST. Colored: true latent representation. Grey: SINDy extrapolation. (b) Decoder reconstruction of three independent directions $z_1, z_2, z_3$ in the latent space. (c) Long-term global sea-surface temperature prediction via SINDy-SHRED from week 0 to week 100. We crop the global map for better visualization. }
    \label{fig:sst_combined}
\end{figure}

The discovery of a linear system describing the evolution of the latent state is in line with prior work on SST data \cite{de2019deep}
in which it as assumed that the underlying physics is an advection-diffusion PDE.  
In Fig.~\ref{fig:sst_combined} (a) we further present the accuracy of this discovered system by forward simulating the system from an initial condition for a total of 27 years (c.f. Fig.~\ref{fig:sst_latent_space_long_term}). It is observable how the discovered law is close to the true evolution of latent hidden states and, critically, there appears to be minimal phase slipping. By extrapolating the latent state space via forward integration, we can apply the shallow decoder to return forecasts of the future spatial domain. Doing so, we find an averaged MSE error of $0.57 \pm 0.10 ^\circ C$ for 318 time steps in the test dataset. The eigenmodes of the SST data is visualized by feeding unit vectors along the independent latent directions $z_1, z_2, z_3$ into the decoder in Fig.~\ref{fig:sst_combined}~(b). 
In Fig.~\ref{fig:sst_combined} (c), we show SINDy-SHRED produces stable long-term predictions for SST data. 
We further include Fig.~\ref{fig:sst_sensors} to demonstrate the extrapolation of each sensor. The sensor level prediction is based on the global prediction of the future frame, and we visualize the signal trajectory of randomly sampled testing sensor locations.
We find SINDy-SHRED is robust for out-of-distribution sensors, with reasonable extrapolations even in the presence of anomalous events.

\subsection{Predicting isotropic turbulence flow}

\begin{figure}[t]
    \centering
    \includegraphics[width=\textwidth]{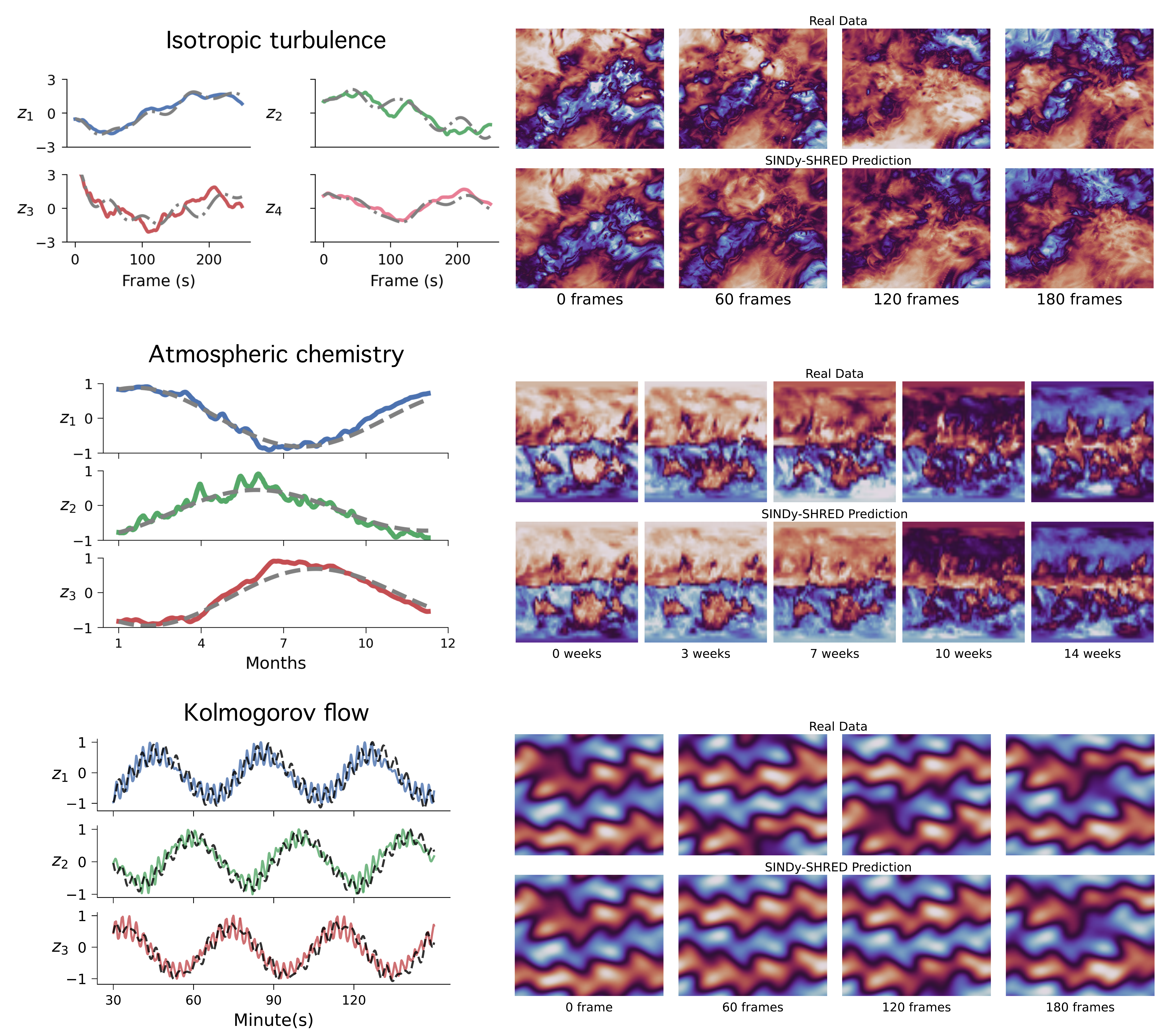}
    \caption{Visualizations of latent spaces and predictions for SINDy-SHRED applied to PDE simulations of isotropic turbulence, atmospheric chemistry, and 2D Kolmogorov flow datasets. \textbf{Left:} Extrapolation of latent representations using SINDy-SHRED from the discovered dynamical system for the ozone data, showing the true latent representation (colored lines) and SINDy extrapolation (grey lines). \textbf{Right:} Long-term spatiotemporal predictions from SINDy-SHRED compared to ground truth. The details of the 2D Kolmogorov experiment can be found in the Appendix.}
    \label{fig:flow_examples}
\end{figure}

The isotropic turbulance flow data~\citep{li2008public,yeung2012dissipation} is a data simulation from the pressure field of a forced isotropic turbulent flow. 
\begin{align}
    \nabla^2(p/\rho)=\lrp{\Omega-\epsilon/\nu}/2,
\end{align}
where $p$ is the pressure, $\rho$ is the density of the fluid, $\Omega$ is the vorticity, $\epsilon$ is the dissipation rate, and $\nu$ is the kinematic viscosity.
The flow was generated with $1024^3$ nodes with the Taylor-scale Reynolds number fluctuates around $433$. 
We process the data to shrink the spatial resolution to $350\times 350$, while keeping the time step to be $0.002$. 
We standardize the data within the range of $(0,1)$ with 50 sensors out of $122,500$ grid points (0.4\%). We include details of SINDy-SHRED setup in the Appendix~\ref{app:iso_expr_detail}. 

We find an approximation of the converged latent representation which has 8 oscillatory modes. 
It has the following analytic solution: 
\begin{align}
    \label{eqn:iso_equation}
    \mathbf{z}(t) = \sum_{i=1}^{7} \left( c_{2i-1} \mathbf{v}_{2i-1} \cos(\omega_i t) e^{\lambda_i t} + c_{2i} \mathbf{v}_{2i} \sin(\omega_i t) e^{\lambda_i t} \right) + c_{15}\vv_{15} e^{\lambda_8t} + c_{16}\vv_{16} e^{\lambda_9t}, 
\end{align}
where $\omega_1 = 9.39$, $\lambda_1 = 0.47$, $\omega_2 = 11.90$, $\lambda_2 = 0.05$, $\omega_3 = 13.42$, $\lambda_3 = 0.03$, $\omega_4 = 3.46$, $\lambda_4 = -0.04$, $\omega_5 = 5.44$, $\lambda_5 = -0.27$, $\omega_6 = 8.30$, $\lambda_6 = -0.75$, $\omega_7 = 18.72$, $\lambda_7 = 1.37$, $\lambda_8 = -0.26$, $\lambda_9 = -3.39$, with coefficients $c_{i}$' and eigenvectors $\mathbf{v}_{i}$. Please find the exact form in Eqn.~\ref{eqn:iso_equation_full}.

The isotropic turbulent flow exhibits chaotic behavior which makes the prediction challenging. However, a linear model still captures the governing dynamics nicely and provides a stable prediction for approximately 200 frames. 
In Eqn.~\ref{eqn:iso_equation}, we find three increasing oscillatory modes and five decreasing oscillatory modes. 
For pixel space prediction, SINDy-SHRED prediction has an averaged MSE error of $0.03$ over the entire dataset. 
In Fig.~\ref{fig:flow_examples}, we find that the predictions are accurate up to 180 frames, which closely follows the governing trend of turbulent flow. 
The sensor-level prediction in Fig.~\ref{fig:iso_sensor_predictions} further demonstrates the details of the signal prediction.
Three of the oscillatory modes demonstrate growing trends. 
Four of the oscillatory modes decay, as well as the exponential modes. 
We note here that due to the complex and chaotic nature of the data, accurate sensor-level prediction is particularly challenging. Despite this complexity, the SINDy-SHRED model still provides a stable extrapolation of the overall trends.

\subsection{3D Atmospheric ozone concentration}

The atmospheric ozone concentration dataset~\citep{bey2001global} contains a one-year simulation of the
evolution of an ensemble of interacting chemical species through a transport operator using GEOS-Chem. 
The simulation contains 1,456 temporal samples with a timestep of 6 hours over one year for 99,360 (46 by 72 by 30) spatial locations (latitude, longitude, elevation). The data presented in this work has compressed by performing an SVD and retaining only the first 50 POD modes.
As with the SST data, we standardize the data within the range of $(0,1)$
and randomly select and fix 3 sensors out of 99,360 spatial locations ($0.5\%$).
We include the details of the experimental settings of SINDy-SHRED in the Appendix~\ref{app:ozone_expr_detail}. 
The converged latent representation presents the following SINDy model: 
\vspace{-1mm}
\begin{align}
\label{eqn:ozone_equation}
\begin{cases}
\dot{z}_1 & = -0.002-0.013z_2+0.007z_3, \\
\dot{z}_2 & = -0.001 z_1 + 0.004 z_2 -0.008 z_3, \\
\dot{z}_3 & = 0.002 + 0.012 z_2 -0.005 z_3. 
\end{cases}
\end{align}
The analytic solution to this system of ODEs will have the form 
\begin{align}
    \mathbf z(t) &= c_1 \mathbf v_1 \cos (\omega_1 t) e^{-\lambda _1 t} + c_2 \mathbf v_2 \sin (\omega_1 t) e^{-\lambda_1 t} + c_3 \mathbf v_3 e^{-\lambda_3 t} \\ &+ \int _0 ^t \left( c_4 \mathbf v_1 \cos (\omega_1 t) e^{-\lambda _1 t} + c_5 \mathbf v_2 \sin (\omega_1 t) e^{-\lambda_1 t} + c_6 \mathbf v_3 e^{-\lambda_3 t} \right) d\tau 
\end{align}
where $\omega _1 = 0.0079,$ $\lambda_1 = 0.003$, and $\lambda_3 = 0.003.$ The complete solution is given in Appendix \ref{sec:analytic_ozone}.

Unlike traditional architectures for similar problems, which may include expensive 3D convolution, SINDy-SHRED provides an efficient way of training, taking about half an hour. Although the quantity of data is insufficient to perform long term-predictions, SINDy-SHRED still exhibits interesting behavior for a longer-term extrapolation which converges to the fixed point at $\mathbf{0}$ (as shown in Fig.~\ref{fig:ozone_latent_long_term}). From the extrapolation of the latent state space, the shallow decoder prediction has an averaged MSE error of $1.5e^{-2}$. In Fig.~\ref{fig:flow_examples}, we visualize the shallow decoder prediction up to 14 weeks. 
In Fig.~\ref{fig:flow_examples}, we reconstruct the sensor-level predictions which demonstrate the details of the signal prediction. The observations are much noiser than the SST data, but SINDy-SHRED provides a smoothed extrapolation for the governing trends.

\subsection{Visualizing the convex loss landscape}

We visualize the loss landscape for each experiment in Fig.~\ref{fig:loss_landscape}. 
Following the methodology of~\citep{li2018visualizing}, this visualization demonstrates the loss landscape along two random directions. 
By perturbing the model weights along these directions, we obtain a 3D visualization of the loss surface. 
We observe that, for both SHRED and SINDy-SHRED, the loss landscape is surprisingly smooth and exhibits perturbative convexity, featuring a clear global minima. 
The visualizations across all datasets consistently demonstrate these favorable convex loss landscapes.
In the Appendix, we further visualize different settings of SHRED under various random directions and scales in Fig.~\ref{fig:loss_landscapes}.
From these optimization landscapes, we observe that the SHRED architecture encourages flat minimizers and reduces the potential for chaotic training behavior. 
This property also provides SHRED with properties including hyperparameter stability, reliable statistical inference, and guaranteed data recovery. 

\begin{figure}
    \centering
    \includegraphics[width=\linewidth]{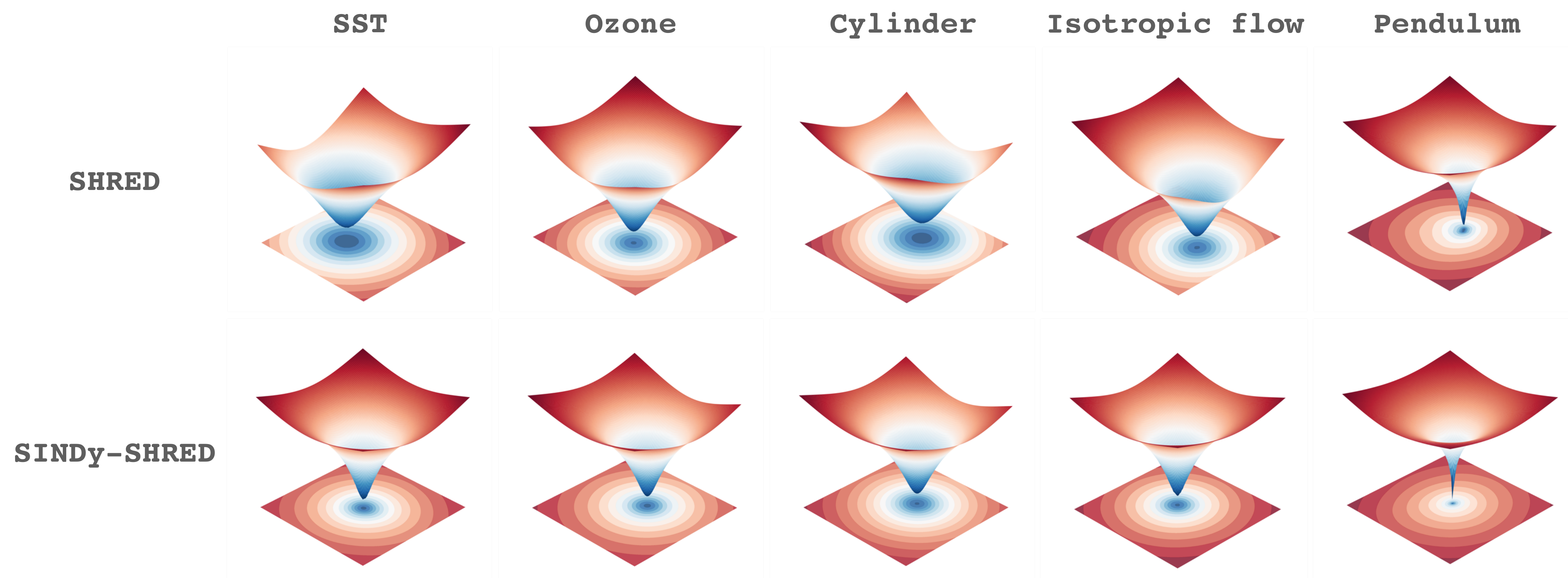}
    \caption{Loss landscape plot of SHRED (top row) and SINDy-SHRED (bottom row) for all datasets presented: sea-surface temperature, Ozone, flow over a cylinder, isotropic flow, and pendulum.}
    \label{fig:loss_landscape}
\end{figure}

\vspace{-3mm}
\section{Discussion and conclusion}
\vspace{-2mm}

In this paper, we present the SINDy-SHRED algorithm, which jointly solves the sensing and model identification problem with simple implementation, efficient computation, and robust performance.  Specifically, SINDy-SHRED (Koopman-SHRED) learns a latent coordinate system for which  parsimonious, low-dimensional governing equations, either linear or nonlinear, can be constructed.  The SINDy-SHRED model has strong predictive capabilities and stable roll outs by identifying the underlying physics. Through numerous experiments on challenging real-world data, we show that our method can produce robust and accurate long-term predictions for a variety of complex problems, including global sea-surface temperature, turbulent flows, and videos of physics, which includes recordings of flow over a cylinder and a moving pendulum.  The simplicity and robustness of the architecture, along with its performance on both real video and measurement streams, make it highly compelling as a new paradigm for learning physics models directly from real data streams.

Learning new physics and obtaining new insights from well-established physics problems is challenging, especially given the extensive efforts and long-standing interest from the broader community.  Despite this, SINDy-SHRED has been demonstrated to derive new physics models for every single example considered in this paper.  For instance, directly from video, SINDy-SHRED identified two potential models to capture the flow around a cylinder:  (i) a four-dimensional nonlinear ODE system containing a limit cycle modeling the nonlinear oscillations induced by the von Karman vortex shedding, and (ii) a linear six-dimensional Koopman model which can be solved in closed form.  Both models are new to the field, a rarity given the decades of research devoted to understanding the bifurcation that leads to the vortex shedding dynamics.  Moreover, the physics discovery is done solely from pixel space, a remarkable achievement that stands in contrast to previous efforts requiring access to the full PDE simulation fields.  In the pendulum example, SINDy-SHRED learns a nonlinear damping term that captures the complex motion of the system, improving on the typical linear damping assumption.  Beyond video data, SINDy-SHRED also demonstrates exceptional performance in modeling turbulent flows.  Remarkably, the latent space dynamics reveal that eight modes suffice to accurately capture the dynamics of isotropic turbulence.  Even global sea-surface temperature is captured with a three-dimensional linear ODE system that one might solve in a first course on ODEs.  These results are astounding given the complexity of the data sets, highlighting the capability of SINDy-SHRED to extract new understanding and governing equations from well-known and important scientific problems.

In addition to learning interpretable new physics models, SINDy-SHRED is simple and robust to train.  As shown above and in the supplement, this largely stems from the observed globally convex loss landscape, which is rare among all existing deep learning models as noted in~\citep{li2018visualizing}.  
In general, a major goal in designing deep learning models is to {\em engineer} the loss landscape to align with the inherent characteristics of the data, thereby facilitating stable training, robust hyper-parameter tuning, and generalizable performance~\citep{wang2021eliminating,ding2024flat}.  SINDy-SHRED naturally exhibits an ideal loss landscape that is globally convex, which is what the deep learning community strives to engineer.  
Although exploring the underlying mechanisms is beyond the scope of the current study, future work will investigate how the SINDy-SHRED architecture induces this ideal loss landscape.  Moreover, SINDy-SHRED is grounded by rigorous convergence guarantees for the SINDy-class of models, offering theoretical advantages over competing neural network-based approaches. 
These theoretical results offer precise error bounds and the performance for long-term prediction. 
As a result, SINDy-SHRED is well-grounded in theory, providing a solid foundation for further development and widespread practical applications.

Most importantly, in practice, SINDy-SHRED achieves state-of-the-art performance in long-term autoregressive video prediction, outperforming ConvLSTM, PredRNN, ResNet, and SimVP while maintaining the lowest computational cost, smallest model size and minimal training time  simultaneously.   Notably, SINDy-SHRED outperforms the leading methods on all metrics of interest without any hyper-parameter tuning. Moreover, this performance is achieved while producing an interpretable model of the underlying dynamics, effectively capturing the actual governing physics, a contribution that competing methods simply do not offer.  This enables stable, long-time rollouts of video prediction.   Such performance advances the goal of GoPro Physics, which aims to discover governing physics and engineering principles directly from few-shot video data streams.

To conclude, the SHRED architecture and its variants have proven to be highly effective algorithms for sensing~\cite{williams2023sensing,ebers2024leveraging}, parametric model reduction~\cite{tomasetto2025reduced,kutz2024shallow,riva2024robust} and now, as demonstrated in this work, physics discovery.  
The SINDy-SHRED and Koopman-SHRED architectures are robust, stable, and computationally efficient. This allows for laptop level computing along with afternoon-level tuning-even on exceptionally challenging datasets.  
Its ease of use and reliability position it as a compelling algorithm to be the new paradigm for general-purpose physics model discovery for a wide variety of datasets. 
Our inclusion of a Google colab notebook is aimed to assist practioners to extend SINDy-SHRED to perform physics discovery for custom data.  We have also included a YouTube playlist showcasing the diversity of applications and simplicity of implementation for both SHRED and SINDy-SHRED architectures:  \url{https://youtu.be/kHFQjSstNdQ}.

\section*{Acknowledgements}

The authors were supported in part by the US National Science Foundation (NSF) AI Institute for Dynamical Systems (dynamicsai.org), grant 2112085.  JNK further acknowledges support from the Air Force Office of Scientific Research (FA9550-24-1-0141).

\newpage
\bibliography{main.bbl}
\bibliographystyle{plainnat}

\newpage 
\appendix
\newpage

\begin{center}
{\LARGE \bfseries Supplemental Material}\\[0.5cm]
{\large Mars Liyao Gao, Jan P. Williams, J. Nathan Kutz}\\[0.5cm]
\end{center}

\section{Challenges in rolling out neural networks for fitting a simple sine function}

In the following example, we consider a simple use case in which we fit a simple sine function using recurrent neural networks. Surprisingly, extrapolating a simple sine function can be a difficult task for neural networks. The dynamical system $x$ is generated via the following equation: 
\begin{align}
    \ddot{x}=-\sin(x).
\end{align}

We implement a GRU network in the following, which contains three stacked GRU layers with size 500, and a fully connected output layer. 
We employ the Adam optimizer with a learning rate of 0.001 and used the mean squared error (MSE) as the loss function. We train the GRU network with 150 epochs. The input sequences are normalized to [0, 1].

\begin{figure}[H]
    \centering
    \includegraphics[width=\textwidth]{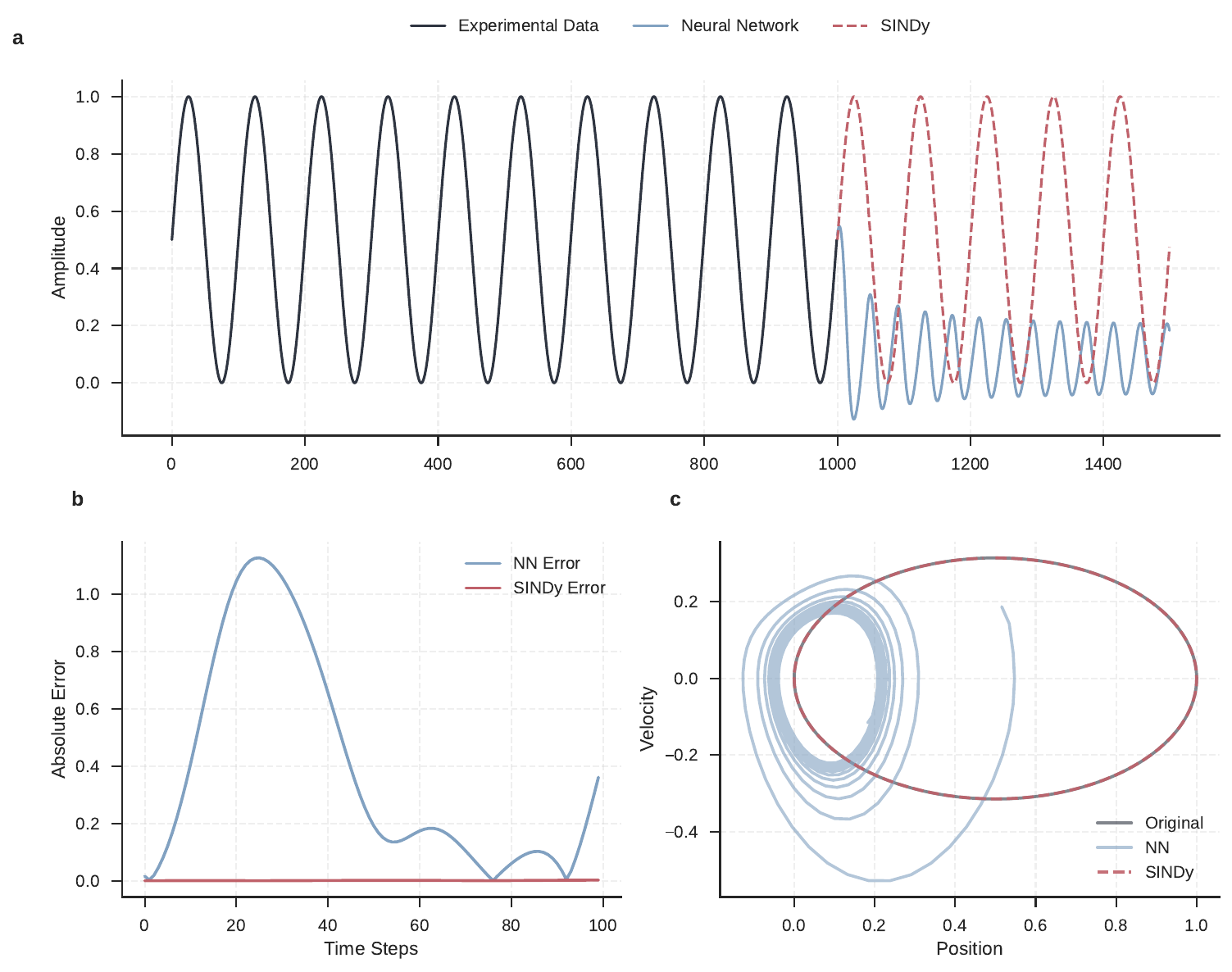}
    \caption{Fitting a simple sine function using SINDy and the GRU network.}
    \label{fig:GRU_sine}
\end{figure}

From Figure.~\ref{fig:GRU_sine}, we observe that SINDy approximation largely outperforms the GRU network. 
The extrapolation of GRU is poor due to the overly strong expressive power of this type of functional. 
We locate this to be a critical reason that most neural networks fail to capture physics nicely. 
Additionally, it is not clear for neural networks to estimate the dynamical system from the discrete setup $(t_0, t_1, ..., t_T)$ to a continuous one, while SINDy provides a natural transition into a continuous-time formulation.
The SINDy prediction could also remarkably improve generalizability, which could enhance the performance in long-term prediction.

\section{Theoretical justification on the generalization error between SINDy and the neural networks}
We further establish the statistical foundation for dynamical system learning.
When the underlying dynamical system can be closely described by a linear combination of the library of functions, obtaining a ``governing equation'' will have huge benefits for long-term extrapolation. 
yoDue to the nature of forward integration,error accumulates rapidly making an approximate system undesirable for extrapolation. In the following, we formalize this statement by analyzing the Rademacher complexity of SINDy-class and neural networks functional. 

\subsection{Theoretical setup}
The system we wish to study has the form that
\begin{align}
    \dot{x}=f(x),
\end{align}
which is an ODE describing the trajectory of a dynamical system in a learned latent space. 

Suppose we have measurements of $\vx=\{\vx_1, \vx_2, ...,\vx_t, \vx_{t+1},..., \vx_T\}$ with time gap $\Delta t$, and we wish to predict $\vx_{t+1}, \vx_{t+2}, \dots, \vx_T$. 

We first define the SINDy-class functional based on a library of functions $\Theta(\cdot)$. Mathematically, $\mathcal{F}_{\text{SINDy}}$ defines as: 
\begin{align}
    \mathcal{F}_{\text{SINDy}}:=\left\{\Theta\xi : \xi\in\real^p\right\},
\end{align}
where all $f \in \mathcal{F}_{\text{SINDy}}$ are functions of a  convex (linear) combination of functions in $\Theta(\cdot)$, and all functions in the library are within $\mathcal{L}^2(P)$. 
The space $\mathcal{L}^2(P)$ refers to the set of square-integrable functions with respect to the probability measure $P$.
An example of $\Theta(x)$ is $[x, x^2, x^3, x^4, \sin(x), \cos(x)]$, and this could represent a dynamical system with the following form 
\begin{align}
    \dot{x}=a_1x+a_2x^2+a_3x^3+a_4x^4+a_5\sin(x)+a_6\cos(x),
\end{align}
where $a_1, a_2, ..., a_6$ are constants.

We note that $\mathcal{F}_{\text{SINDy}}$ is a wide class of functions. Since $\Theta(\cdot)$ covers polynomials and the Fourier series, the functional class $\mathcal{F}_0$ could model the governing effect for all differentiable functions from Taylor's approximation. 

Then, we consider the neural network way of learning dynamical systems. 
Unlike SINDy learning, the neural network collects discrete samples $\vx=\{\vx_1, \vx_2, \dots, \vx_T\}$, and to predict $\vx_{t+1}$, we use the $r+1$ samples ahead. 
We consider the $k$-layer ReLU functional class $\mathcal{F}_{\text{ReLU}}$ to be 
\begin{align}
    \mathcal{F}_{\text{ReLU}}:=\left\{\vx\mapsto \sigma_k\lrp{\mW_k\sigma_{k-1}\lrp{\mW_{k-1}\cdots\sigma_1\lrp{\mW_1 \vx_{t-r:t}}}}\right\},
\end{align}
where $\theta=\lrp{\mW_1, \dots, \mW_k}$ with $W_1\in\real^{p\times d}$, $\mW_i\in\real^{d\times d}$, $\mW_k\in\real^{d\times p}$, and $\sigma_i(x)=\max(0, \vx)$ represents the ReLU function.
Here, $\vx_{t-r:t}$ represents of the prior $r+1$ samples. Instead of modeling thetemporal derivatives, this training strategy believes that the following dynamics are strongly tied with the history samples, and autoregressively predicts the future dynamics. 
This is commonly used in neural networks training for dynamical systems predictions.

For both cases, we define the error of dynamical system simulation as
\begin{align}
    \mathcal{E}(\vx,t)=\vx(t)-\hat{\vx}(t).
\end{align}

\paragraph{Stability}It is important to note that, in the following analysis, we set aside stability issues due to the lack of direct theoretical tools.
In practice, we observe that the dynamical systems approximated by neural networks are often exhibit instability.  
Longer rollouts of a learned dynamical system typically lead to poor performance. 
This is also reflected in the theory that it is hard to obtain guarantees without both a bounded input domain and bounded target values. 
There is no clear assurance on how neural networks might perform when getting out of the training input domain~\citep{shen2023engression}.
To facilitate the current analysis, we assume a stable forward simulation scheme by always projecting the $\hat{\vx}$ back into the input domain. 
Fully addressing the stability issue for neural networks will require more extensive study in future work.

\subsection{The error of SINDy-class}
In SINDy setting, the optimal functional $\theta^*(\cdot)\in\mathcal{F}_{\text{SINDy}}$ can be parameterized from the following: 
\begin{align}
    \theta^*(x)=\Theta(x)\xi^*,
\end{align}
where $\xi^*$ is a vector in $\mathbb{R}^p$. 
The functional learning problem shrinks down to a linear parameter estimation problem, and the empirical risk minimizer is the least-square solution that
\begin{align}
    \hat\xi = \lrp{\Theta(\mX)^T\Theta(\mX)}^{-1}\Theta(\mX)^\top \hat{\dot{\vx}}.
\end{align}

Here, we estimate the real $\dot{\vx}$ from the data using numerical differentiation methods via $\hat{\dot{x}}$. We assume a linear model for $\hat{\dot{x}}$ that  
\begin{align}
    \hat{\dot{\vx}} = \Theta(\mX)\xi^*+\epsilon,
\end{align}
where $\epsilon\sim\mathcal{N}(0, s^2)$. 

\begin{assumption}[Lowest eigenvalue]
    The lowest eigenvalue of $\Theta(\mX)^\top\Theta(\mX)$ is bounded from below by $cn$, where $c>0$, and $n$ is the sample size.
\end{assumption}

\begingroup
\def\thetheorem{\ref{thm:xi_error}}
\begin{theorem}
    Suppose we have a SINDy-class functional with a library of functions $\Theta(x)$, and we estimate the coefficient vector $\xi$ by $\hat{\xi}$ using least squares. Let $\xi^*$ be the true coefficient vector. Then, under suitable regularity conditions on $\Theta(x)$ and the noise in the data, we have
    \begin{align}
        E||\hat{\xi} - \xi^*||_2 = \mathcal{O}\lrp{s \sqrt{\frac{p}{n}}}
    \end{align}
    where $C$ is a constant depending on the properties of $\Theta(x)$, $s$ is the standard deviation of the noise, $p$ is the number of functions in the library $\Theta(x)$, and $n$ is the number of data points.

    The error in predicting the dynamical system after time $T$ is bounded by
    \begin{align}
        E||\hat{x}(T) - x(T)||_2 \leq  \mathcal{O}\left(e^{LT} E||\hat{\xi} - \xi^*||\right) = \mathcal{O}\left(e^{LT} s \sqrt{\frac{p}{n}}\right), 
    \end{align}
    where $L$ is the Lipschitz constant of the system.

    Furthermore, with probability $1-\delta$, the error in predicting the dynamical system after time $T$ is bounded by
    \begin{align}
        ||\hat{x}(T) - x(T)||_2 \leq  \mathcal{O}\left(e^{LT} s \sqrt{\frac{p}{n} \log\left(\frac{1}{\delta}\right)}\right).
    \end{align}
\end{theorem}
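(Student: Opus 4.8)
The goal is to bound $E\|\hat{\xi} - \xi^*\|_2$, then propagate this parameter error through the ODE flow to control $\|\hat{x}(T) - x(T)\|_2$, and finally convert the expectation bound into a high-probability bound. The plan is to proceed in three stages, matching the three displayed inequalities in the theorem.

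\textbf{Stage 1: the coefficient error.} First I would write the least-squares residual explicitly. Since $\hat{\dot{\vx}} = \Theta(\mX)\xi^* + \epsilon$ with $\epsilon \sim \mathcal{N}(0, s^2 I_n)$, plugging into the normal-equations solution gives $\hat{\xi} - \xi^* = (\Theta(\mX)^\top \Theta(\mX))^{-1} \Theta(\mX)^\top \epsilon$. Then $E\|\hat\xi - \xi^*\|_2^2 = s^2 \,\mathrm{tr}\big((\Theta(\mX)^\top\Theta(\mX))^{-1}\big)$. Using the lowest-eigenvalue assumption ($\lambda_{\min}(\Theta(\mX)^\top\Theta(\mX)) \ge cn$), each of the $p$ eigenvalues of the inverse is at most $1/(cn)$, so the trace is at most $p/(cn)$, giving $E\|\hat\xi - \xi^*\|_2^2 \le s^2 p/(cn)$; by Jensen $E\|\hat\xi - \xi^*\|_2 \le s\sqrt{p/(cn)} = \mathcal{O}(s\sqrt{p/n})$. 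This stage is essentially routine linear algebra plus the eigenvalue assumption.

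\textbf{Stage 2: propagating to the trajectory.} Here I would invoke a Grönwall-type stability estimate for ODE flows. Let $x(t)$ solve $\dot x = \Theta(x)\xi^*$ and $\hat x(t)$ solve $\dot{\hat x} = \Theta(\hat x)\hat\xi$ from the same initial condition. Writing $e(t) = \hat x(t) - x(t)$, we get $\dot e = \Theta(\hat x)\hat\xi - \Theta(x)\xi^* = \big(\Theta(\hat x)-\Theta(x)\big)\hat\xi + \Theta(x)(\hat\xi - \xi^*)$. Under the regularity conditions (Lipschitz vector field with constant $L$, and boundedness of $\Theta(x)$ along the trajectory — which the stability/projection assumption in the appendix supplies), the first term is bounded by $L\|e(t)\|$ and the second by a constant multiple of $\|\hat\xi - \xi^*\|$. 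Grönwall's inequality then yields $\|e(T)\| \le \frac{C}{L}(e^{LT}-1)\|\hat\xi - \xi^*\| = \mathcal{O}(e^{LT}\|\hat\xi - \xi^*\|)$. Taking expectations and combining with Stage 1 gives the second displayed bound. The one subtlety is that the Lipschitz/boundedness constants must be uniform over the (projected) domain traversed by $\hat x$, which is exactly what the stability assumption is there to guarantee; I would state this dependence explicitly.

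\textbf{Stage 3: high-probability version.} Instead of bounding $E\|\hat\xi - \xi^*\|_2$, I would bound the Gaussian quadratic form $\|\hat\xi - \xi^*\|_2^2 = \epsilon^\top \Theta(\mX)(\Theta^\top\Theta)^{-2}\Theta^\top \epsilon$ with high probability. This is a Gaussian chaos / $\chi^2$-type tail bound: by the Hanson–Wright inequality (or a direct Laurent–Massart bound on $\chi^2$), with probability $1-\delta$ we have $\|\hat\xi - \xi^*\|_2^2 \lesssim s^2 \big(\mathrm{tr}(M) + \|M\|_{op}\log(1/\delta)\big)$ where $M = \Theta(\Theta^\top\Theta)^{-2}\Theta^\top$, and both $\mathrm{tr}(M)$ and $\|M\|_{op}$ are controlled by the eigenvalue lower bound, giving $\|\hat\xi - \xi^*\|_2 = \mathcal{O}\big(s\sqrt{(p/n)\log(1/\delta)}\big)$. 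Feeding this into the Grönwall bound of Stage 2 gives the third displayed inequality.

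\textbf{Main obstacle.} The genuinely delicate point is Stage 2: the Grönwall argument requires the vector field $x \mapsto \Theta(x)\hat\xi$ to be Lipschitz \emph{and} the library $\Theta$ to be bounded along \emph{both} trajectories, yet $\hat\xi$ is random and the estimated trajectory $\hat x$ could a priori wander outside any fixed compact set — precisely the instability issue flagged in the appendix. I would handle this by explicitly conditioning on the (high-probability) event from Stage 3 that $\hat\xi$ is close to $\xi^*$, invoking the stated projection/stability assumption to confine $\hat x$ to a compact domain on which $\Theta$ and its derivatives are uniformly bounded, and absorbing those uniform bounds into the $\mathcal{O}(\cdot)$ constant. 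Everything else is bookkeeping.
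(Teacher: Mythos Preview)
Your proposal is correct and follows essentially the same three-stage structure as the paper: the least-squares trace bound for $E\|\hat\xi-\xi^*\|_2^2$ via the eigenvalue assumption, a Gr\"onwall argument to propagate to $\|\hat x(T)-x(T)\|$, and Gaussian concentration for the high-probability version. If anything, your Stage~2 decomposition $\dot e = (\Theta(\hat x)-\Theta(x))\hat\xi + \Theta(x)(\hat\xi-\xi^*)$ is more careful than the paper's, which writes $\dot{\mathcal E}=\Theta(x)(\xi^*-\hat\xi)$ directly and thereby sidesteps the boundedness/stability issue you rightly flag as the main obstacle.
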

\addtocounter{theorem}{-1}
\endgroup

\begin{proof}
\begin{align}
    \E{\lrn{\hat\xi-\xi^*}_2^2} &= \E{\lrn{\lrp{\Theta(\mX)^T\Theta(\mX)}^{-1}\Theta(\mX)^\top \hat{\dot{\vx}} - \xi^*}_2^2} \\
    &= \E{\lrn{\lrp{\Theta(\mX)^T\Theta(\mX)}^{-1}\Theta(\mX)^\top (\Theta(\mX)\xi^* + \epsilon) - \xi^*}_2^2} \\
    &= \E{\lrn{\lrp{\Theta(\mX)^T\Theta(\mX)}^{-1}\Theta(\mX)^\top \epsilon}_2^2} \\
    &= \E{\text{Tr}\left[\lrp{\Theta(\mX)^T\Theta(\mX)}^{-1}\Theta(\mX)^\top \epsilon \epsilon^\top \Theta(\mX)\lrp{\Theta(\mX)^T\Theta(\mX)}^{-1}\right]} \\
    &= \text{Tr}\left[\lrp{\Theta(\mX)^T\Theta(\mX)}^{-1}\Theta(\mX)^\top \E{\epsilon \epsilon^\top} \Theta(\mX)\lrp{\Theta(\mX)^T\Theta(\mX)}^{-1}\right] \\
    &= s^2 \text{Tr}\left[\lrp{\Theta(\mX)^T\Theta(\mX)}^{-1}\Theta(\mX)^\top  \Theta(\mX)\lrp{\Theta(\mX)^T\Theta(\mX)}^{-1}\right] \\
    &= s^2 \text{Tr}\left[\lrp{\Theta(\mX)^T\Theta(\mX)}^{-1}\right]
\end{align}

Under suitable regularity conditions on $\Theta(x)$, the eigenvalues of $\Theta(\mX)^T\Theta(\mX)$ are well-behaved. If we assume that the smallest eigenvalue of $\Theta(\mX)^T\Theta(\mX)$ is bounded below by $c n$ for some constant $c > 0$, then the largest eigenvalue of $\lrp{\Theta(\mX)^T\Theta(\mX)}^{-1}$ is bounded above by $\frac{1}{cn}$. Therefore,

$$ \text{Tr}\left[\lrp{\Theta(\mX)^T\Theta(\mX)}^{-1}\right] \leq \frac{p}{cn}$$

where $p$ is the number of functions in $\Theta(x)$. Therefore,

$$\E{\lrn{\hat{\xi} - \xi^*}_2^2} \leq \frac{s^2 p}{cn}$$

Taking the square root, we have

$$\E{\lrn{\hat{\xi} - \xi^*}_2} \leq \frac{s \sqrt{p}}{\sqrt{cn}}.$$

The error is $\mathcal{E}(x,t) = x(t) - \hat{x}(t)$. Taking the derivative with respect to $t$, we get

\begin{align}
    \dot{\mathcal{E}}(x,t) &= \dot{x} - \dot{\hat{x}} \\
    &= \Theta(x) \xi^* - \Theta(x) \hat{\xi} \\
    &= \Theta(x) (\xi^* - \hat{\xi})
\end{align}

Since $\Theta(x)$ is $L$-Lipschitz, we have

$$||\dot{\mathcal{E}}(x,t)|| \leq L ||\hat{\xi} - \xi^*||$$

Using Grönwall's inequality, we get

$$||\mathcal{E}(x,T)|| \leq e^{LT} ||\mathcal{E}(x,0)|| + \int_0^T e^{L(T-t)} L ||\hat{\xi} - \xi^*|| dt$$

If we assume that the initial condition is perfect, i.e., $\mathcal{E}(x,0) = 0$, then

$$||\mathcal{E}(x,T)|| \leq \frac{e^{LT} - 1}{L} L ||\hat{\xi} - \xi^*|| \leq e^{LT} ||\hat{\xi} - \xi^*||$$

Therefore,

$$E||\hat{x}(T) - x(T)|| \leq  \mathcal{O}\left(e^{LT} E||\hat{\xi} - \xi^*||\right) = \mathcal{O}\left(e^{LT} s \sqrt{\frac{p}{n}}\right)$$

To get a high-probability bound, 
we wish to find a bound on $||\lrp{\Theta(\mX)^\top\Theta(\mX)}^{-1}\Theta(\mX)^\top\epsilon||_2$. 
Notice that this is a Gaussian random vector. 
Therefore, for any $\delta > 0$, with probability $1 - \delta$, we have

\begin{align}
\lrn{\lrp{\Theta(\mX)^\top\Theta(\mX)}^{-1}\Theta(\mX)^\top\epsilon}_2 \leq \sqrt{2 s^2 p \lambda_{\max}( \lrp{\Theta(\mX)^\top\Theta(\mX)}^{-1}) \log(1/\delta)}
\end{align}

By the lowest eigenvalue assumption, the smallest eigenvalue of $\Theta(\mX)^T\Theta(\mX)$ is bounded below by $cn$. Therefore, the largest eigenvalue of $\lrp{\Theta(\mX)^T\Theta(\mX)}^{-1}$ is bounded above by $\frac{1}{cn}$.

Substituting this into the bound, we get that with probability at least $1 - \delta$,
$$||\hat{\xi} - \xi^*||_2 \leq \sqrt{\frac{2 s^2 p}{cn} \log(1/\delta)} = s \sqrt{\frac{2 p\log(1/\delta)}{cn}}.$$

Therefore, we have that with probability at least $1-\delta$, 
\begin{align}
    \lrn{\hat{x}(T) - x(T)}_2 \leq  \mathcal{O}\left(e^{LT} s \sqrt{\frac{p\log\left(\frac{1}{\delta}\right)}{n}}\right).
\end{align}

\end{proof}

Although the error bound grows exponentially with $e^{LT}$, this growth can be practically small because $\Theta(\vx)$ is bounded, resulting in a small Lipschitz constant $L$ locally. This makes the error practically behave on the order of $\mathcal{O}\left(LTs\sqrt{\frac{p}{n}}\right)$.

\subsection{The error of neural networks}

In a feed-forward network setting, the optimal functional $\theta^*(\cdot)\in\mathcal{F}_{\text{ReLU}}$ can be parameterized from the following:
\begin{align}
    \theta^*(x)=\sigma_k\lrp{\mW_L^*\sigma_{k-1}\lrp{\mW_{k-1}^*\cdots\sigma_1\lrp{\mW_1^* \vx}}}.
\end{align}

This functional learning problem shrinks down to a parametric learning problem as well. The empirical rish minimizer is defined as
\begin{align}
    \hat\theta = \argmin_{\theta} \frac{1}{n}\sum_{i=1}^n \ell(f(\vx_i), \theta(\vx_i)),
\end{align}
where $\ell(\cdot)$ is the loss function, and $f(x_i)$ is the target value.

Apart from the SINDy-class, the empirical risk minimizer has to be solved via numerical optimization procedures. Additionally, as the loss landscape is non-convex, two-layer ReLU networks are prone to overfitting, and could perform poorly in extrapolation.
Fig.~\ref{fig:GRU_sine} shows how the extrapolation of neural networks can be suboptimal for learning dynamical systems. 
In the following, we present the error analysis for neural networks in dynamical system learning. 
\begingroup
\def\thetheorem{\ref{thm:nn_error}}
\begin{theorem}
    Suppose we have a neural network functional with $k$ layers of ReLU activation functions and parameters $\theta=\lrp{\mW_1, \dots, \mW_k}$, which computes functions
    \begin{align}
        f(\vx;\theta)=\sigma_k\lrp{\mW_k\sigma_{k-1}\lrp{\mW_{k-1}\cdots\sigma_1\lrp{\mW_1 \vx}}}.
    \end{align}
    Assume that the weights are bounded such that $\lrn{\mW_i}_F \leq B$ for all $i$.
        Then, with probability at least $1-\delta$, the error in predicting the dynamical system after $H = T/\Delta t$ steps is bounded by
    \begin{align}
        ||\hat{x}(T) - x(T)||_2 \leq  \mathcal{O}\left((\log n)^4 B^{k(H+1)} \sqrt{\frac{k}{n}} + \frac{\log(1/\delta)}{n}\right).
    \end{align}
\end{theorem}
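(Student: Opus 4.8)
The plan is to combine a statistical generalization bound for the one-step predictor with a deterministic error-amplification argument through the $H$-step autoregressive rollout, mirroring the structure of the proof of Theorem~\ref{thm:xi_error} but replacing the closed-form least-squares analysis with Rademacher-complexity control of the nonconvex class $\mathcal{F}_{\text{ReLU}}$. Write $f^*$ for the target one-step map and $\hat f = f(\cdot;\hat\theta)$ for the empirical risk minimizer, and set the one-step error $\epsilon_1 := \lrn{\hat f - f^*}_{L^2(P)}$. The first half of the argument bounds $\epsilon_1$; the second half propagates it forward through the simulation.

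For the statistical part, I would use symmetrization to bound the excess risk of $\hat\theta$ by (a constant times) the empirical Rademacher complexity $\mathfrak{R}_n(\mathcal{F}_{\text{ReLU}})$ plus a deviation term. The key estimate is a bound on $\mathfrak{R}_n(\mathcal{F}_{\text{ReLU}})$ for depth-$k$ networks whose layer weights satisfy $\lrn{\mW_i}_F \le B$: either a layer-peeling argument (Talagrand's contraction lemma applied recursively through the ReLU nonlinearities, with the $2^k$-type factor normalized away in the style of size-independent bounds) or a Dudley entropy-integral bound built on spectral-norm covering numbers à la Bartlett--Foster--Telgarsky. Both routes give $\mathfrak{R}_n(\mathcal{F}_{\text{ReLU}}) = O\!\big(B^{k}\sqrt{k/n}\big)$ up to polylogarithmic factors in $n$, which I collect into the $(\log n)^4$ prefactor. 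Converting this to an $L^2(P)$-bound on $\hat f - f^*$ requires a fast-rate (local Rademacher / Bernstein-condition) argument: under near-realizability the excess risk is controlled by the \emph{square} of the complexity plus a $\log(1/\delta)/n$ term, so that $\epsilon_1$ inherits the $O(B^k\sqrt{k/n})$ scaling of $\mathfrak{R}_n$ itself rather than its square root. This yields, with probability at least $1-\delta$,
\begin{align}
\epsilon_1 \;=\; O\!\left((\log n)^4\, B^{k}\sqrt{\tfrac{k}{n}} \;+\; \tfrac{\log(1/\delta)}{n}\right).
\end{align}

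For the rollout, I would use that a $k$-layer ReLU network is Lipschitz with constant at most $\prod_{i=1}^k \lrn{\mW_i}_{\mathrm{op}} \le \prod_{i=1}^k \lrn{\mW_i}_F \le B^k$, since each ReLU is $1$-Lipschitz and Lipschitz constants multiply under composition. Writing $e_t := \lrn{\hat x_t - x_t}$ for the rollout error and comparing $\hat x_{t+1} = \hat f(\hat x_t)$ with $x_{t+1} = f^*(x_t)$ via the triangle inequality gives the recursion $e_{t+1} \le B^k e_t + \epsilon_1$ — the projection-back-into-the-domain assumption from the Stability remark is exactly what keeps $\hat x_t$ in the region where the bound on $\epsilon_1$ applies. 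Assuming a perfect initial condition $e_0 = 0$ as in Theorem~\ref{thm:xi_error} and unrolling $H = T/\Delta t$ steps yields $e_H \le \frac{B^{kH}-1}{B^k-1}\,\epsilon_1 \lesssim B^{kH}\,\epsilon_1$. Substituting the bound on $\epsilon_1$ produces the claimed $B^{k(H+1)}$ amplification and the stated estimate.

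The main obstacle I expect is the Rademacher-complexity estimate for the deep ReLU class: the naive contraction-based peeling bound carries an exponential-in-depth constant that has to be killed by the right normalization, so one needs either the careful homogeneity argument of Golowich--Rakhlin--Shamir-type bounds or the spectral covering-number machinery, and threading the polylogarithmic factors correctly to land exactly on $(\log n)^4$ is delicate bookkeeping. A secondary subtlety is that the squared loss on an \emph{a priori} unbounded network output is unbounded, so obtaining the fast $\log(1/\delta)/n$ deviation term (rather than a crude $\sqrt{\log(1/\delta)/n}$ Hoeffding term) genuinely requires truncation and local-Rademacher arguments; and the rollout recursion itself is only meaningful because of the projection assumption, without which the controlled Lipschitz amplification $B^{kH}$ would have to be replaced by uncontrolled extrapolation error outside the training domain.
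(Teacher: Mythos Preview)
Your proposal is correct and uses the same ingredients as the paper --- the Golowich--Rakhlin--Shamir-type Rademacher bound $\mathfrak{R}_n(\mathcal{F}_{\text{ReLU}}) = O(B^k\sqrt{k/n})$, the Lipschitz constant $B^k$ for a depth-$k$ ReLU network, and the Bartlett-style generalization inequality that supplies the $(\log n)^4$ and $\log(1/\delta)/n$ terms --- but it assembles them in a different order. You bound the \emph{one-step} generalization error $\epsilon_1$ first and then push it forward deterministically through the rollout recursion $e_{t+1}\le B^k e_t + \epsilon_1$, unrolling to $e_H \lesssim B^{kH}\epsilon_1$; this mirrors the structure of Theorem~\ref{thm:xi_error} exactly. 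The paper instead treats the $H$-step autoregressive predictor as a single composed function class $\mathcal{F}_{\text{ReLU}}\circ\cdots\circ\mathcal{F}_{\text{ReLU}}$, bounds its Rademacher complexity directly via iterated contraction (each composition multiplies $\mathfrak{R}_n$ by the Lipschitz constant $B^k$, giving $\mathfrak{R}_n \lesssim B^{kH}\cdot B^k\sqrt{k/n}$), and applies the generalization bound once to the composed class. Your ``generalize-then-compose'' route buys a cleaner separation of the statistical and dynamical arguments and makes the parallel with Theorem~\ref{thm:xi_error} transparent, at the cost of needing the fast-rate/local-Rademacher step to avoid losing a square root when passing from excess squared-loss risk to $\epsilon_1$; the paper's ``compose-then-generalize'' route avoids that step entirely because the $H$-step loss is what is bounded from the outset, though it is somewhat less explicit about where the $B^{kH}$ amplification originates. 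Both routes share the same informal step of passing from an $L^2(P)$-type control to a trajectory-wise bound, which is where the projection/stability assumption is doing real work.
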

\addtocounter{theorem}{-1}
\endgroup

\begin{proof}
    From Lemma~\ref{lemma:rademacher_generror}, with probability at least $1-\delta$, we have
    \begin{align}
        L(f) \leq (1+c)\hat{L}(f) + c(\log n)^4 \mathcal{R}_n(\mathcal{F}_{\text{ReLU}})+\frac{c\log(1/\delta)}{n},
    \end{align}
    where $L(f)$ is the true risk and $\hat{L}(f)$ is the empirical risk.

    We aim to bound the generalization error after $H = T/\Delta t$ steps. From Lemma~\ref{lemma:rademacher_nn}, the Rademacher complexity after $H$ steps is bounded by:
    \begin{align}
        \mathcal{R}_n(\mathcal{F}_{\text{ReLU}}\circ\dots\circ \mathcal{F}_{\text{ReLU}})\lesssim B^{kH}\frac{\sqrt{k}B^k}{\sqrt{n}} = B^{k(H+1)}\sqrt{\frac{k}{n}}.
    \end{align}

    Substituting this bound on the Rademacher complexity into the generalization error bound from Lemma~\ref{lemma:rademacher_generror}, we obtain:
    \begin{align}
        L(f) \leq (1+c)\hat{L}(f) + c(\log n)^4 B^{k(H+1)}\sqrt{\frac{k}{n}}+\frac{c\log(1/\delta)}{n}.
    \end{align}

    The true risk $L(f)$ represents the expected squared error over the true data distribution, and $\hat{L}(f)$ is the empirical risk on the training data.  Therefore, $L(f)$ can be interpreted as $E[||\hat{x}(T) - x(T)||^2]$.  However, we want to bound $||\hat{x}(T) - x(T)||$.

    Assuming that we have minimized the empirical risk $\hat{L}(f)$ sufficiently such that it is negligible, and taking the square root of the bound (and absorbing constants), we get that with probability at least $1-\delta$:
    \begin{align}
        ||\hat{x}(T) - x(T)||_2 \leq  \mathcal{O}\left((\log n)^2 B^{k(H+1)} \sqrt[4]{\frac{k}{n}} + \sqrt{\frac{\log(1/\delta)}{n}}\right).
    \end{align}
    However, a more direct application of Lemma 2.2 gives a better bound.  Specifically, the loss function is $(\hat{y}-y)^2$, which is bounded by 4 since $\hat{y}, y \in [-1, 1]$.  Thus, we can directly use the bound from Lemma 2.2.

    Therefore, with probability at least $1-\delta$:
    \begin{align}
        ||\hat{x}(T) - x(T)||_2 \leq  \mathcal{O}\left((\log n)^4 B^{k(H+1)} \sqrt{\frac{k}{n}} + \frac{\log(1/\delta)}{n}\right).
    \end{align}
\end{proof}

\subsubsection{Technical Lemmas for the Proof of Thm.~\ref{thm:nn_error}}
We introduce the following lemma from~\citep{bartlett2021deep} and rewrite it within our context. 

\begin{lemma}[Thm.~2.11 in~\citep{bartlett2021deep}]
    Let $\sigma:\real\to\real$ be the ReLU activation function. Consider a feed-forward network with $k$ layers of these nonlinearities and parameters $\theta=\lrp{\mW_1, \dots, \mW_L}$, which computes functions
    \begin{align}
        f(\vx;\theta)=\sigma_k\lrp{\mW_k\sigma_{k-1}\lrp{\mW_{k-1}\cdots\sigma_1\lrp{\mW_1 \vx}}}.
    \end{align}
    Define the class of functions on the unit Euclidean ball in $\real^p$, 
    \begin{align}
        \mathcal{F}_B=\left\{f(\cdot;\theta):\lrn{\mW_i}_F\leq B\right\},
    \end{align}
    where $\lrn{\mW_i}_F$ is the Frobenius norm of $\mW_i$.
        Then, we have the Rademacher complexity of $\mathcal{F}$ is bounded by
    \begin{align}
        \mathcal{R}_n(\mathcal{F}_B)\lesssim \frac{\sqrt{k}B^k}{\sqrt{n}}.
    \end{align} 
\end{lemma}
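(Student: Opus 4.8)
This is the standard Frobenius-norm Rademacher bound for deep ReLU networks, so the plan is to reconstruct the usual layer-peeling argument and pinpoint where the $\sqrt{k}$ factor (rather than a crude $2^k$) comes from. First I would reduce to controlling the empirical Rademacher complexity $\widehat{\mathcal{R}}_n(\mathcal{F}_B) = \tfrac{1}{n}\E{\sup_{f\in\mathcal{F}_B}\sum_{i=1}^{n}\epsilon_i f(\vx_i)}$ over sample points $\vx_1,\dots,\vx_n$ in the unit ball with $\epsilon_i$ independent Rademacher signs; since the data radii are at most $1$, this bounds $\mathcal{R}_n(\mathcal{F}_B)$ in expectation. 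The base case $k=1$ is the class $\{\vx\mapsto\sigma(\mW_1\vx):\lrn{\mW_1}_F\leq B\}$; because $\sigma$ is $1$-Lipschitz with $\sigma(0)=0$, one application of the (vector-valued) contraction inequality strips the ReLU up to a universal constant, and then $\E{\lrn{\sum_i\epsilon_i\vx_i}_2}\leq\sqrt{\sum_i\lrn{\vx_i}_2^2}\leq\sqrt{n}$ together with Cauchy--Schwarz yields $\widehat{\mathcal{R}}_n\lesssim B/\sqrt{n}$.

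\textbf{Peeling.} Writing $\mathcal{G}_j$ for the vector-valued depth-$j$ subnetwork class, a depth-$(j+1)$ map has the form $\vx\mapsto\sigma(\mW_{j+1}g(\vx))$ with $g\in\mathcal{G}_j$ and $\lrn{\mW_{j+1}}_F\leq B$. I would combine two moves: (i) strip the ReLU layer via a vector-valued contraction bound (it is $1$-Lipschitz and vanishes at $0$); and (ii) absorb the linear layer, using that every output coordinate of $\mW_{j+1}g$ is an inner product of $g(\vx)$ against a vector of Euclidean norm at most $\lrn{\mW_{j+1}}_F\leq B$, so passing through it inflates the Rademacher complexity by a factor $B$. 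Iterating these moves $k$ times already gives $\widehat{\mathcal{R}}_n(\mathcal{F}_B)\lesssim (2B)^k/\sqrt{n}$ --- the easy version of the claim, but with a $2^k$ in place of $\sqrt{k}$.

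\textbf{Sharpening to $\sqrt{k}$, and the main obstacle.} The delicate step --- and the reason one quotes Thm.~2.11 of~\citep{bartlett2021deep} rather than redoing it --- is eliminating the spurious $2^k$. I would follow the Golowich--Rakhlin--Shamir exponential-moment technique: for any $\lambda>0$,
\[
n\,\widehat{\mathcal{R}}_n(\mathcal{F}_B)\;\leq\;\frac{1}{\lambda}\log\E{\sup_{f\in\mathcal{F}_B}\exp\!\lrp{\lambda\sum_{i=1}^{n}\epsilon_i f(\vx_i)}},
\]
then peel one layer at a time \emph{inside} the exponential --- which avoids the factor-of-two loss incurred when one passes to absolute values --- picking up a factor $B$ and a controlled sub-Gaussian correction at each layer, and only at the very end optimize over $\lambda$; that optimization collapses the $k$ accumulated corrections into a single $\sqrt{k}$. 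Carefully tracking the constants through this recursion so that the $\lambda$-optimization genuinely produces $\sqrt{k}$ (and not $k$, or worse, a residual $2^k$) is the principal technical hurdle; the rest is bookkeeping, and the final step of translating $\widehat{\mathcal{R}}_n$ back to $\mathcal{R}_n$ is immediate since the data lie in the unit ball.
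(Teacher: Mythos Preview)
Your proposal is correct, but there is nothing to compare it against: the paper does not prove this lemma at all. Immediately after stating it, the paper simply writes ``This result is from~\citep{golowich2018size} and~\citep{bartlett2021deep}'' and moves on to use it as a black box in the proof of Theorem~\ref{thm:nn_error}. Your sketch accurately reconstructs the Golowich--Rakhlin--Shamir exponential-moment argument that is the actual content of the cited references, including the key point that peeling layers \emph{inside} the log-moment-generating function is what replaces the naive $2^k$ by $\sqrt{k}$; so you have supplied precisely the proof the paper outsources.
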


This result is from~\citep{golowich2018size} and~\citep{bartlett2021deep}. We then introduce the following connection between the Rademacher complexity and the generalization error.

\begin{lemma}[Thm.~2.2 in~\citep{bartlett2021deep}]
    \label{lemma:rademacher_generror}
    Let $\mathcal{F}_B$ be a class of neural networks with weights bounded by $B$, and let $f \in \mathcal{F}_B$. 
        For the mean-squared loss that $\ell(\hat{y}, y)=(\hat{y}-y)^2$, and for any distribution on $\mathcal{X}\times [-1,1]$ where the output is bounded. 
    With probability at least $1-\delta$, there exists a constant $c>0$ such that $\forall f\in\mathcal{F}_B$:
    \begin{align}
        L(f) \leq (1+c)\hat{L}(f) + c(\log n)^4 \mathcal{R}_n(\mathcal{F}_B)+\frac{c\log(1/\delta)}{n},
    \end{align}
    where $L(f)$ is the true risk, $\hat{L}(f)$ is the empirical risk, $n$ is the number of data points, and $B$ is a bound on the weights of the neural network.
\end{lemma}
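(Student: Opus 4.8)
The plan is to recognize Lemma~\ref{lemma:rademacher_generror} as an \emph{optimistic-rate} (local Rademacher) generalization bound specialized to the squared loss, and to prove it by combining three ingredients: (i) the boundedness and smoothness of the loss, which yield a self-bounding variance--mean relationship; (ii) symmetrization together with the Ledoux--Talagrand contraction inequality to pass from the loss class $\ell\circ\mathcal{F}_B$ down to the base class $\mathcal{F}_B$; and (iii) Talagrand's concentration inequality for suprema of empirical processes to upgrade the expected gap to a high-probability statement with the fast $\log(1/\delta)/n$ tail. The multiplicative factor $(1+c)$ in front of $\hat{L}(f)$ and the additive $c(\log n)^4\,\mathcal{R}_n(\mathcal{F}_B)$ term are precisely the signatures of this machinery, which is why the hypotheses are stated as a bounded, smooth loss on $\mathcal{X}\times[-1,1]$.

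First I would establish the loss-level facts. Because $\hat{y},y\in[-1,1]$, the loss $\ell(\hat{y},y)=(\hat{y}-y)^2$ is bounded by $b=4$ and is $2$-smooth in $\hat{y}$ (its derivative $2(\hat{y}-y)$ is $2$-Lipschitz and has magnitude at most $4$ on the relevant range). Smooth nonnegative losses are \emph{self-bounding}, $(\partial_{\hat{y}}\ell)^2\le 4\,\ell$, so on this range the loss is $4$-Lipschitz in its first argument and, crucially, its second moment is controlled by its mean: since $\ell\le b$ pointwise, $\E{\ell^2}\le b\,\E{\ell}$, giving $\Var(\ell(f))\le 4\,L(f)$. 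This variance--mean domination is exactly what later converts a naive $\sqrt{\log(1/\delta)/n}$ tail into the fast $\log(1/\delta)/n$ tail.

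Next I would control the expected empirical process and then concentrate it. By the standard symmetrization lemma, $\E{\sup_{f\in\mathcal{F}_B}\lrp{L(f)-\hat{L}(f)}}\le 2\,\mathcal{R}_n(\ell\circ\mathcal{F}_B)$, and by contraction with the $4$-Lipschitz loss, $\mathcal{R}_n(\ell\circ\mathcal{F}_B)\le 4\,\mathcal{R}_n(\mathcal{F}_B)$. I would then apply the Bousquet form of Talagrand's inequality to the centred supremum, bounding its deviation from the mean by $\sqrt{2\sigma^2\log(1/\delta)/n}+c\,b\,\log(1/\delta)/n$ with $\sigma^2\lesssim\sup_f L(f)$. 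Finally I would absorb the variance cross term via the arithmetic--geometric inequality $\sqrt{4\,L(f)\,\log(1/\delta)/n}\le \epsilon\,L(f)+\tfrac{c}{\epsilon}\,\log(1/\delta)/n$, move the $\epsilon\,L(f)$ piece to the left-hand side, and relate $L(f)$ back to $\hat{L}(f)$ through the same concentration, producing the claimed $(1+c)\hat{L}(f)$ form.

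The main obstacle is obtaining the three features simultaneously: the multiplicative $(1+c)$ on $\hat{L}(f)$, a bound that is \emph{linear} in $\mathcal{R}_n(\mathcal{F}_B)$ rather than quadratic, and the $(\log n)^4$ factor. Achieving the linear-in-$\mathcal{R}_n$ term together with the multiplicative empirical-risk factor requires a \emph{peeling}/local-Rademacher stratification over a geometric sequence of risk levels, not a single symmetrization step; each peeling level and the associated Dudley chaining integral for the smooth loss contribute logarithmic overhead, and it is the accounting of these $O(\log n)$ levels that yields the specific power $(\log n)^4$. Since the statement is quoted verbatim as Thm.~2.2 of~\citep{bartlett2021deep}, the rigorous route is to invoke that theorem after verifying its hypotheses; the sketch above indicates why the squared loss on bounded outputs satisfies them exactly.
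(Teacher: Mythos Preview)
The paper does not prove this lemma at all: it is stated as a technical lemma in the appendix with the attribution ``Thm.~2.2 in~\citep{bartlett2021deep}'' and is used as a black box in the proof of Theorem~\ref{thm:nn_error}. There is no accompanying argument, not even a hypothesis check.

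Your proposal therefore goes well beyond what the paper does. The sketch you give---self-bounding variance for the smooth bounded loss, symmetrization plus contraction to reduce to $\mathcal{R}_n(\mathcal{F}_B)$, Talagrand/Bousquet concentration, and a peeling/local-Rademacher stratification to obtain the multiplicative $(1+c)\hat{L}(f)$ together with the $(\log n)^4$ factor---is the right outline of how such optimistic-rate bounds are actually established in the cited reference, and your closing sentence already identifies the paper's own ``proof'': invoke the external theorem directly. In short, your approach is correct and strictly more detailed than the paper's treatment; the only thing to note is that for the purposes of this manuscript no argument is required beyond the citation.
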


An important observation here is that, even with a fixed set of parameters, for autoregressive neural networks, the rademacher complexity grows massively with longer time horizons. In other words, if we want to predict $T$ steps ahead, the network size will grow exponentially with $T$.
We use the following lemma called the contraction principle to analyze the expected network growth in this scenario. 

\begin{lemma}
    \label{lem:contraction}
    Let $\mathcal{F}_1$ be a class of functions from $\real^p$ to $\real^q$ with Lipschitz constant $\alpha$.
        Let $\mathcal{F}_2$ be a class of functions from $\real^q$ to $\real^r$.
        Then, the contraction principle describes that the Rademacher complexity of $\mathcal{F}_1\circ\mathcal{F}_2$ is 
    \begin{align}
        R_n(\mathcal{F}_1\circ \mathcal{F}_2) \leq \alpha\mathcal{R}_n(\mathcal{F}_2).
    \end{align}
\end{lemma}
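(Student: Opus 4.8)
The plan is to prove this by the classical ``peeling'' argument of Ledoux and Talagrand, which handles the Rademacher signs one coordinate at a time. Fix the sample points $z_1,\dots,z_n$ and work with the empirical Rademacher complexity $\mathcal{R}_n(\mathcal{G}) = \Ep{\sigma}{\sup_{g\in\mathcal{G}}\frac1n\sum_{i=1}^n\sigma_i g(z_i)}$ conditioned on the data; by the tower property it suffices to establish the inequality for each fixed realization of the data, the outer expectation then being taken at the end. Write $\phi$ for a generic element of $\mathcal{F}_1$ (which is $\alpha$-Lipschitz) and $f$ for a generic element of $\mathcal{F}_2$, and interpret $\mathcal{F}_1\circ\mathcal{F}_2 = \{\phi\circ f : \phi\in\mathcal{F}_1,\ f\in\mathcal{F}_2\}$ acting on the input space.

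First I would condition on $\sigma_2,\dots,\sigma_n$ and isolate the $\sigma_1$ term: using the symmetrization identity $\Ep{\sigma_1}{\sup_{\phi,f}\lrp{\sigma_1\phi(f(z_1)) + u(\phi,f)}} = \tfrac12\sup_{\phi,f,\phi',f'}\lrp{\phi(f(z_1)) - \phi'(f'(z_1)) + u(\phi,f) + u(\phi',f')}$, where $u$ collects the remaining (frozen-sign) terms. Because the sup is symmetric under swapping the primed and unprimed pair, the first term can be replaced by $\alpha\lrn{f(z_1) - f'(z_1)}$ via Lipschitzness and then, after a second symmetrization, by $\alpha(f(z_1) - f'(z_1))$, which re-expresses the display as $\Ep{\sigma_1}{\sup_{\phi,f}\lrp{\alpha\sigma_1 f(z_1) + u(\phi,f)}}$. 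In other words, $\phi$ has been ``peeled off'' the first coordinate at the cost of a factor $\alpha$. Iterating the same replacement for $i=2,\dots,n$ removes every remaining occurrence of $\phi$, and taking expectations over all the signs (and over the data) yields $\mathcal{R}_n(\mathcal{F}_1\circ\mathcal{F}_2) \le \alpha\,\mathcal{R}_n(\mathcal{F}_2)$.

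The main obstacle is that $\mathcal{F}_1$ is a class of \emph{vector-valued} $\alpha$-Lipschitz maps, so the step that turns $\phi(f(z_1)) - \phi'(f'(z_1))$ into $\alpha(f(z_1)-f'(z_1))$ is not literally a scalar inequality: it must be carried out in the relevant norm and paired with a vector-valued notion of Rademacher complexity on the right. I would resolve this in one of two ways: (i) invoke Maurer's vector-contraction lemma, which delivers exactly this form (at most up to an absolute multiplicative constant, which is harmless in the downstream applications); or (ii), when only a scalar functional of the output ultimately enters the loss — as in the layer-composition estimate used in the proof of Theorem~\ref{thm:nn_error}, where the prediction feeds into a scalar loss — reduce directly to the classical scalar Ledoux--Talagrand statement. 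A minor secondary point is to check that all suprema and expectations are well defined (measurability and boundedness), which is routine under the standing assumptions of bounded weights and bounded input domain. With the contraction principle in hand, applying it once per network layer gives the composition bound $\mathcal{R}_n(\mathcal{F}_{\text{ReLU}}\circ\cdots\circ\mathcal{F}_{\text{ReLU}}) \lesssim B^{kH}\,\mathcal{R}_n(\mathcal{F}_{\text{ReLU}})$ invoked in the proof of Theorem~\ref{thm:nn_error}.
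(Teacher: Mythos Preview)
The paper does not actually prove this lemma: immediately after the statement it writes ``The proof of this lemma can be found in~\citep{bartlett2002rademacher,ledoux2013probability}'' and moves on. Your peeling argument is precisely the Ledoux--Talagrand proof that those references contain, so in that sense you are supplying what the paper merely cites.

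Your identification of the vector-valued obstacle is apt and, if anything, more careful than the paper itself. As stated, the lemma has $\mathcal{F}_1:\real^p\to\real^q$ Lipschitz composed with $\mathcal{F}_2:\real^q\to\real^r$, which is not the literal setting of the scalar Ledoux--Talagrand inequality; the paper glosses over this, while you correctly note that one either invokes Maurer's vector-contraction lemma (incurring a harmless absolute constant) or reduces to the scalar case because the downstream use in Theorem~\ref{thm:nn_error} ultimately feeds into a scalar loss. Either resolution is adequate for how the lemma is applied in Lemmas~\ref{lemma:lipschitz_nn} and~\ref{lemma:rademacher_nn}, where only the layer-wise Lipschitz factor $B^k$ is extracted. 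One small quibble: in your symmetrization step you replace $\phi(f(z_1))-\phi'(f'(z_1))$ by $\alpha\lrn{f(z_1)-f'(z_1)}$, but strictly speaking the classical argument fixes a \emph{single} Lipschitz map $\phi$ rather than optimizing over a class $\mathcal{F}_1$ of them; allowing $\phi\neq\phi'$ in the two halves of the symmetrization does not immediately collapse to the Lipschitz bound. For the paper's purposes this is moot, since each application (one ReLU layer, or one unroll of the network) uses a fixed composition, not a supremum over an outer class.
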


The proof of this lemma can be found in~\citep{bartlett2002rademacher,ledoux2013probability}.
From this Lemma, we further control the Lipschitz constant for neural network in the following: 
\begin{lemma}[Proposition~2 in~\citep{virmaux2018lipschitz}]
    \label{lemma:lipschitz_nn}
        Suppose we have a neural network functional with $k$ layers of ReLU activation functions and parameters $\theta=\lrp{\mW_1, \dots, \mW_L}$, which computes functions
        \begin{align}
            f(\vx;\theta)=\sigma_k\lrp{\mW_k\sigma_{k-1}\lrp{\mW_{k-1}\cdots\sigma_1\lrp{\mW_1 \vx}}}.
        \end{align}
        The Lipschitz constant of the network is bounded by 
        \begin{align}
            \prod_{i=1}^k \lrn{\mW_i}_2 \leq B^k, 
        \end{align}
        where $\lrn{\mW_i}_2$ is the spectral norm of $\mW_i$.
\end{lemma}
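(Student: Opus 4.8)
The plan is to read $f(\cdot;\theta)$ as the composition $f = \sigma_k \circ A_k \circ \sigma_{k-1} \circ A_{k-1} \circ \cdots \circ \sigma_1 \circ A_1$, where $A_i(\vx) = \mW_i\vx$ is the $i$-th affine map, and to use submultiplicativity of Lipschitz constants under composition together with two elementary estimates on the individual factors.

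First I would establish the two building blocks. (i) The affine map $A_i$ is Lipschitz in the Euclidean norm with constant exactly $\lrn{\mW_i}_2$, since by definition of the spectral (operator) norm, $\lrn{\mW_i\vu - \mW_i\vv}_2 = \lrn{\mW_i(\vu-\vv)}_2 \le \lrn{\mW_i}_2\,\lrn{\vu-\vv}_2$. (ii) The ReLU nonlinearity, applied coordinatewise, is $1$-Lipschitz in the Euclidean norm: coordinatewise one has $|\max(0,a)-\max(0,b)| \le |a-b|$, so summing squares over coordinates gives $\lrn{\sigma_i(\vu)-\sigma_i(\vv)}_2 \le \lrn{\vu-\vv}_2$.

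Next I would chain these bounds layer by layer. Fixing inputs $\vx,\vx'$ and pushing them through the network, each application of $A_i$ multiplies the running bound on the Euclidean distance of the intermediate activations by $\lrn{\mW_i}_2$ while each application of $\sigma_i$ leaves it unchanged; telescoping from the first layer to the last yields $\lrn{f(\vx;\theta)-f(\vx';\theta)}_2 \le \big(\prod_{i=1}^k \lrn{\mW_i}_2\big)\lrn{\vx-\vx'}_2$. Hence the Lipschitz constant of the network is at most $\prod_{i=1}^k \lrn{\mW_i}_2$. Finally I would convert spectral norms into the stated bound using the standard inequality $\lrn{\mW_i}_2 \le \lrn{\mW_i}_F$ (the spectral norm is the largest singular value, the Frobenius norm is the $\ell_2$ norm of the vector of singular values), so that under the hypothesis $\lrn{\mW_i}_F \le B$ we get $\prod_{i=1}^k \lrn{\mW_i}_2 \le \prod_{i=1}^k \lrn{\mW_i}_F \le B^k$, which is the claimed bound.

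There is no genuine obstacle: the argument is essentially bookkeeping. The points worth stating carefully are that ``Lipschitz constant'' must be measured in the Euclidean norm at every layer, so the operator norm in (i) really is the spectral norm; that the coordinatewise ReLU is globally $1$-Lipschitz and not merely $1$-Lipschitz per coordinate; and that the quantity $\prod_{i=1}^k\lrn{\mW_i}_2$, although generally a loose overestimate of the true input-dependent Lipschitz constant studied in~\citep{virmaux2018lipschitz}, is exactly what is needed for the coarse factor $B^k$ that feeds into the Rademacher-complexity computation and the prediction-error bound of Theorem~\ref{thm:nn_error}.
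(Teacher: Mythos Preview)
Your proposal is correct and follows essentially the same approach as the paper: both argue that ReLU is $1$-Lipschitz, that each linear layer has Lipschitz constant $\lrn{\mW_i}_2$, that composition yields the product $\prod_i \lrn{\mW_i}_2$, and finally that $\lrn{\mW_i}_2 \le \lrn{\mW_i}_F \le B$. The only cosmetic difference is that the paper phrases the composition step as an appeal to the contraction principle (Lemma~\ref{lem:contraction}), whereas you prove submultiplicativity of Lipschitz constants directly by telescoping; your version is slightly more self-contained.
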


\begin{proof}
    Since the ReLU activation function has a Lipschitz constant of 1, the Lipschitz constant of the $i$-th layer is bounded by $\lrn{\mW_i}_2$.
        Therefore, from Lemma~\ref{lem:contraction}, for a $k$-layer neural network with weights $\theta=\lrp{\mW_1, \dots, \mW_k}$, the Lipschitz constant of the entire network is bounded by the product of the spectral norms of the weight matrices:
    \begin{align}
        \text{Lip}(f(\vx;\theta)) \leq \prod_{i=1}^k \lrn{\mW_i}_2.
    \end{align}
        Given that $\lrn{\mW_i}_F\leq B$ for all $i$, and $\lrn{\mW_i}_2 \leq \lrn{\mW_i}_F$, we have
    \begin{align}
        \prod_{i=1}^k \lrn{\mW_i}_2 \leq \prod_{i=1}^k \lrn{\mW_i}_F \leq B^k.
    \end{align}
    This means that the Lipschitz constant of the neural network is bounded by $B^k$.
\end{proof}

From the Lemma above, we could obtain the Rademacher complexity of neural network prediction up to $H=\frac{T}{\Delta t}$ steps forward. 

\begin{lemma}
    The Rademacher complexity of the neural network functional class after $H$ forward unroll operations for $\mathcal{F}_{\text{ReLU}}$ is bounded by
    \begin{align}
        \mathcal{R}_n(\mathcal{F}_{\text{ReLU}}\circ\dots\circ \mathcal{F}_{\text{ReLU}})\lesssim B^{\frac{kT}{\Delta t}}\frac{\sqrt{k}B^k}{\sqrt{n}}.
    \end{align}
    \label{lemma:rademacher_nn}
\end{lemma}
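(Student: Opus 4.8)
\textbf{Proof proposal for Lemma~\ref{lemma:rademacher_nn}.} The plan is to iterate the contraction principle (Lemma~\ref{lem:contraction}) along the $H=T/\Delta t$ forward-unroll compositions, peeling off one copy of $\mathcal{F}_{\text{ReLU}}$ at a time and paying, for each peel, a multiplicative factor equal to the worst-case Lipschitz constant of a single network block; then close the recursion with the single-block Rademacher estimate (Thm.~2.11 of~\citep{bartlett2021deep}, i.e.\ the first lemma of this subsection).

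Concretely, I would first record the per-block Lipschitz bound: by Lemma~\ref{lemma:lipschitz_nn}, every $f(\cdot;\theta)\in\mathcal{F}_{\text{ReLU}}$ satisfies $\mathrm{Lip}(f)\leq\prod_{i=1}^{k}\lrn{\mW_i}_2\leq B^k$, using that ReLU is $1$-Lipschitz and $\lrn{\mW_i}_2\leq\lrn{\mW_i}_F\leq B$. Writing $\mathcal{G}_m$ for the $m$-fold composition $\mathcal{F}_{\text{ReLU}}\circ\cdots\circ\mathcal{F}_{\text{ReLU}}$ ($m$ blocks), I would view $\mathcal{G}_H$ as an outer block in $\mathcal{F}_{\text{ReLU}}$ applied to a function from $\mathcal{G}_{H-1}$ and invoke Lemma~\ref{lem:contraction} with outer Lipschitz constant $\alpha=B^k$, giving $\mathcal{R}_n(\mathcal{G}_H)\leq B^k\,\mathcal{R}_n(\mathcal{G}_{H-1})$. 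Induction on $m$ then yields $\mathcal{R}_n(\mathcal{G}_H)\leq (B^k)^{H-1}\,\mathcal{R}_n(\mathcal{F}_{\text{ReLU}})$, and substituting $\mathcal{R}_n(\mathcal{F}_{\text{ReLU}})\lesssim \sqrt{k}\,B^k/\sqrt{n}$ from the single-block bound gives $\mathcal{R}_n(\mathcal{G}_H)\lesssim B^{k(H-1)}\sqrt{k}\,B^k/\sqrt{n}\leq B^{kH}\,\sqrt{k}\,B^k/\sqrt{n}$, where the last inequality uses $B\geq 1$; recalling $H=T/\Delta t$ this is exactly the claimed $B^{kT/\Delta t}\,\sqrt{k}\,B^k/\sqrt{n}$ (the stated bound is thus slightly loose by one factor $B^k$, harmless under $\lesssim$). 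To legitimately apply the single-block bound, the inputs must lie in the unit Euclidean ball; I would handle this either by rescaling so the latent domain is contained in that ball, or by appealing to the ``stability'' convention stated above, under which $\hat{\vx}$ is projected back into the bounded input domain at each step, so the argument fed to the next block stays admissible.

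The main obstacle is making the repeated use of Lemma~\ref{lem:contraction} fully rigorous for \emph{vector-valued} function classes: the classical Ledoux--Talagrand contraction is scalar, and the vector version (e.g.\ Maurer's vector-contraction inequality) generally carries an extra dimension-dependent constant, so I would either absorb such a constant into the $\lesssim$ or argue coordinate-by-coordinate over the $p$ output components (changing only constants and lower-order $\log$ factors). Two further points I would flag but not belabor: (i) in the autoregressive unroll the $H$ blocks share weights, so treating them as independent classes in the peeling argument is conservative and only inflates the bound, which is fine for an upper bound; and (ii) a sharper accounting that also charges each intermediate block its own complexity yields a geometric sum $\sum_{j=0}^{H-1}B^{kj}\,\mathcal{R}_n(\mathcal{F}_{\text{ReLU}})$ which, for $B^k\geq 1$, is dominated by its top term and hence gives the same $B^{kH}$-scale leading behavior. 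Aside from pinning down the vector-contraction constant, the remaining computation is routine given Lemmas~\ref{lem:contraction} and \ref{lemma:lipschitz_nn} and the single-network Rademacher estimate.
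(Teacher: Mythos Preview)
Your proposal is correct and follows essentially the same approach as the paper: both proofs use the per-block Lipschitz bound $B^k$ from Lemma~\ref{lemma:lipschitz_nn}, iterate the contraction principle (Lemma~\ref{lem:contraction}) across the $H$ compositions, and close with the single-block Rademacher estimate $\mathcal{R}_n(\mathcal{F}_{\text{ReLU}})\lesssim\sqrt{k}B^k/\sqrt{n}$. Your write-up is in fact more careful than the paper's own proof, which does not spell out the induction, the $B\geq 1$ assumption, the vector-contraction subtlety, or the input-domain projection; the paper simply states the two ingredients and writes down the final bound.
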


\begin{proof}
    From Lemma~\ref{lemma:lipschitz_nn}, we know that the Lipschitz constant of the neural network is bounded by $B^k$. 
        Therefore, the Rademacher complexity of the neural network functional class is bounded by
    \begin{align}
        \mathcal{R}_n(\mathcal{F}_{\text{ReLU}})\lesssim \frac{\sqrt{k}B^k}{\sqrt{n}}.
    \end{align}
        From Lemma~\ref{lem:contraction}, we know that the Lipschitz constant of the network is bounded by $B^k$. 
        Therefore, the Rademacher complexity of the neural network functional class after $H$ forward unroll operations is bounded by
    \begin{align}
        \mathcal{R}_n(\mathcal{F}_{\text{ReLU}}\circ\dots\circ \mathcal{F}_{\text{ReLU}})\lesssim B^{\frac{kT}{\Delta t}}\frac{\sqrt{k}B^k}{\sqrt{n}}.
    \end{align}
\end{proof}

\section{Observation on the convex landscape}

We demonstrate our observations on the convexity of the loss landscape of SHRED. 
To visualize the loss landscape in a high-dimensional space, we utilize a popular method~\citep{li2018visualizing} which represents the landscape in the following way. 
Suppose the neural network is parameterized by $\theta$, which includes the weights and biases of all layers. 
We perturb $\theta_0$ with two random directions $\vr_x, \vr_y$ via 
\begin{align}
\theta'=\theta_0+t\cdot\alpha\vr_x+t\cdot\alpha\vr_y,
\end{align}
where $\vr_x, \vr_y$ are both normalized i.i.d. Gaussian samples. $\alpha$ is the scale of changes and $t\in [0,1]$ moves from $\theta_0$ towards the linear combination of the two random directions. 

In Fig.~\ref{fig:loss_landscapes}, we visualize the loss landscape of SHRED from different random directions using the methodology described above under various network depths and scales $\alpha$. Across all subplots, we consistently observe a clear global minimum with a convex landscape. 
Furthermore, Fig.~\ref{fig:convexity_analysis} presents an empirical convexity analysis, showing loss changes along several paths from different initializations. We evaluate convexity using randomly sampled points and find that all of them satisfy the convexity condition. 
We use a tolerance of $10^{-7}$ to counter floating-point imprecision. 

\begin{figure}[t]
    \centering
    \includegraphics[width=1.0\linewidth]{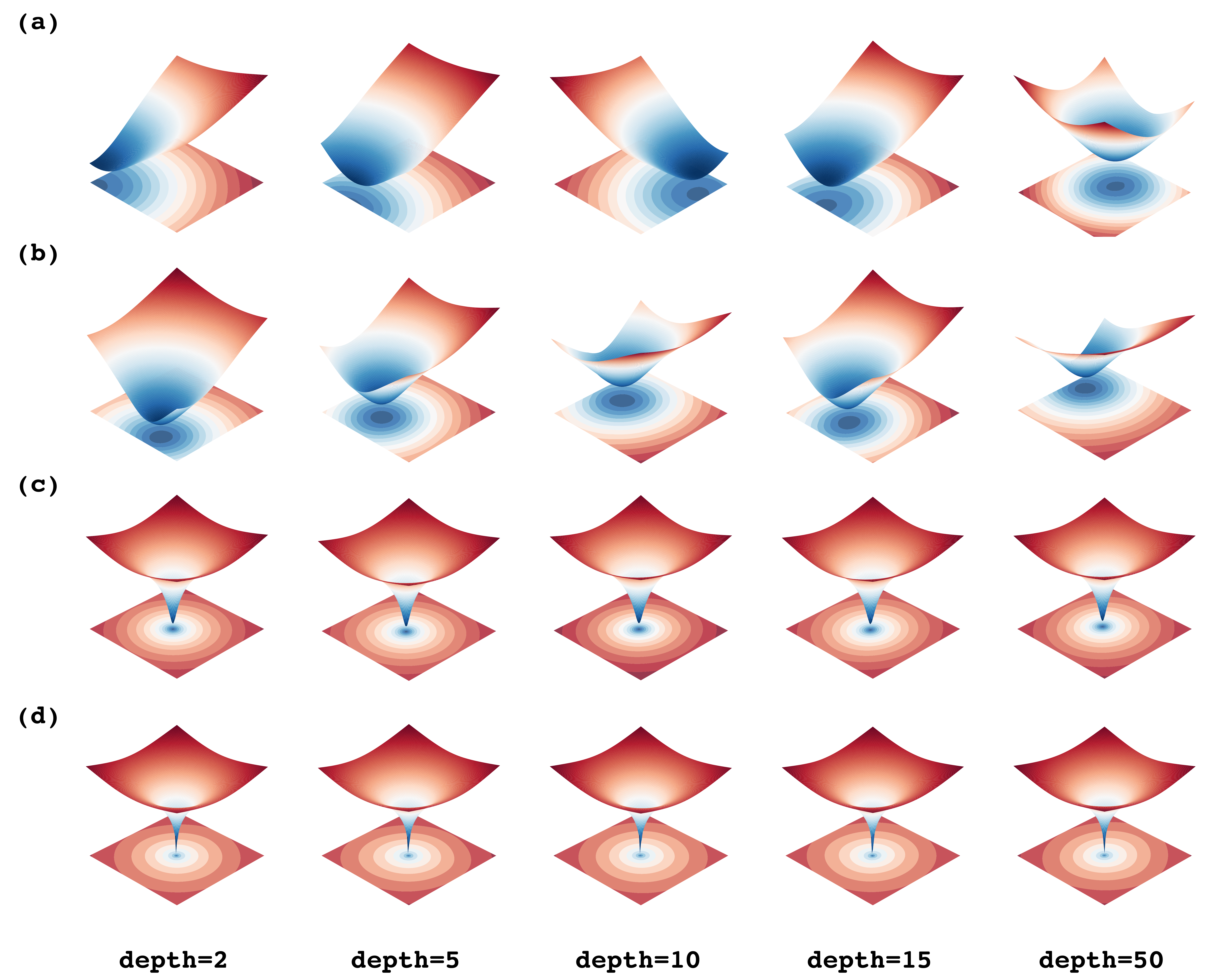}
    \caption{(a) Visualization of loss landscape from parameters ranging from $[-5,5]$. (b) Visualization of loss landscape from parameters ranging from $[-10,10]$. (c) Visualization of loss landscape from parameters ranging from $[-100,100]$. (d) Visualization of loss landscape from parameters ranging from $[-1000,1000]$. }
    \label{fig:loss_landscapes}
\end{figure}

\begin{figure}[t]
    \centering
    \includegraphics[width=1.0\linewidth]{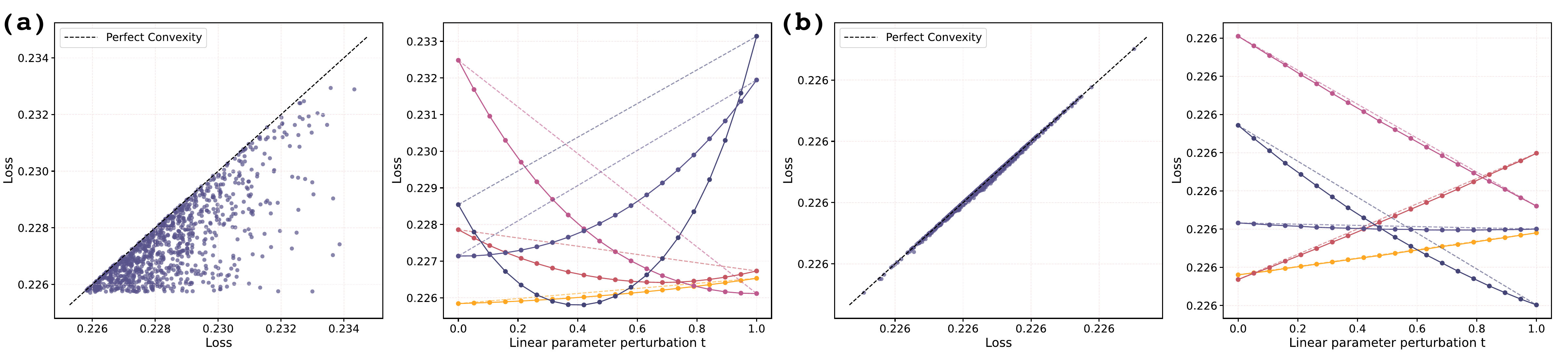}
    \caption{Visualization of loss landscapes for SHRED parameters along random directions, showing how loss values change when moving from the origin toward randomly generated points in parameter space. The figure provides a detailed view of loss behavior at different initial points. \textbf{(a)} shows the parameter range $[-100, 100]$ and \textbf{(b)} shows the parameter range $[-1,1]$.}
    \label{fig:convexity_analysis}
\end{figure}

\section{Experimental details}

\subsection{The design of SINDy unit}
In Fig.~\ref{fig:sindy_unit}, we show the design of SINDy unit. Notice how this design is similar to the skip-connection mechanism in ResNet. Here, it models the forward simulation process of ODE integration by having $\vx_{t+1}=\vx_i+\Theta(\vx)\Xi\Delta t$.

\subsection{Flow over a cylinder}
\label{app:flow_expr_detail}
In the flow over a cylinder experiment, we follow the same settings as in the prior experiments and select the latent dimension to be 4. 
The forward integration time step is set to $dt=\frac{1}{300}$ corresponding to the frame rate of 30 FPS. We set the batch size at $64$ and the learning rate to $5e^{-4}$. 
The thresholding procedure is executed every 300 epochs with thresholds ranging from $(1e^{-4}, 1e^{-3})$.  
From this extreme low-data limit, we manually perform data augmentation by reusing the latter part of the video once in the training loop to increase the number of available samples.

\subsection{Baseline experiment on pendulum}
\label{app:pen_expr_detail}
\paragraph{Autoregressive training.}
The raw pendulum data are collected from a 14-second GoPro recording. The raw data are present difficulties during training because of their high-dimensionality ($\1080 \times 960$), so we follow the same preprocessing procedure as in~\citep{mars2024bayesian} to obtain a set of training data with 390 samples, width 24 and height 27. For most of the models, we apply autoregressive training to help the model achieve better long-term prediction capabilities. 
From the initial input $\{\mX_1, \mX_2, ..., \mX_L\}$ with lag $L$, the model autoregressively predicts the next frame $\hat\mX_{L+1}$ and use it as a new input $\{\mX_2, \mX_3, ..., \hat\mX_{L+1}\}$. 
This step will be repeated $L$ times to obtain $\{\hat\mX_{L+1}, \hat\mX_{L+2}, ..., \hat\mX_{2L}\}$. We treat this as the prediction and optimize the loss from this quantity. 
In the following baseline models, we uniformly set $L=20$. 

\label{app:baseline_setting}
\paragraph{ResNet.} We use the residual neural network (ResNet)~\citep{he2016deep} as a standard baseline. 
We set the input sequence length to 20, and we predict the next frames autoregressively. 
For ResNet, the first convolutional layer has 64 channels with kernel size 3, stride 1 and padding 1.  
Then, we repeat the residual block three times with two convolutional layers. 
We use ReLU as the activation function.
After the residual blocks, the output is generated via a convolutional layer with kernel size 1, stride 1, and padding 0. 
We set the batch size to 8, and we use AdamW optimizer with learning rate $1e^{-3}$, weight decay $1e^{-2}$ for the training of 500 epochs. 
\paragraph{SimVP.} SimVP~\citep{gao2022simvp} is the recent state-of-the-art method for video prediction. 
This method utilizes ConvNormReLU blocks with a spatio-temporal features translator (i.e. CNN). 
The ConvNormReLU block has two convolutional layers with kernel size 3, stride 1, and padding 1. After 2D batch normalization and ReLU activation, the final forward pass includes a skip connection unit before output. 
The encoder first performs a 2D convolution with 2D batch normalization and ReLU activation. Then, three ConvNormReLU blocks will complete the input sequence encoding process. 
The translator in our implementation is a simple CNN which contains two convolutional layers. 
The decoder has a similar structure to the encoder by reversing its structure.  
We similarly set the batch size to 8 with AdamW optimizer for 500 epochs. 

\paragraph{ConvLSTM.} Convolutional Long Short-Term Memory~\citep{shi2015convolutional} is a classical baseline for the prediction of video sequence and scientific data (e.g. weather, radar echo, and air quality). 
The ConvLSTM utilizes features after convolution and performs LSTM modeling on hidden states. 
The ConvLSTM model has two ConvLSTM cells that have an input 2D convolutional layer with kernel size 3 and padding 1 before the LSTM forward pass. 
The decoder is a simple 2D convolution with kernel size 1, and zero padding. 
We similarly set the batch size to 8 with AdamW optimizer for 500 epochs. 

\paragraph{PredRNN.} PredRNN~\citep{wang2017predrnn} is a recent spatiotemporal modeling technique that builds on the idea of ConvLSTM. We follow the same network architecture setting as in ConvLSTM and similarly set the batch size to 8 with AdamW optimizer for 500 epochs. 
\paragraph{SINDy-SHRED.} We select and fix 100 pixels as sensor measurements from the entire $648$ dimensional space. We remove non-informative sensors, defined as remaining constant through the entire video. We set the lag to 60. 
For the setting of network architecture in SINDy-SHRED, we follow the same settings as in the prior experiments but with latent dimension of 1. 
The timestep of forward integration is set to $dt=\frac{1}{300}$ corresponding to frame rate of the video at 30 FPS. We set the batch size at $8$ and the learning rate to $5e^{-4}$. 
The thresholding procedure is executed every 300 epochs with thresholds ranging from $(0.4, 4.0)$.  
 We include 3 stacked GRU layers, and a two-layer ReLU decoder with 16 and 64 neurons. 
We use dropout to avoid overfitting with a dropout rate of 0.1. SINDy-SHRED discovers two candidate models. 

\subsection{Sea-surface temperature data}
\label{app:sst_expr_detail}
For the SST data in SINDy-SHRED, we set the latent dimension to 3 because we observe only minor impacts on the reconstruction accuracy when the latent dimension is $\geq 4$. We include 2 stacked GRU layers and consider the , and a two-layer ReLU decoder with 350 and 400 neurons. 
For the E-SINDy regularization, we set the polynomial order to be 3 and the ensemble number is 10. In the latent hidden-state forward simulation, we use Euler integration with $dt=\frac{1}{520}$, which will generate the prediction of next week via 10 forward integration steps. 
During training, we apply the AdamW optimizer with a learning rate of $1e^{-3}$ and a weight decay of $1e^{-2}$. The batch size is 128 with 1,000 training epochs. 
The thresholds for E-SINDy range uniformly from 0.1 to 1.0, and the thresholding procedure will be executed every 100 epochs. We use dropout to avoid overfitting with a dropout rate of 0.1. The training time is within 30 minutes from a single NVIDIA GeForce RTX 2080 Ti. 

\begin{figure}[t]
\centering
\includegraphics[width=0.5\linewidth]{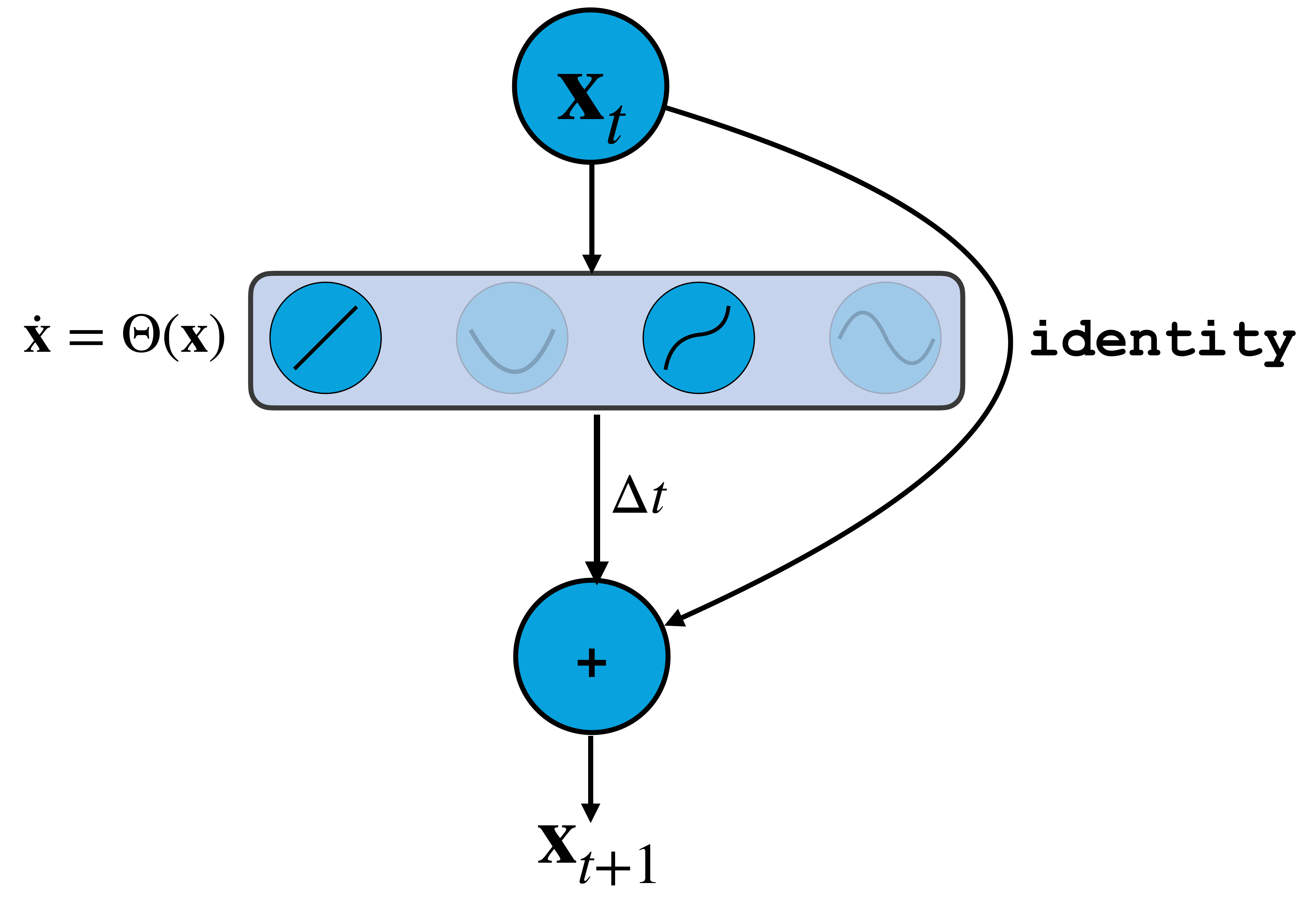}
\caption{Diagram of the RNN form
of SINDy.}\label{fig:sindy_unit}
\end{figure} 
\subsection{3D atmospheric ozone concentration}
\label{app:ozone_expr_detail}
For the ozone data, we set the lag parameter is set to 100. 
Thus, for each input-output pair, the input consists of the 62.5 day measurements of the selected sensors, while the output is the measurement across the entire 3D domain. 
In SINDy-SHRED, we follow the same network architecture as in the SST experiment. We set $dt=0.025$, and the thresholds for E-SINDy range uniformly from 0.015 to 0.15. 
The thresholding procedure will be executed every 300 epochs, and we apply AdamW optimizer with learning rate $1e^{-3}$.

\subsection{Isotropic turbulent flow}
\label{app:iso_expr_detail}
In the isotropic flow experiment, due to its complex nature, we select the latent dimension to be 16 with lag 100. 
The forward integration time step is set to $dt=0.0002$. We set the batch size to $128$ and the learning rate to $5e^{-4}$. 
The thresholding procedure is executed every 100 epochs with thresholds ranging from $(0.15, 1.5)$. The total training epoch is 300.

\section{Experiment on the 2D Kolmogorov flow}
\label{app:kolmogorov_flow}
The 2D Kolmogorov flow data is a chaotic turbulent flow generated from the pseudospectral Kolmogorov flow solver~\citep{canuto2007spectral}. The solver numerically solves the divergence-free Navier-Stokes equation: 
\begin{align}
    \begin{cases}
        \nabla\cdot \vu=0\\
        \partial_t \vu + \ve\nabla\vu = -\nabla \vp + \vv\Delta \vu+f
    \end{cases},
\end{align}
where $\vu$ stands for the velocity field, $\vp$ stands for the pressure, and $f$ describes an external forcing term. 
Setting the Reynolds number to 30, the spatial field has resolution $80\times 80$. We simulate the system forward for 180 seconds with $6,000$ available frames. We standardize the data within the range of $(0,1)$ and randomly fix 10 sensors from the 6,400 available spatial locations (0.16\%). 
The lag parameter is set to 360. 

For the setting of SINDy-SHRED, we slightly change the neural network setting because the output domain is 2D. 
Therefore, after the GRU unit, we use two shallow decoders to predict the output of the 2D field. The two decoders are two-layer ReLU networks with 350 and 400 neurons. 
We set the latent dimension to 3. 
The time step for forward integration is set to $dt=0.003$ which corresponds to the FPS during data generation. 
We set the batch size to 256 and the learning rate to $5e^{-4}$ using the Lion optimizer~\citep{chen2024symbolic}. The thresholding procedure is executed every 100 epoch with the total number of training epochs as $200$. The thresholds range from $(0.4, 4)$. 

As a chaotic system, the latent space of the Kolmogorov flow is much more complex than all the prior examples we considered. Thus, we further apply seasonal-trend decomposition from the original latent space. We define the representation of the latent hidden state space after decomposition as $(z_1, z_2, z_3, z_4, z_5, z_6)$, where $(z_{2i},z_{2i+1})$ is the seasonal trend pair of the original latent space.  

\begin{align}
\label{eqn:kol_equation}
\begin{cases}
    \dot{z}_1 & = -0.007 z_3 + 0.009 z_5, \\
\dot{z}_2 & = -0.207z_4, \\
\dot{z}_3 & = -0.011z_1-0.008z_5, \\
\dot{z}_4 & = 0.103z_2, \\
\dot{z}_5 & = -0.012z_1+0.006z_3. \\
\dot{z}_6 & = 0.151z_1z_2. 
\end{cases}
\end{align}

\begin{figure}[H]
    \centering
    \includegraphics[width=\textwidth]{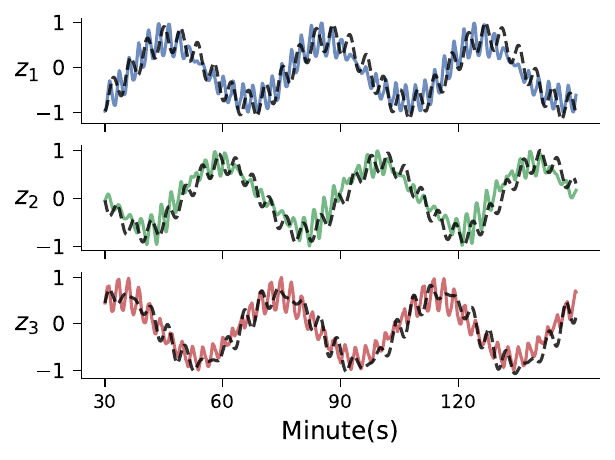}
    \caption{Extrapolation of latent representation in SINDy-SHRED from the discovered dynamical system for the 2D Kolmogorov flow data. Colored: true latent representation. Black: SINDy extrapolation.}
    \label{fig:kol_latent_space}
\end{figure}

In Eqn.~\ref{eqn:kol_equation}, we find that $z_1, z_3, z_5$ are essentially a linear system. 
$z_2, z_4, z_6$ capture higher-order effects that are difficult to model without signal separation.
We generate the trajectory from the initial condition at time point 0 and perform forward integration in Fig.~\ref{fig:kol_latent_space}. 
As we increase the Reynolds number, the discovery fails to produce robust predictions. 

This representation also demonstrates nice predictions for future frames. In Fig.~\ref{fig:kol_reconstruction}, the future prediction has an averaged MSE error of $0.035$ for all available data samples. The sensor-level prediction in Fig.~\ref{fig:kol_sensor_predictions} further demonstrates the details of the reconstruction. 
\begin{figure}[H]
    \centering
    \includegraphics[width=0.8\textwidth]{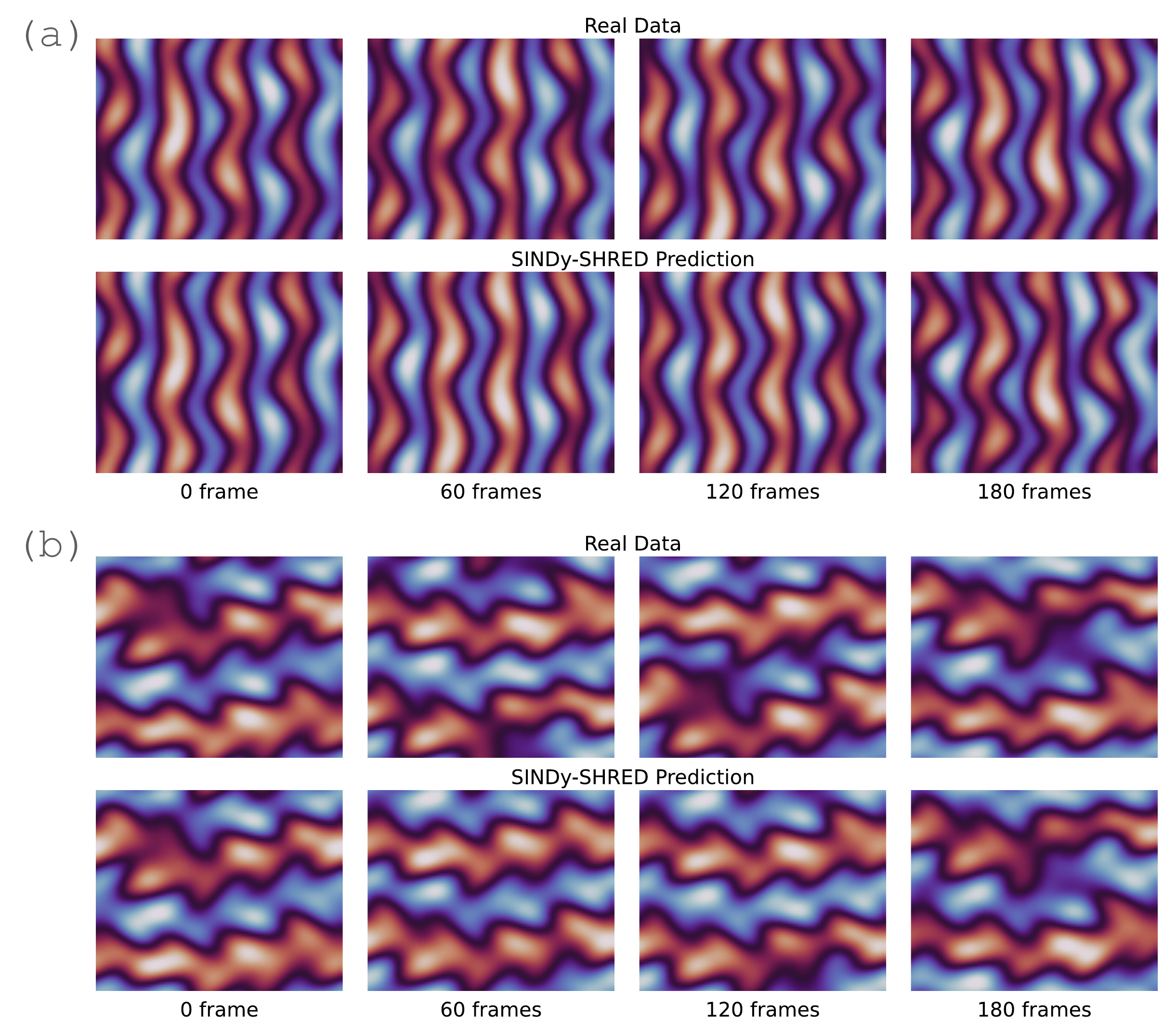}
    \caption{Long-term predictiion via SINDy-SHRED for 2D Kolmogorov flow data. }
    \label{fig:kol_reconstruction}
\end{figure}

\section{Sensor level plots of experiments}

\subsection{Sea surface temperature}

\textbf{3D visualization of SINDy-SHRED} 
\begin{figure}[H]
    \centering
    \includegraphics[width=0.8\textwidth]{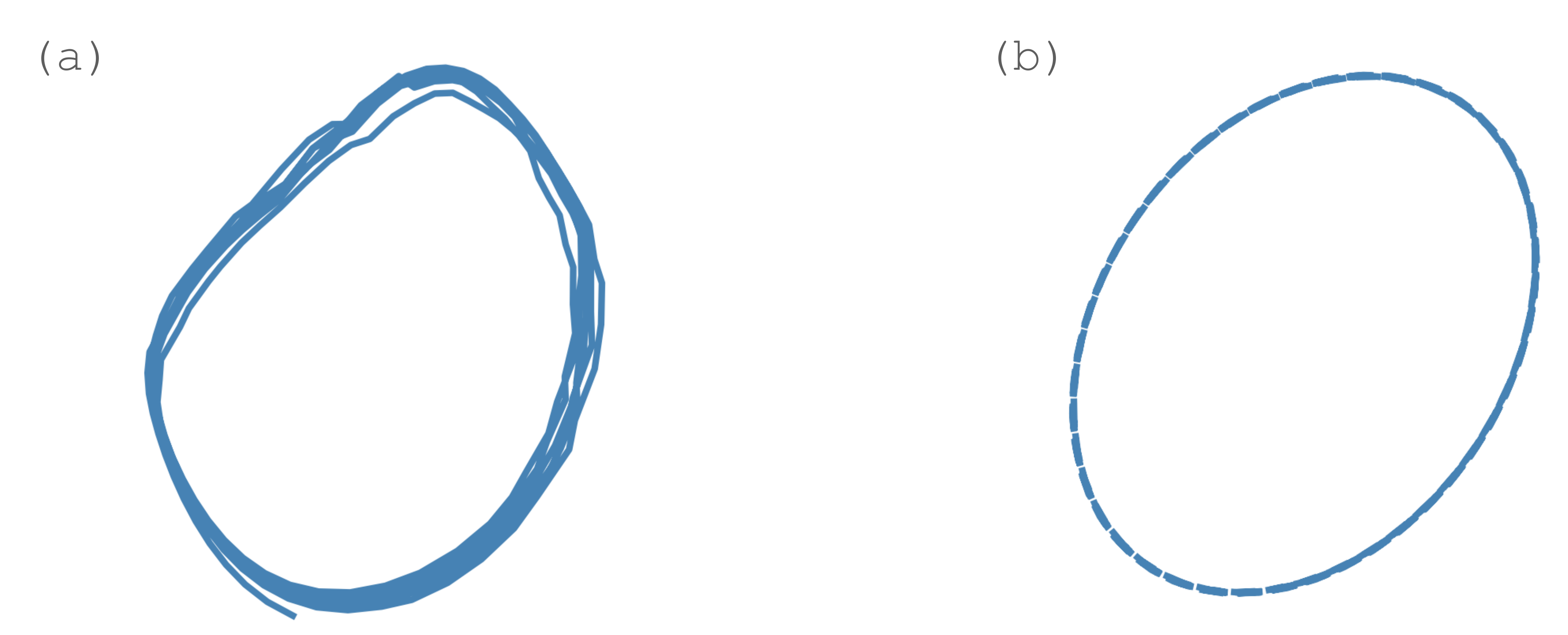}
    \caption{3D reconstruction of the original latent space and SINDy simulated latent space. }
    \label{fig:3D_sst}
\end{figure}

\textbf{Long-term extrapolation of SINDy-SHRED.} 
\begin{figure}[H]
    \centering
    \includegraphics[width=0.8\textwidth]{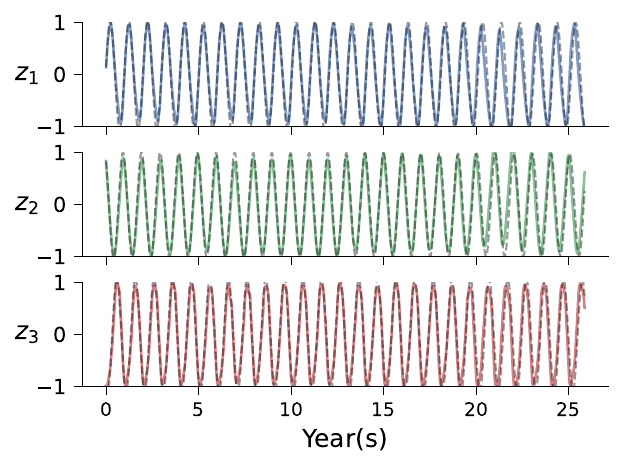}
    \caption{Extrapolation of latent representation in SINDy-SHRED from the discovered dynamical system for SST over the entire 27 years. Colored: true latent representation. Grey: SINDy extrapolation.}
    \label{fig:sst_latent_space_long_term}
\end{figure}

\textbf{Sensor-level prediction on the SST dataset.} 

\begin{figure}[H]
    \centering
    \includegraphics[width=\textwidth]{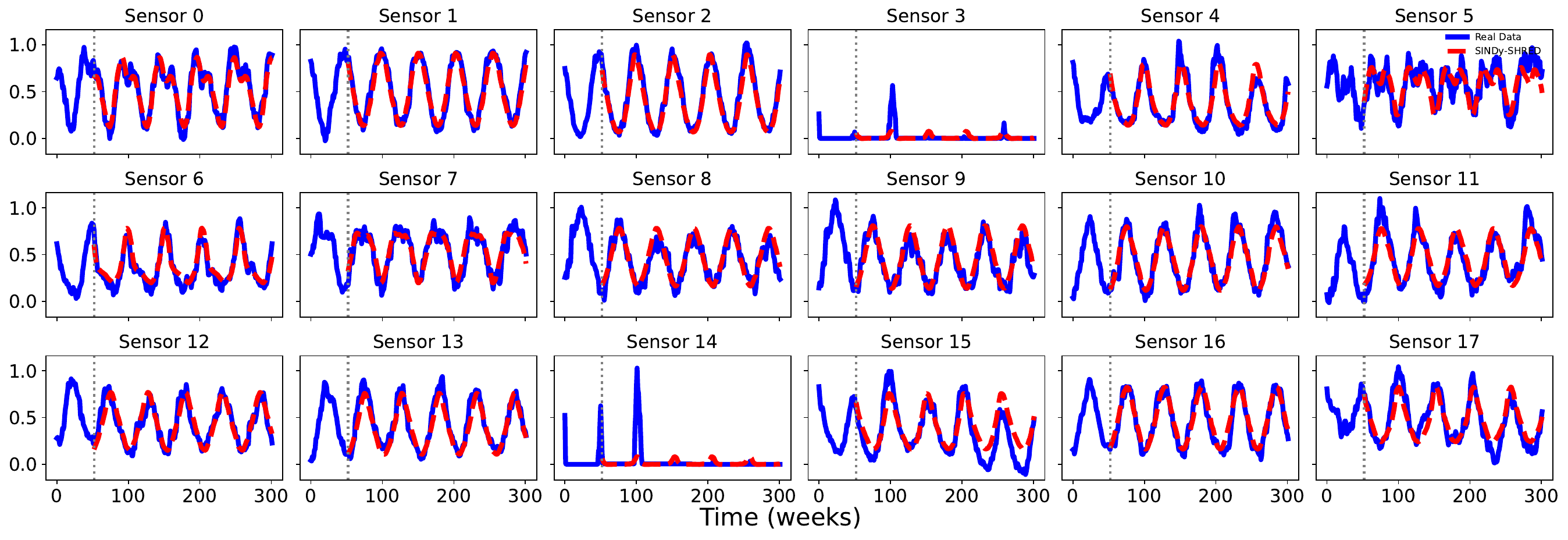}
    \caption{Extrapolation of SINDy-SHRED for sensor-level predictions on the SST data. We randomly picked 18 sensors from spatial locations that are not in the sparse sensor training. The extrapolation shows the SINDy-SHRED prediction for the following 300 weeks. }
    \label{fig:sst_sensors}
\end{figure}

\subsection{Ozone data}

\textbf{Convergence behavior of SINDy-SHRED on the Ozone dataset.}

\begin{figure}[H]
    \centering
    \includegraphics[width=0.8\linewidth]{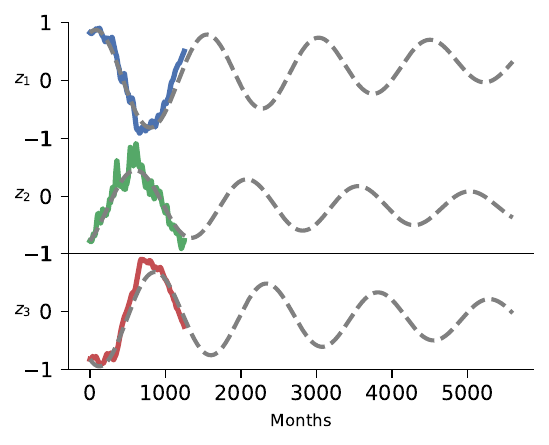}
    \caption{Long term extrapolation of Ozone data. The latent SINDy model presents a convergence behavior towards the mean-field solution. }
    \label{fig:ozone_latent_long_term}
\end{figure}

\textbf{Sensor-level prediction on the Ozone dataset.} 

\begin{figure}[H]
    \centering
    \includegraphics[width=\textwidth]{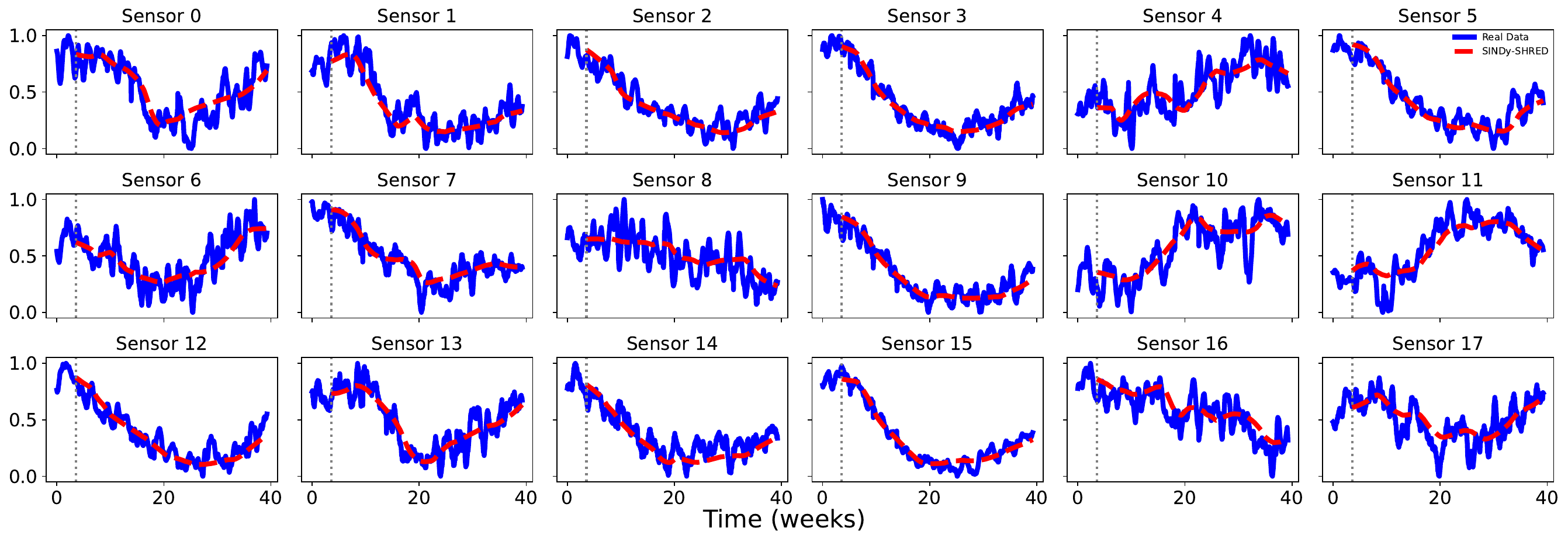}
    \caption{Extrapolation of SINDy-SHRED for sensor-level predictions on the Ozone data. We randomly picked 18 sensors from spatial locations that are not in the sparse sensor training. The extrapolation shows the SINDy-SHRED prediction for the following 40 weeks. }
    \label{fig:ozone_sensors}
\end{figure}

\newpage
\begin{figure}[H]
    \centering
    \includegraphics[width=0.8\textwidth]{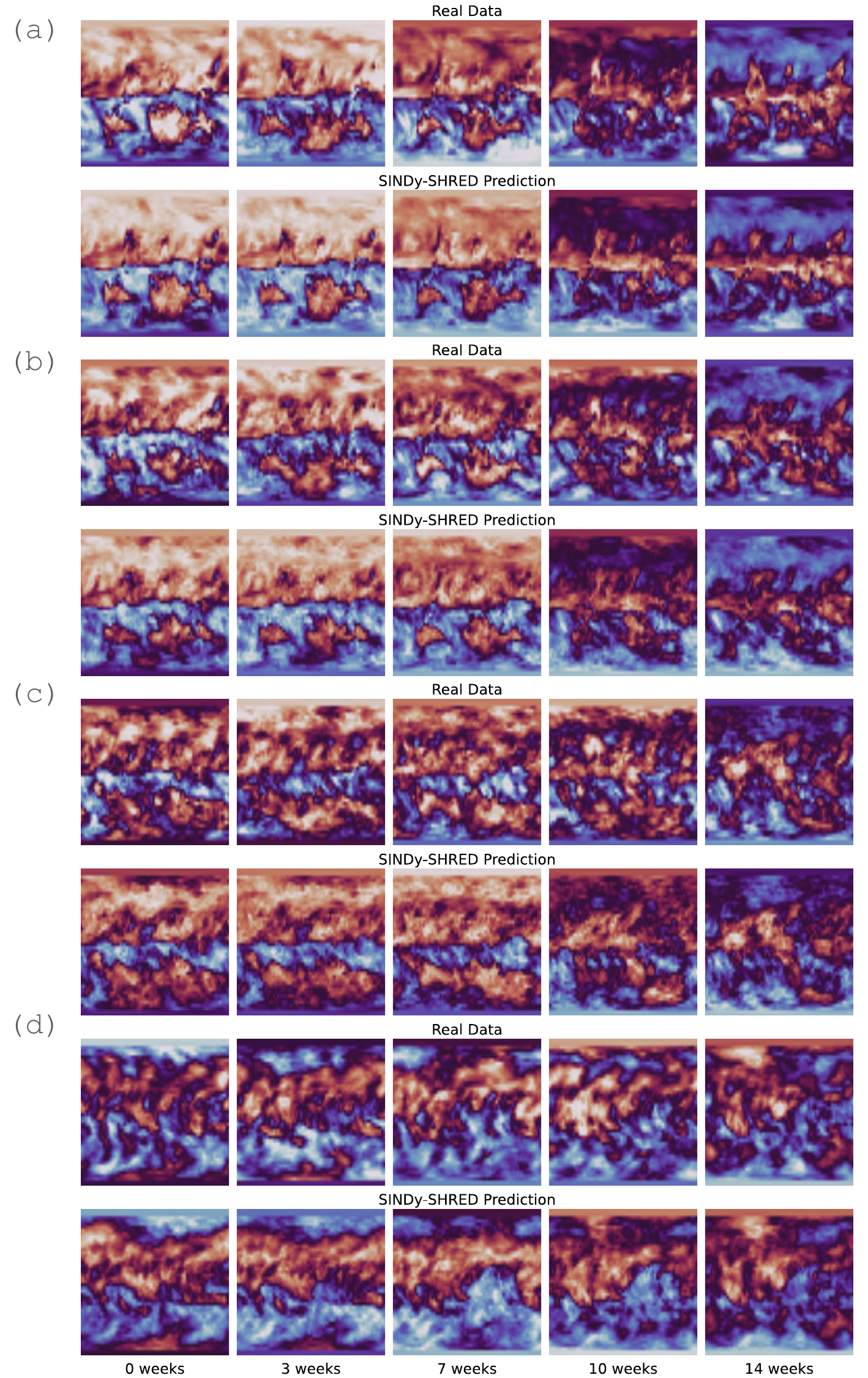}
    \caption{Reconstruction of atmospheric ozone concentration data for different elevation (a) 0 km (b) 4 km (c) 8 km (d) 12 km.   }
    \label{fig:ozone_reconstruction_all_h}
\end{figure}

\subsection{Flow over a cylinder}

\textbf{Latent space of Koopman-SHRED}

\begin{figure}[H]
    \centering
    \includegraphics[width=1.0\textwidth]{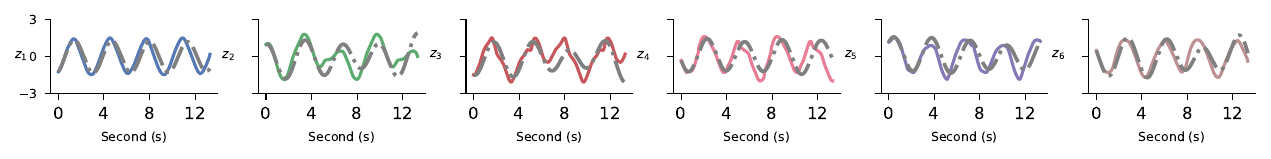}
    \caption{Extrapolation of latent representation in Koopman-SHRED from the discovered dynamical system for flow over a cylinder data. Colored: true latent representation. Grey: SINDy extrapolation. }
    \label{fig:flow_latent_space_koopman}
\end{figure}

\textbf{Sensor-level prediction on the flow over a cylinder dataset.}

\begin{figure}[H]
    \centering
    \includegraphics[width=\textwidth]{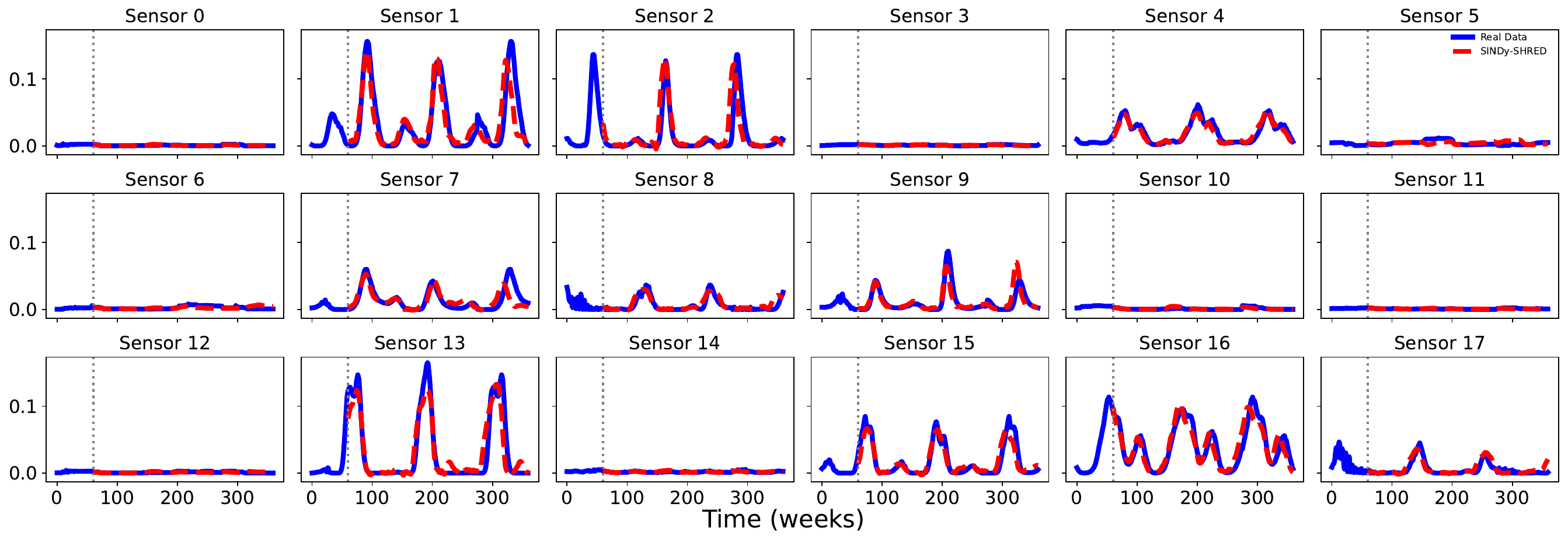}
    \caption{Extrapolation of SINDy-SHRED for sensor-level predictions on the flow over a cylinder data. We randomly picked 18 sensors from spatial locations that are not in the sparse sensor training. The extrapolation shows the SINDy-SHRED prediction for the following 400 frames. }
    \label{fig:flow_sensor_predictions}
\end{figure}

\textbf{Long-term extrapolation on the flow over a cylinder dataset.}

\begin{figure}[H]
    \centering
    \includegraphics[width=\textwidth]{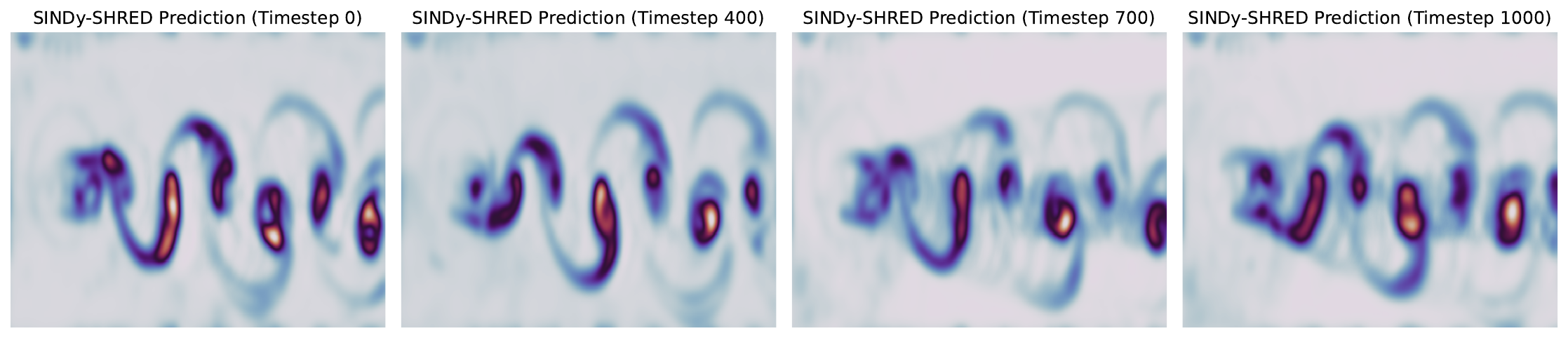}
    \caption{Prediction of the flow over a cylinder data from time step 0 (reconstruction) to 1000 frames. We note this extrapolation is completely out of the dataset. The real data for testing is only available up to 500 frames. }
    \label{fig:flow_prediction_long}
\end{figure}

\subsection{Isotropic turbulent flow}

\label{app:iso_vis}
\begin{figure}
    \centering
    \includegraphics[width=\linewidth]{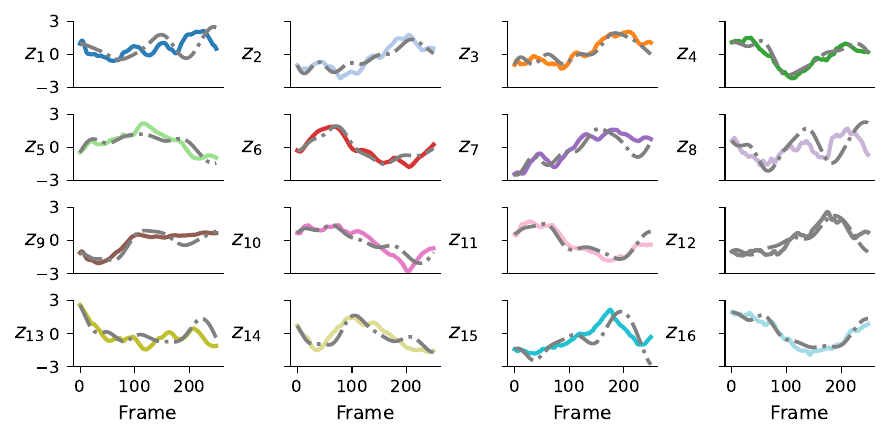}
    \caption{Extrapolation of all 16 latent representation in SINDy-SHRED from the discovered dynamical system for isotropic turbulent flow data for 250 frames. Colored: true latent representation. Grey: SINDy extrapolation.}
    \label{fig:iso_latent_space_all}
\end{figure}

\textbf{Sensor-level prediction on the moving pendulum dataset.} 

\begin{figure}[H]
    \centering
    \includegraphics[width=\textwidth]{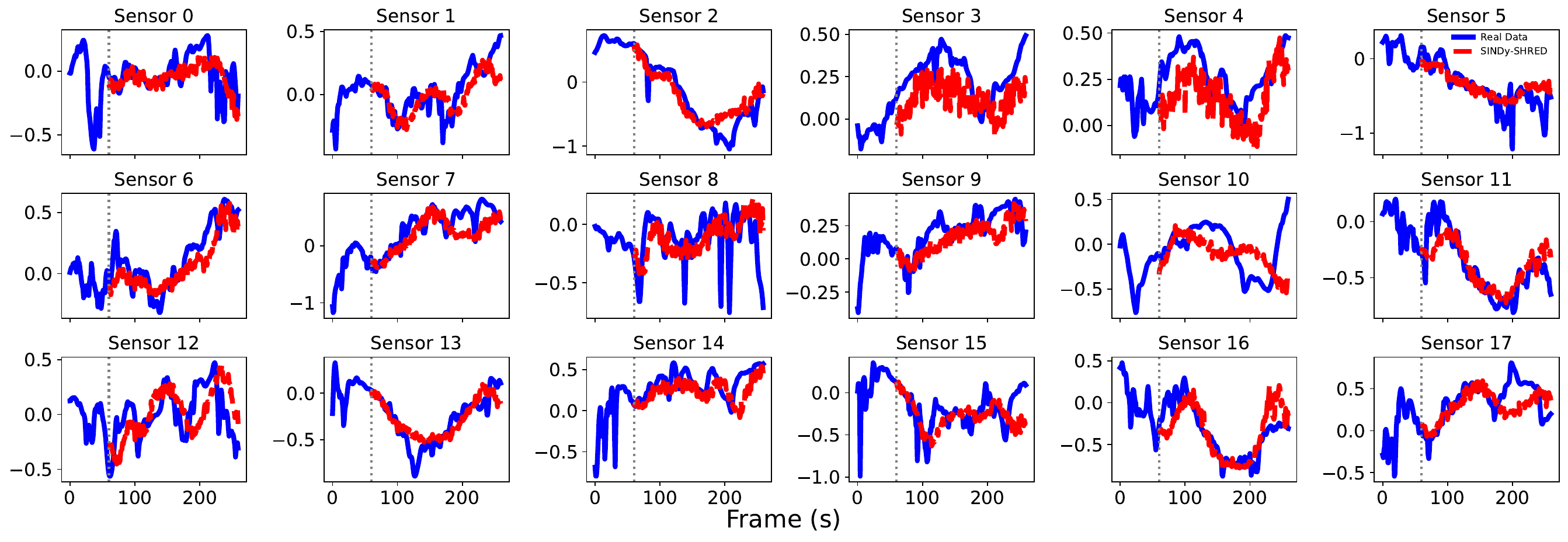}
    \caption{Extrapolation of SINDy-SHRED for sensor-level predictions on the isotropic turbulent flow data. We randomly picked 18 sensors from spatial locations that are not in the sparse sensor training. The extrapolation shows the SINDy-SHRED prediction for the following 250 frames. }
    \label{fig:iso_sensor_predictions}
\end{figure}

\newpage
\subsection{Pendulum}

\textbf{Sensor-level prediction on the moving pendulum dataset.} 

\begin{figure}[H]
    \centering
    \includegraphics[width=\textwidth]{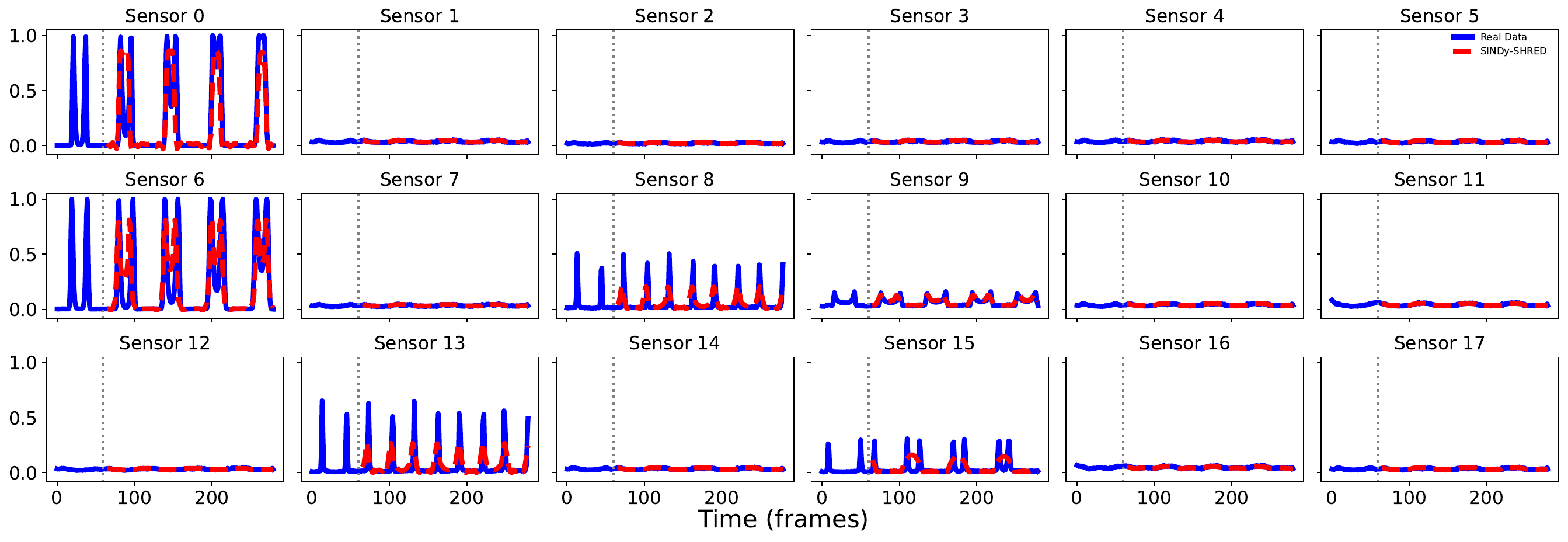}
    \caption{Extrapolation of SINDy-SHRED for sensor-level predictions on the moving pendulum data. We randomly picked 18 sensors from spatial locations that are not in the sparse sensor training. The extrapolation shows the SINDy-SHRED prediction for the following 382 frames. }
    \label{fig:pen_sensor_predictions}
\end{figure}

\subsection{Kolmogorov flow}

\textbf{Sensor-level prediction on the chaotic 2D Kolmogorov flow dataset.} 

\begin{figure}[H]
    \centering
    \includegraphics[width=\textwidth]{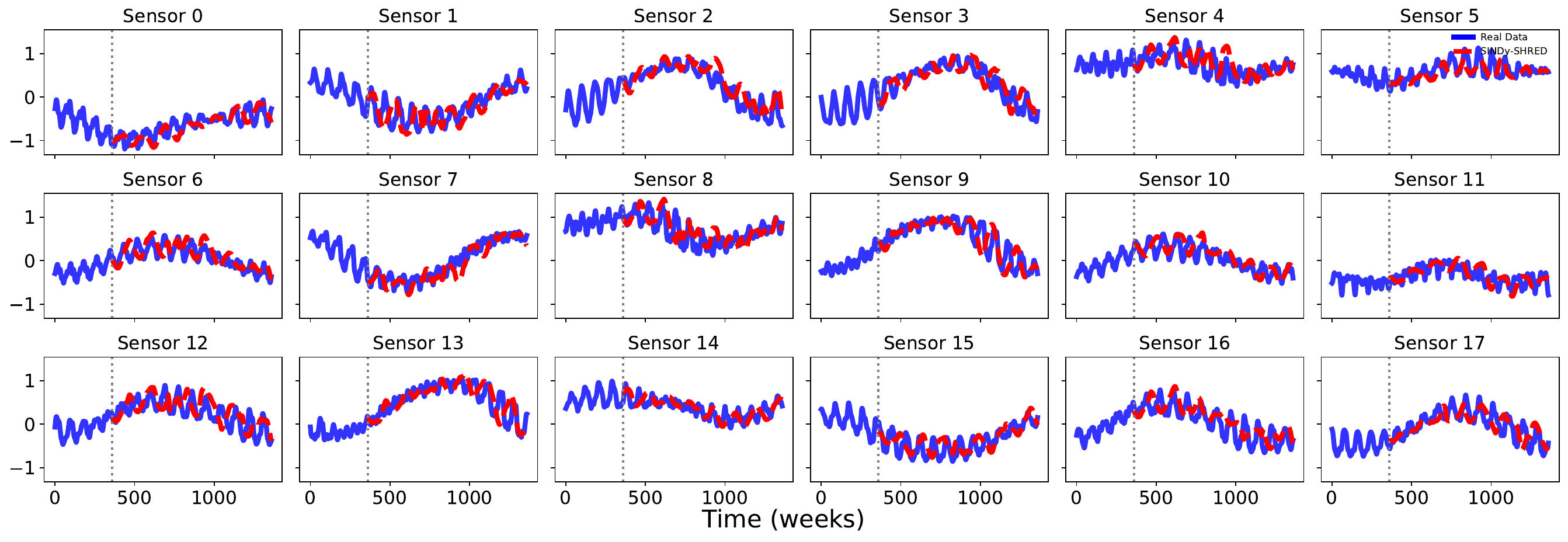}
    \caption{Extrapolation of SINDy-SHRED for sensor-level predictions on the 2D Kolmogorov flow data. We randomly picked 18 sensors from spatial locations that are not in the sparse sensor training. The extrapolation shows the SINDy-SHRED prediction for the following 1500 frames. }
    \label{fig:kol_sensor_predictions}
\end{figure}

\section{Analysis of learned ODEs}

\subsection{Sea-surface temperature}
\label{sec:analytic_sst}
The analytic solution to \ref{eqn:sst_equation}
\begin{equation}
    \mathbf z (t) = c_1 \mathbf v_1 e^{(-0.01+6.24i)t} + c_2 \mathbf{v_2}e^{(-0.01-6.24i)t} + c_3 \mathbf v_3 e^{0.02t},
\end{equation}
where $v_1 = \begin{pmatrix}
     -0.39 + 0.38i  \\
  -0.19 - 0.52i  \\
   0.63
\end{pmatrix}$, $v_2 = \begin{pmatrix}
     -0.39 - 0.38i  \\
  -0.19 + 0.52i  \\
   0.63
\end{pmatrix},$ $v_3 = \begin{pmatrix}
    0.68\\
    0.33\\
    0.65
\end{pmatrix},$ and $c_1, c_2$, and $c_3$ depend on the initial condition.
\subsection{Ozone data}
\label{sec:analytic_ozone}
The complete closed form solution to Eq. \ref{eqn:ozone_equation} is given by:
\begin{align}
    \mathbf z(t) &= T \text{diag}\begin{pmatrix}
        e^{(-0.003 + 0.0079i)t}\\
        e^{(-0.003 + 0.0079i)t}\\
        -0.003
    \end{pmatrix}T^{-1}\mathbf z_0 
    &+ T \left( \text{diag} \begin{pmatrix}
        (-42-111i)e^{(-0.003+0.0079i)t}\\
        (-42+111i)e^{(-0.003-0.0079i)t} \\
        -333 e^{-0.003t}
    \end{pmatrix}\right|_{0}^tT^{-1}\begin{pmatrix}
        -0.002 \\
        0\\
        0.002
    \end{pmatrix},
\end{align}
where $\mathbf z_0$ is the state at $t=0$ and \begin{equation}
    T = \begin{pmatrix}
   0.66 & 0.66 & -0.99\\
  -0.30 - 0.35i & -0.30 + 0.35i & 0.061 \\
  -0.59 + 0.11i & -0.59 - 0.11i & 0.16
    \end{pmatrix}. \notag
\end{equation}

\subsection{Flow over a cylinder}
\label{sec:analytic_flow}
The complete closed form solution to Eq. \ref{eqn:flow_equation_koopman} is given by: 
\begin{equation}
    \mathbf z(t) = T \text{diag}\begin{pmatrix}
        e^{(-0.01+1.52i)t} \\
        e^{(-0.01-1.52i)t} \\
        e^{(0.11+1.05i)t} \\
        e^{(0.11-1.05i)t} \\
        e^{-0.20t} \\ 
        0
    \end{pmatrix} T^{-1} \mathbf z_0
\end{equation}
where $\mathbf z_0$ is the state at $t=0$ and 
\begin{equation}
    T = \begin{pmatrix}
  -0.35 + 0.20i & -0.35 - 0.20i & 0.028 - 0.006i & 0.028 + 0.006i &  0.005 & 0 \\
   0.15 - 0.39i &  0.15 + 0.39i & -0.12 - 0.45i &  -0.12 + 0.45i &  0.68 & 0 \\
  -0.10 + 0.37i & -0.10 - 0.37i &  0.034 + 0.57i &  0.034 + 0.57i & 0.23 &  1 \\
   0.24 + 0.33i &  0.24 - 0.33i & 0.013 - 0.21i  & 0.013 + 0.21i & -0.37 &  0 \\
  -0.13 - 0.38i & -0.13 + 0.38i &  0.034 - 0.27i &  0.034 + 0.27i & -0.55 &  0 \\
   0.44 & 0.44  &  0.58 &  0.58 & 0.21 & 0
    \end{pmatrix}. \notag
\end{equation}

\subsection{Isotropic turbulence flow}
The complete closed form solution to Eq.~\eqref{eqn:iso_equation} is given by: 
\begin{equation}
    \label{eqn:iso_equation_full}
    \mathbf{z}(t) = T \text{diag}\begin{pmatrix}
        e^{(0.47+9.39i)t} \\
        e^{(0.47+9.39i)t} \\
        e^{(0.05+11.90i)t} \\
        e^{(0.05+11.90i)t} \\
        e^{(0.03+13.42i)t} \\
        e^{(0.03+13.42i)t} \\
        e^{(-0.04+3.46i)t} \\
        e^{(-0.04+3.46i)t} \\
        e^{-0.26t} \\
        e^{(-0.27+5.44i)t} \\
        e^{(-0.27+5.44i)t} \\
        e^{(-0.75+8.30i)t} \\
        e^{(-0.75+8.30i)t} \\
        e^{(-1.37+18.72i)t} \\
        e^{(-1.37+18.72i)t} \\
        e^{-3.39t} \\
    \end{pmatrix} T^{-1} \mathbf{z}_0
\end{equation}
where $\mathbf{z}_0$ is the state at $t=0$ and $T$ contains all eigenvectors.

\end{document}